\title{The Complexity of Sparse Tensor PCA\thanks{This project has received funding from the European Research Council (ERC) under the European Union’s Horizon 2020 research and innovation programme (grant agreement No 815464). This research/project is supported by the National Research Foundation, Singapore under its AI Singapore Programme (AISG Award No: AISG-PhD/2021-08-013).}}
\author[1]{Davin Choo\thanks{Part of the work was done while the author was in ETH Z\"urich.}}  
\author[2]{Tommaso d'Orsi}
\affil[1]{National University of Singapore, Singapore}
\affil[2]{ETH Z\"{u}rich, Switzerland}
\date{}
\newtheorem{theorem}{Theorem}
\newtheorem{proposition}[theorem]{Proposition}
\newtheorem{remark}[theorem]{Remark}
\newtheorem{corollary}[theorem]{Corollary}
\newtheorem{definition}[theorem]{Definition}
\newtheorem{conjecture}[theorem]{Conjecture}
\newtheorem{problem}[theorem]{Problem}
\newtheorem{claim}[theorem]{Claim}
\newtheorem{lem}[theorem]{Lemma}
\newtheorem{myremark}[theorem]{Remark}
\newtheorem*{remark*}{Remark}
\newcommand{\tensor}{\otimes}
\newcommand{\tensorpower}[2]{#1^{\tensor #2}}
\newcommand{\E}{\mathbb{E}}
\newcommand{\Var}{\mathbb{V}}
\newcommand{\N}{\mathbb{N}}
\newcommand{\R}{\mathbb{R}}
\newcommand{\bbP}{\mathbb{P}}
\newcommand{\cO}{\mathcal{O}}
\newcommand{\cI}{\mathcal{I}}
\newcommand{\cN}{\mathcal{N}}
\newcommand{\cS}{\mathcal{S}}
\newcommand{\cX}{\mathcal{X}}
\newcommand{\cY}{\mathcal{Y}}
\newcommand{\Paren}[1]{\left(#1\right)}
\newcommand{\Brac}[1]{\left[#1\right]}
\newcommand{\set}[1]{\{#1\}}
\newcommand{\Set}[1]{\left\{#1\right\}}
\newcommand{\Norm}[1]{\left\lVert#1\right\rVert}
\newcommand{\Abs}[1]{\left\lvert#1\right\rvert}
\newcommand{\card}[1]{\lvert#1\rvert}
\newcommand{\Card}[1]{\left\lvert#1\right\rvert}
\renewcommand{\Pr}{\bbP}
\newcommand{\eps}{\epsilon}
\newcommand{\iprod}[1]{\langle#1\rangle}
\newcommand{\Iprod}[1]{\left\langle#1\right\rangle}
\newcommand*{\dyad}[1]{#1#1{}^{\mkern-1.5mu\mathsf{T}}}
\DeclareMathOperator{\argmax}{argmax}
\DeclareMathOperator{\supp}{supp}
\providecommand{\nulld}{\nu}
\providecommand{\planted}{\mu}
\providecommand{\Ep}{\E_{\planted}}
\providecommand{\En}{\E_{\nulld}}
\providecommand{\hermitepoly}[2]{H_{#2}\Paren{#1}}
\providecommand{\lowdegpolys}[1]{\R[\pmb Y]_{\leq#1}}
\begin{document}

\maketitle

\begin{abstract}
We study the problem of sparse tensor principal component analysis: given a tensor $\pmb Y = \pmb W + \lambda \tensorpower{x}{p}$ with $\pmb W \in \otimes^p\R^n$ having i.i.d.\ Gaussian entries, the goal is to recover the $k$-sparse unit vector $x \in \R^n$.
The model captures both sparse PCA (in its Wigner form) and tensor PCA.

For the highly sparse regime of $k \leq \sqrt{n}$, we present a family of algorithms that smoothly interpolates between a simple polynomial-time algorithm and the exponential-time exhaustive search algorithm. 
For any $1 \leq t \leq k$, our algorithms recovers the sparse vector for signal-to-noise ratio $\lambda \geq \widetilde{\cO} (\sqrt{t} \cdot (k/t)^{p/2})$ in time $\widetilde{\cO}(n^{p+t})$, capturing  the state-of-the-art guarantees for the matrix settings (in both the polynomial-time and sub-exponential time regimes).

Our results naturally extend to the case of $r$ distinct $k$-sparse signals with disjoint supports, with guarantees that are independent of the number of spikes.
Even in the restricted case of sparse PCA, known algorithms only recover the sparse vectors for $\lambda \geq \widetilde{\cO}(k \cdot r)$ while our algorithms require $\lambda \geq \widetilde{\cO}(k)$.

Finally, by analyzing the low-degree likelihood ratio, we complement these algorithmic results with rigorous evidence illustrating the trade-offs between signal-to-noise ratio and running time.
This lower bound captures the known lower bounds for both sparse  PCA and tensor PCA.
In this general model, we observe a more intricate three-way trade-off between the number of samples $n$, the sparsity $k$, and the tensor power $p$.

\end{abstract}

\section{Introduction}
\label{sec:introduction}

Sparse tensor principal component analysis is a statistical primitive generalizing both sparse PCA\footnote{Often in the literature, the terms sparse PCA and spiked covariance model refer to the sparse spiked \emph{Wishart} model. However, here we  consider the sparse spiked \emph{Wigner} matrix model.} and tensor PCA\footnote{Tensor PCA is also known as the spiked \emph{Wigner} tensor model, or simply the spiked tensor model.}.
We are given multi-linear measurements in the form of a tensor
\begin{equation}\label{eq:generic-sparse-tensor-pca}\tag{SSTM}
	\pmb Y = \pmb W + \lambda \tensorpower{x}{p} \in \otimes^p\R^n
\end{equation}
for a Gaussian noise tensor $\pmb W \in \otimes^p\R^n$ containing i.i.d.\ $N(0,1)$ entries\footnote{Throughout the paper, we will write random variables in boldface.} and signal-to-noise ratio $\lambda > 0$.
Our goal is to estimate the ``structured'' unit vector $x \in \R^n$.
The structure we enforce on $x$ is sparsity: $\Card{\supp(x)}\leq k$.
The model can be extended to include multiple spikes in a natural way: $\pmb Y = \pmb W + \sum_{q=1}^r \lambda_q \tensorpower{x_{(q)}}{p}$, and even general order-$p$ tensors:
$\pmb Y = \pmb W + \sum_{q=1}^r \lambda_q \mathcal{X}_{(q)}$
for $\mathcal{X}_{(q)} = x_{(q,1)}\otimes\cdots\otimes x_{(q,p)}\in \otimes^p\R^n$.
In this introduction, we focus on the simplest single spike setting of \ref{eq:generic-sparse-tensor-pca}.

It is easy to see that sparse PCA corresponds to the setting with tensor order $p=2$.
On the other hand, tensor PCA is captured by effectively removing the sparsity constraint: $\Card{\supp(x)} \leq n$.
In recent years, two parallel lines of work focused respectively on sparse PCA \cite{johnstone2009consistency,amini2008high,berthet2013complexity,deshpande2016sparse,hopkins2017power,ding2019subexponential, DBLP:conf/colt/HoltzmanSV20, dOrsi2020} and tensor PCA \cite{richard2014statistical,hopkins2015tensor,ma2016polynomial,hopkins2017power,kunisky2019notes,arous2019landscape}, however no result captures both settings.
The appeal of the \emph{sparse spiked tensor model} (henceforth \ref{eq:generic-sparse-tensor-pca}) is that it allows one to study the computational and statistical aspects of these other fundamental statistical primitives in a unified framework, understanding the computational phenomena at play from a more general perspective.

In this work, we investigate  \ref{eq:generic-sparse-tensor-pca} from both algorithmic and computational hardness perspectives.
Our algorithm improves over known tensor algorithms whenever the signal vector is highly sparse.
We also present a lower bound against low-degree polynomials which extends the known lower bounds for both sparse PCA and tensor PCA, leading to a more intricate understanding of how all three parameters ($n$, $k$ and $p$) interact.

\subsection{Related work}

Disregarding computational efficiency, it is easy to see that optimal statistical guarantees can be achieved with a simple exhaustive search (corresponding to the maximum likelihood estimator): find a $k$-sparse unit vector maximizing $\iprod{\pmb Y, x^{\otimes p}}$.
This algorithm returns a $k$-sparse unit vector $\widehat{x}$ achieving constant squared correlation\footnote{One could also aim to find a unit vector with correlation approaching one or, in the restricted setting of $x \in \set{0,\pm 1/\sqrt{k}}$, aim to recover the support of $x$. At the coarseness of our discussion here, these goals could be considered mostly equivalent.} with the signal $x$ as soon as $\lambda \gtrsim \sqrt{k \cdot \log (np/k)}$.
That is, whenever $\lambda \gtrsim \max_{\Norm{x}=1, \Norm{x}_0=k} \iprod{\pmb W, \tensorpower{x}{p}}$.
Unfortunately, this approach runs in time exponential in $k$ and takes super-polynomial time when $p \lesssim k$.\footnote{Note that the problem input is of size $n^p$. So when $p \gtrsim k$, exhaustive search takes $n^{\cO(p)}$ time which is polynomial in $n^p$. Thus, the interesting parameter regimes occur when $p \lesssim k$.}
\emph{As such, we assume $p \leq k$ from now on.} 

Taking into account computational aspects, the picture changes.
A good starting point to draw intuition for \ref{eq:generic-sparse-tensor-pca} is the literature on sparse PCA and tensor PCA.
We briefly outline some known results here.
To simplify the discussion, we hide absolute constant multiplicative factors using $\cO(\cdot)$, $\Omega(\cdot)$, $\lesssim$, and $\gtrsim$, and hide multiplicative factors logarithmic in $n$ using $\widetilde{\cO}(\cdot)$.

\subsubsection{Sparse PCA (Wigner noise)}
Sparse PCA with Wigner noise exhibits a sharp phase transition in the top eigenvalue of $\pmb Y$ for $\lambda \geq \sqrt{n}$ \cite{feral2007largest}.
In this strong signal regime, the top eigenvector\footnote{By ``top eigenvector'' or ``leading eigenvector'', we mean the eigenvector corresponding to ``largest (in absolute value) eigenvalue''.} 
 $v$ of $\pmb Y$ correlates\footnote{More precisely, the vector consisting of the $k$ largest (in absolute value) entries of $v$.} with $x$ with high probability, thus the following spectral method achieves the same guarantees as the exhaustive search suggested above: compute a leading eigenvector of $\pmb Y$ and restrict it to the top $k$ largest entries in absolute value.
Conversely, when $\lambda < \sqrt{n}$, the top eigenvector of $\pmb Y$ does not correlate with the signal $x$.
In this weak signal regime, \cite{johnstone2009consistency} proposed a simple algorithm known as diagonal thresholding: compute the top eigenvector of the principal submatrix defined by the $k$ largest diagonal entries of $\pmb Y$.
This algorithm recovers the sparse direction when $\lambda \gtrsim \widetilde{\cO}(k)$, thus requiring almost an additional $\sqrt{k}$ factor when compared to inefficient algorithms.
More refined polynomial-time algorithms (low-degree polynomials \cite{dOrsi2020}, covariance thresholding \cite{deshpande2016sparse} and the basic SDP relaxation \cite{DBLP:journals/siamrev/dAspremontGJL07, dOrsi2020}) only improve over diagonal thresholding by a logarithmic factor in the regime $n^{1-o(1)} \lesssim k^2 \lesssim n$.
Interestingly, multiple results suggest that this \emph{information-computation gap} is inherent to the sparse PCA problem \cite{berthet2013complexity,berthet2013optimal, ding2019subexponential,dOrsi2020}.
Subexponential time algorithms and lower bounds have also been shown.
For instance, \cite{ding2019subexponential, DBLP:conf/colt/HoltzmanSV20} presented smooth trade-offs between signal strength and running time.\footnote{
Both works studied the single spike matrix setting.
\cite{DBLP:conf/colt/HoltzmanSV20} only considers the Wishart noise model and thus its guarantees cannot be compared to ours.
\cite{ding2019subexponential} studied both the Wishart and Wigner noise models.
In the Wishart noise model setting, both \cite{DBLP:conf/colt/HoltzmanSV20} and \cite{ding2019subexponential} observe the same tradeoff between running time and signal-to-noise ratio.
In the Wigner noise model setting, our algorithm and the algorithm of \cite{ding2019subexponential} offer the same smooth-trade off between running time and signal strength, up to universal constants.
}

\subsubsection{Tensor PCA}
In tensor settings, computing $\max_{\Norm{x}=1}\iprod{\pmb Y, \tensorpower{x}{p}}$ is NP-hard already for $p=3$ \cite{hillar2013most}.
For even tensor powers $p$, one can unfold the tensor $\pmb Y$ into a $n^{p/2}$-by-$n^{p/2}$ matrix and solve for the top eigenvector \cite{richard2014statistical}.
However, this approach is sub-optimal for odd tensor powers.
For general tensor powers $p$, a successful strategy to tackle tensor PCA has been the use of semidefinite programming \cite{hopkins2015tensor, bhattiprolu2016sum, hopkins2019robust}.
Spectral algorithms inspired by the insight of these convex relaxations have also been successfully applied to the problem \cite{schramm2017fast}.
These methods succeed in recovering the single-spike $x$ when $\lambda \gtrsim \widetilde{\cO}\Paren{n^{p/4}}$, thus exhibiting a large gap when compared to exhaustive search algorithms.
Matching lower bounds have been shown for constant degrees in the Sum-of-Squares hierarchy \cite{bhattiprolu2016sum,hopkins2017power} and through average case reductions \cite{DBLP:conf/colt/BrennanB20}.

\subsubsection{Sparsity-exploiting algorithms and tensor algorithms}
It is natural to ask how do the characteristics of sparse PCA and tensor PCA extend to the more general setting of \ref{eq:generic-sparse-tensor-pca}.
In particular, there are two main observations to be made.

The first observation concerns the sharp computational transition that we see for $k\lesssim \sqrt{n}$ in sparse PCA.
In these highly sparse settings, the top eigenvector of $\pmb Y$ does not correlate with the signal $x$ and so algorithms primarily based on spectral methods fail to recover it.
Indeed, the best known guarantees are achieved through algorithms that crucially exploit the sparsity of the hidden signal.
These algorithms require the signal strength to satisfy $\lambda \geq \widetilde{O}( \sqrt{k})$, with only logarithmic dependency on the ambient dimension.
To exemplify this to an extreme, notice how the following algorithm can recover the support of $\dyad{x}$ with the same guarantees as diagonal thresholding, essentially disregarding the matrix structure of the data: zero all but the $k^2$ largest (in absolute value) entries of $\pmb Y$.
A natural question to ask is whether a similar phenomenon may happen for higher order tensors.
In the highly sparse settings where $k\lesssim \sqrt{n}$, \textit{can we obtain better algorithms exploiting the sparsity of the hidden vector?}
Recently, a partial answer appeared in \cite{DBLP:journals/corr/abs-2005-10743} with a polynomial time algorithm recovering the hidden signal for $\lambda \geq \widetilde{O}(p\cdot k^{p/2})$, albeit with suboptimal dependency on the tensor order $p$.

The second observation concerns the computational-statistical gap in the spiked tensor model.
As $p$ grows, the gap between efficient algorithms and exhaustive search widens with the polynomial time algorithms requiring signal strength $\lambda \geq \widetilde{O}\Paren{n^{p/4}}$ while exhaustive search succeeds when $\lambda \geq \widetilde{O}(\sqrt{n})$ \cite{richard2014statistical}.
The question here is: \emph{how strong is the dependency on $p$ for efficient algorithms in sparse signal settings?}

In this work, we investigate these questions in the high order tensors regime $p \in \omega(1)$. We present a family of algorithms with a smooth trade-off between running time and signal-to-noise ratio.
Even restricting to polynomial-time settings, our algorithms improve over previous results. Furthermore, through the lens of low-degree polynomials, we provide rigorous evidence of an \textit{exponential gap} in the tensor order $p$ between algorithms and lower bounds.

\begin{remark*}
The planted sparse densest sub-hypergraph model \cite{corinzia2019exact, buhmannrecovery, corinzia2020statistical} is closely related to \ref{eq:generic-sparse-tensor-pca}.
We discuss this model in \cref{sec:PSDM}.
\end{remark*}

\subsection{Results}\label{sec:results}

\subsubsection{Single spike setting}
Consider first the restricted, but representative, case where the planted signal is a $(k,A)$-sparse unit vector with $k$ non-zero entries having magnitudes in the range $\Brac{ \frac{1}{A \sqrt{k}}, \frac{A}{\sqrt{k}} }$ for some constant $A \geq 1$.
We say that the signal is \emph{flat} when $A = 1$ and \emph{approximately flat} when $A \geq 1$.

Our first result is a limited brute force algorithm -- informally, an algorithm that smoothly interpolates between some brute force approach and some ``simple'' polynomial time algorithm -- that \emph{exactly} recovers the signal support of the planted signal\footnote{A similar algorithm was analyzed by \cite{ding2019subexponential} for the special case of $p=2$ and $r=1$.}.

\begin{theorem}[Algorithm for single spike sparse tensor PCA, Informal]
	\label{thm:main-algorithm-informal}
	Let $A \geq 1$ be a constant. Consider the observation tensor
	\[
	\pmb Y = \pmb W + \lambda \tensorpower{x}{p}
	\]
	where the additive noise tensor $\pmb W \in \otimes^p\R^n$ contains i.i.d.\ $N(0,1)$ entries and the signal $x\in \R^n$ is a $(k,A)$-sparse unit vector with signal strength $\lambda > 0$.
	Let $1 \leq t \leq k$ be an integer.
	Suppose that
	\[
	\lambda \gtrsim \sqrt{ t \left( \frac{2 A^2 k}{t} \right)^p \ln n}\,.
	\]
	Then, there exists an algorithm that runs in $\cO(p n^{p + t})$ time and, with probability 0.99, outputs the support of $x$. 
\end{theorem}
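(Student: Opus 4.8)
The plan is to reduce support recovery to a collection of "limited brute force" tests, one for each candidate subset $S \subseteq [n]$ of size $t$, and to argue that the test succeeds exactly on subsets that lie inside $\supp(x)$. Concretely, for each $S$ of size $t$ I would consider the restricted tensor obtained by summing (or maximizing) the entries of $\pmb Y$ over all index $p$-tuples whose coordinates are constrained to pass through $S$ in a prescribed way — e.g.\ look at the contraction $\langle \pmb Y, e_{i_1} \otimes \cdots \otimes e_{i_p}\rangle$ aggregated over tuples touching $S$, or equivalently restrict $\pmb Y$ to the sub-tensor on index set $S \cup \{j\}$ for each $j \notin S$ and compute a simple statistic (a squared-sum or a top-singular-value of an unfolding). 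The signal part contributes, up to the sparsity/flatness normalization, a term of order $\lambda \cdot (t/k)^{p/2}$ from each "correct" coordinate of $S$, while the noise part is a sum/extremum of $\cO(n^{\,\cdot})$ many Gaussian-type quantities and is controlled at scale $\sqrt{t \ln n}$ after a union bound over the $\binom{n}{t} \le n^t$ choices of $S$ and the $n$ choices of the extra coordinate $j$. Setting these two scales against each other is exactly where the hypothesis $\lambda \gtrsim \sqrt{t\,(2A^2 k/t)^p \ln n}$ comes from: it guarantees signal $\gg$ noise with probability $1 - n^{-\Omega(1)}$, hence probability $\ge 0.99$.

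The key steps, in order, are: (i) define the per-subset test statistic $f_S(\pmb Y)$ precisely and show it decomposes as a signal term plus a noise term, with the signal term lower-bounded by $\Omega\big(\lambda (t/(A^2 k))^{p/2}\big)$ when $S \subseteq \supp(x)$ and essentially zero otherwise; (ii) prove a Gaussian concentration/maximal inequality for the noise term — the main quantitative lemma — showing $\max_S |\text{noise}_S| \lesssim \sqrt{t\,(2A^2k/t)^p \ln n}$ with high probability; this is where the $(2A^2k/t)^p$ blow-up enters, reflecting that each contraction aggregates up to $(k/t)^{p}$-ish many noise entries or that the unfolding has that many rows/columns; (iii) combine (i) and (ii) under the stated SNR bound to conclude that, with probability $0.99$, $f_S$ is above a threshold $\tau$ iff $S \subseteq \supp(x)$; (iv) stitch the accepted $t$-subsets together — since every $t$-subset of $\supp(x)$ is accepted and no other subset is, the union of accepted subsets is exactly $\supp(x)$ (using $t \le k$ so at least one valid subset exists, and that $\supp(x)$ is covered by its $t$-subsets); (v) bound the running time: there are $\le n^t$ subsets, and for each we read/aggregate an $\cO(n^{p})$-sized portion of the input (or compute a cheap spectral quantity on an unfolding of a size-$(t{+}1)$-index sub-tensor), giving $\cO(p\, n^{p+t})$ overall.

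The main obstacle I expect is step (ii): getting the noise tail bound with the \emph{right} constant inside the $(2A^2k/t)^p$ factor, uniformly over all $n^t$ subsets and over the auxiliary coordinate, while keeping the failure probability summable. Naively, each $f_S$ is a function of many correlated Gaussian entries of $\pmb W$ (entries are shared across overlapping tuples), so one cannot just treat the $f_S$'s as independent; I would handle this either by expressing $f_S$ as a Lipschitz function of the Gaussian vector $\pmb W$ and invoking Gaussian concentration with a carefully computed Lipschitz constant, or — if $f_S$ is a spectral norm of an unfolding — by an $\varepsilon$-net argument on the relevant low-dimensional sphere combined with standard subgaussian tail bounds, again with a union bound over $S$. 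A secondary subtlety is the approximate-flatness parameter $A$: because coordinates of $x$ may be as small as $1/(A\sqrt{k})$, the signal contribution from a "correct" $S$ degrades by a factor $A^{-p}$, which is precisely why the threshold carries $(2A^2 k/t)^p$ rather than $(2k/t)^p$; I need to make sure the lower bound in step (i) uses the worst-case small coordinates while the noise bound in step (ii) is independent of $x$, so the two estimates are compatible and the stated SNR suffices.
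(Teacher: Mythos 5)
There is a genuine gap at the heart of your plan, in steps (i), (iii) and (iv). Your scheme accepts a $t$-subset $S$ iff its statistic crosses a threshold, and recovers $\supp(x)$ as the union of accepted subsets; this requires separating subsets with $S \subseteq \supp(x)$ from \emph{all} others. But the signal term is not ``essentially zero'' off this event: a subset with $t-1$ coordinates inside $\supp(x)$ and one alien coordinate contributes roughly $\lambda\,((t-1)/(A^2k))^{p/2}$, which differs from the fully-inside value $\lambda\,(t/(A^2k))^{p/2}$ only by a factor $(1-1/t)^{p/2}$, i.e.\ an additive gap of order $\lambda (t/(A^2k))^{p/2}\cdot p/t$. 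At the stated SNR the fully-inside signal is itself only $\approx 2^{p/2}\sqrt{t\ln n}$, so this gap is $\approx 2^{p/2}(p/t)\sqrt{t\ln n}$, far below the per-subset noise fluctuation $\Theta(\sqrt{t\ln n})$ whenever $t \gg p\,2^{p/2}$ (e.g.\ the matrix case $p=2$, $t$ large, which the theorem is meant to cover). Hence many subsets containing one off-support coordinate will be accepted with constant probability, their union contaminates the output, and no choice of threshold fixes this (for $A>1$ it is worse still, since the flatness spread $A^{2p}$ among fully-inside subsets exceeds the gap). A secondary confusion: the $(2A^2k/t)^p$ factor does not come from a noise blow-up — the noise maximum over $U_t$ is only $\lesssim\sqrt{t\ln n}$ — it comes from the signal scaling $\iprod{x,u}^p \approx (t/(A^2k))^{p/2}$ of a $t$-sparse test vector against a $k$-sparse spike.

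The paper's proof is structured precisely to avoid certifying full containment. It takes a \emph{single} global maximizer $\pmb v_* = \argmax_{u\in U_t}\iprod{\pmb Y^{(1)},\tensorpower{u}{p}}$ and proves only that $\Abs{\supp(\pmb v_*)\cap\supp(x)} \geq (1-\eps)t$ (this coarse guarantee is all the stated SNR buys). Exact support recovery then comes from a second, different phase: after a preprocessing step that splits $\pmb Y$ into two independent copies $\pmb Y^{(1)},\pmb Y^{(2)}$ by adding and subtracting fresh Gaussian noise, one computes $\pmb\alpha_\ell = \iprod{\pmb Y^{(2)}, \tensorpower{\pmb v_*}{p-1}\otimes e_\ell}$ for each $\ell\in[n]$ and outputs the top $k$ entries. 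Because $\pmb v_*$ is independent of $\pmb W^{(2)}$, the per-coordinate noise is $O(\sqrt{\log n})$ (not $\sqrt{t\log n}$), while for $\ell\in\supp(x)$ the signal term is $\gtrsim \lambda(1-\eps)^{p-1}k^{-1/2}(t/(A^2k))^{(p-1)/2} \gg \sqrt{\log n}$, so $\pmb\alpha$ acts as an exact indicator of $\supp(x)$ despite the partial overlap of $\pmb v_*$. Your proposal is missing both ingredients — the tolerance of partial overlap and the independent-copy voting step — and without them the argument does not go through at the claimed signal strength.
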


Let's first consider \cref{thm:main-algorithm-informal} in its simplest setting where $A=1$ and $t$ is a fixed constant.
For $k\lesssim \sqrt{n}$, the theorem succeed when $\lambda \geq \widetilde{\cO}(k^{p/2})$, thus improving over the guarantees of known tensor PCA methods which require $\lambda \geq \widetilde{\cO}(n^{p/4})$. 
In addition, since support recovery is \emph{exact}, one can obtain a good estimate\footnote{Recovery is up to a global sign flip since $\iprod{u,v}^p = \iprod{u,-v}^p$ for even tensor powers $p$.} of the planted signal by running any known tensor PCA algorithm on the subtensor corresponding to its support.
Indeed, the resulting subtensor will be of significantly smaller dimension and the requirement needed on the signal strength by single-spike tensor PCA algorithms are weaker than the requirement we impose on $\lambda$ (see \cref{remark:reconstruct} for details).
As a result, our algorithm recovers the guarantees of diagonal thresholding in the matrix ($p=2$) setting. 
Our polynomial-time algorithm also improves over the result of \cite{DBLP:journals/corr/abs-2005-10743}, which required $\lambda \gtrsim \sqrt{p k^p\log n}$, by removing the polynomial dependency of the tensor order $p$ in the signal strength $\lambda$.\footnote{The result of \cite{DBLP:journals/corr/abs-2005-10743} extends to the settings where $\pmb Y = \pmb W + \lambda \mathcal{X}$ for an approximately flat tensor $\mathcal{X} \in \otimes^p \R^n$. Both \cref{thm:main-algorithm-informal} and \cref{thm:main-algorithm-multi-informal} can also be extended to these settings (see \cref{sec:general-tensor-extension}).}

Consider now the limited brute force parameter $t$.
From the introductory exposition, we know that one can obtain a statistically optimal algorithm by performing a brute force search over the space of $k$-sparse flat vectors in $\R^n$.
The \emph{limited brute force} algorithm is a natural extension that takes into account computational constraints by searching over the smaller set of $t$-sparse flat vectors, for $1 \leq t \leq k$, to maximize $\iprod{\pmb Y, \tensorpower{u}{p}}$.
The parametric nature of the algorithm captures both the brute force search algorithm (when $t = k$) and the idea of diagonal thresholding (when $t=1$ and $p=2$).
As long as $t \leq k$, using a larger $t$ represents a direct trade-off between running time and the signal-to-noise ratio.
Extending the result to approximately flat vectors, the dependency on $A$ in the term $\Paren{2 A^2}^{p}$ can be removed by increasing the computational budget to some value $t' \geq 2 A^2 t$.

\subsubsection{Multiple spikes} 

\begin{theorem}[Algorithm for multi-spike sparse tensor PCA, Informal]
	\label{thm:main-algorithm-multi-informal}
	Let $A \geq 1$ be a constant.  Consider the observation tensor
    \[
    \pmb Y = \pmb W + \sum_{q=1}^r \lambda_q \tensorpower{x_{(q)}}{p}
    \]
    where the additive noise tensor $\pmb W \in \otimes^p\R^n$ contains i.i.d.\ $N(0,1)$ entries and the signals $x_{(1)}, \ldots, x_{(r)}\in \R^n$ are $(k,A)$-sparse unit vectors with disjoint supports and corresponding signal strengths $\lambda_1 \geq \ldots \geq \lambda_r > 0$.
    Let $1 \leq t \leq k$ be an integer and $0 < \eps \leq 1/2$.
    Suppose that
	\begin{align*}
		\lambda_r \gtrsim \frac{1}{\eps} \cdot \sqrt{ t \Paren{\frac{2 A^2 k}{t}}^p \ln n} \quad \text{and} \quad \lambda_r \gtrsim A^{2p} \cdot \Paren{2 \eps}^{p-1} \cdot \lambda_1\,.
	\end{align*}
	Then, there exists an algorithm that runs in $\cO(rpn^{p + t})$ time and, with probability 0.99, outputs the individual signal supports of $x_{(\pi(1))}, \ldots, x_{(\pi(r))}$ for some unknown bijection $\pi: [r] \rightarrow [r]$.
\end{theorem}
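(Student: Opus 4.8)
The plan is to analyze the natural \emph{limited brute-force with peeling} algorithm: run (a variant of) the single-spike procedure of \cref{thm:main-algorithm-informal} for $r$ rounds, deflating one recovered support per round. Maintain a set $R\subseteq[n]$ of already-used coordinates (initially empty). In each round, enumerate the $\binom{n-\Abs R}{t}\le n^t$ size-$t$ subsets $T\subseteq[n]\setminus R$; for each $T$ choose a sign pattern $s_T\in\set{\pm1}^T$ (by a $\mathrm{poly}(t)$-time subroutine on the $T^{\tensor p}$ subtensor, or by brute force over the $2^t$ patterns, which only affects lower-order factors) and form the flat test vector $w_T=\tfrac1{\sqrt t}\sum_{i\in T}s_{T,i}e_i$; keep the $\widehat T$ maximizing $\iprod{\pmb Y,\tensorpower{w_T}{p}}$, set $\widehat w=w_{\widehat T}$, output as this round's support the set $\widehat S=\Set{j\in[n]\setminus R:\Abs{\iprod{\pmb Y,\tensorpower{\widehat w}{p-1}\tensor e_j}}>\tau}$ for a suitable threshold $\tau$, and add $\widehat S$ to $R$. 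Enumeration costs $n^t$, each tensor contraction costs $\widetilde\cO(p\,n^p)$, and there are $r$ rounds, so this runs in $\widetilde\cO(rp\,n^{p+t})$ time.

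The first step is a \emph{uniform Gaussian bound}. Every $w_T$ and every $\tensorpower{w_T}{p-1}\tensor e_j$ is a fixed unit tensor, so $\iprod{\pmb W,\tensorpower{w_T}{p}}$ and $\iprod{\pmb W,\tensorpower{w_T}{p-1}\tensor e_j}$ are standard Gaussians; there are $n^{\cO(t)}$ of them across all $r\le n$ rounds, so a union bound gives, with probability $\ge0.995$, that \emph{simultaneously} every score's noise part is $\lesssim\sqrt{t\ln n}$ and every contraction's noise part is $\lesssim\sqrt{\ln n}$. Condition on this event.

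Next comes the \emph{core step}, for one fixed round. Let $Q\subseteq[r]$ index the spikes whose supports are still disjoint from $R$, let $q'$ be the strongest such spike, and write $\beta_q=\iprod{x_{(q)},\widehat w}$ and $t_q=\Card{\widehat T\cap\supp(x_{(q)})}$ (so $\sum_q t_q\le t$); since disjoint supports are orthogonal, $\sum_q\beta_q^2\le1$. The ``honest'' test vector supported on $t$ coordinates of $\supp(x_{(q')})$ with matching signs has score at least $\lambda_{q'}\Paren{t/(A^2k)}^{p/2}-\sqrt{t\ln n}$, and the first hypothesis --- $\lambda_r$ must exceed the maximal $t$-sparse noise $\lesssim\sqrt{t\ln n}$ rescaled by $\tfrac1\eps\Paren{2A^2k/t}^{p/2}$ --- makes this dominate the noise by a wide margin. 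Combining the optimality of $\widehat w$ with the elementary inequality $\sum_q t_q^p\le\Paren{\max_q t_q}^{p-1}\sum_q t_q$, and separately comparing $\widehat w$ to its own sign-corrected restriction on its dominant support, I would conclude that $\widehat w$ places all but an $O(\eps)$-fraction of its mass, with the correct signs, on a single available support $\supp(x_{(\widehat q)})$ --- equivalently $\Abs{\beta_{\widehat q}}\ge(1-O(\eps))\tfrac{\sqrt t}{A\sqrt k}$ and $\sum_{q\ne\widehat q}t_q\le O(\eps)\,t$. The deflation is then \emph{exact}: for $j\in\supp(x_{(\widehat q)})$ disjointness makes the contraction equal $\iprod{\pmb W,\cdot}+\lambda_{\widehat q}\beta_{\widehat q}^{p-1}x_{(\widehat q),j}$ with no cross term, comfortably above $\tau$; for $j$ in another available support the signal contribution is at most $\lambda_1\cdot\Paren{O(\eps)\tfrac{\sqrt t}{\sqrt k}}^{p-1}\cdot\tfrac{A}{\sqrt k}$, which the second hypothesis $\lambda_r\gtrsim A^{2p}(2\eps)^{p-1}\lambda_1$ forces below $\tau$; for $j$ outside every available support only noise $\lesssim\sqrt{\ln n}\ll\tau$ survives. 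Hence $\widehat S=\supp(x_{(\widehat q)})$.

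Finally, a \emph{peeling induction}: deleting $\supp(x_{(\widehat q)})$ leaves an instance of exactly the same form on the surviving coordinates --- the spike $x_{(\widehat q)}$ removed, fresh i.i.d.\ Gaussian noise on the surviving block, and both hypotheses still valid since they involve only $\lambda_1,\lambda_r$ and are monotone under deleting a spike. Because the high-probability event was taken uniformly over all rounds, the core step reapplies verbatim, and after $r$ rounds every support has been emitted exactly once, which defines the bijection $\pi$. I expect the core step --- ruling out a maximizer that straddles two supports, and certifying that the deflation picks up no foreign coordinate --- to be the one real obstacle: one must quantify precisely how the $p$-fold tensor power penalizes mass spread across supports (this is what produces the $(2\eps)^{p-1}$ factor) and trade it against the $\tfrac1\eps$ loss in the first hypothesis. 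Everything else (the Gaussian concentration, the sign subroutine, the running-time bookkeeping, and the induction) is routine.
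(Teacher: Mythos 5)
There is a genuine gap in your ``uniform Gaussian bound'' step, and it is exactly the point the paper's preprocessing is designed to handle. You claim that, by a union bound over the $n^{\cO(t)}$ fixed test tensors, \emph{every} contraction $\iprod{\pmb W,\tensorpower{w_T}{p-1}\tensor e_j}$ is $\lesssim\sqrt{\ln n}$. Each such term is indeed a standard Gaussian, but there are $n^{\Theta(t)}$ of them, so the union bound only yields $\lesssim\sqrt{t\ln n}$, not $\sqrt{\ln n}$; and you cannot instead treat only the $rn\le n^2$ contractions the algorithm actually evaluates as fixed Gaussians, because they are evaluated at the data-dependent maximizer $\widehat w$, which is a function of the same noise $\pmb W$. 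This quantitative loss is fatal under the stated hypotheses: with $\lambda_r\asymp\frac1\eps\sqrt{t(2A^2k/t)^p\ln n}$, the signal contribution to $\pmb\alpha_j$ at a true support coordinate is $\lambda_{\widehat q}\,x_{\widehat q,j}\,\iprod{x_{\widehat q},\widehat w}^{p-1}\asymp\frac{2^{\Theta(p)}}{\eps}\sqrt{\ln n}$, \emph{independent of $t$}, so once $t\gg 1/\eps^2$ (the theorem allows any $t\le k$) the honest uniform noise level $\sqrt{t\ln n}$ swamps it and your threshold $\tau$ cannot separate support from non-support coordinates.

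The missing idea is the decoupling step: the paper's \cref{alg:preprocessing} splits $\pmb Y$ into two independent copies $\pmb Y^{(1)}=\frac1{\sqrt2}(\pmb Y+\pmb Z)$ and $\pmb Y^{(2)}=\frac1{\sqrt2}(\pmb Y-\pmb Z)$, computes the maximizer $\pmb v_*$ from $\pmb Y^{(1)}$ only, and evaluates the contractions $\pmb\alpha_\ell=\iprod{\pmb Y^{(2)},\tensorpower{\pmb v_*}{p-1}\tensor e_\ell}$ on the independent copy. Conditioned on $\pmb v_*$, each $\iprod{\pmb W^{(2)},\tensorpower{\pmb v_*}{p-1}\tensor e_\ell}$ is then a single $N(0,1)$, and a union bound over only $rn$ coordinate--round pairs gives the needed $\sqrt{\ln n}$ bound (\cref{lem:apxflat-anchor}, \cref{remark:independence}), at the cost of a harmless $\sqrt2$ loss in signal strength. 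Your first-phase argument (the maximizer shares a $(1-\eps)$ fraction of support with one available spike, with the $(2\eps)^{p-1}$ gap controlling cross-spike terms) and the peeling structure otherwise track the paper's \cref{lem:apxflat-maximizer} and \cref{thm:apxflat}, but as written the second phase does not go through without this splitting (or some equivalent decoupling/fresh-randomness device).
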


\cref{thm:main-algorithm-multi-informal} requires two assumptions on the signals: (1) signals have disjoint support; (2) there is a bounded signal strength gap of $\lambda_r \gtrsim A^{2p} \cdot \Paren{2 \eps}^{p-1} \cdot \lambda_1$.
In the context of sparse PCA, algorithms that recover multiple spikes (e.g.\ \cite{johnstone2009consistency,deshpande2016sparse}) only require the sparse vectors to be orthogonal.
However, their guarantees are of the form $\lambda_r \geq \widetilde{\cO}\Paren{\Card{\bigcup_{q\in [r]}\supp\Paren{x_{(q)}}}}$.
That is, when the $r$ signals have disjoint supports, they require the smallest signal to satisfy $\lambda_r \geq \widetilde{\cO}\Paren{k \cdot r}$.
In comparison, already for constant $t$, \cref{thm:main-algorithm-multi-informal} successfully recovers the supports when $\lambda_r\geq \widetilde{\cO}(k)$, thus removing the dependency on the number of signals and improving the bound by a $1/r$ factor\footnote{
It is an intriguing question whether an improvement of $1/r$ can be achieved in the more general settings of orthogonal spikes. Our approach relies on the signals having disjoint support and we expect it to \emph{not} be generalizable to orthogonal signals. This can be noticed in the simplest settings with brute-force parameter $t=1$ and $p=2$ where the criteria of \cref{alg:multi-spike} for finding an entry of a signal vector is to look at the diagonal entries of the data matrix. In this case, the algorithm may be fooled since the largest diagonal entry can depend on more than one spike. Nevertheless, we are unaware of any fundamental barrier suggesting that such guarantees are computationally hard to achieve.
}.
Meanwhile, the bounded signal strength gap assumption is a common identifiability assumption (e.g.\ see \cite{cai2013sparse, deshpande2016sparse}). 
We remark that \cref{thm:main-algorithm-multi-informal} provides a tradeoff between this signal strength gap assumption and the signal strengths: we can recover the supports with a smaller gap if the signal strengths are increased proportionally -- increasing $\lambda_r$ by a multiplicative factor $\alpha$ enables the algorithm to succeed with gap that is smaller by a multiplicative factor of $1/\alpha$.
As an immediate consequence, we also obtain a tradeoff between gap assumption and running time: every time we double $t$ (while ensuring $1 \leq t \leq k$), $\lambda_r$ increases by a factor of $(1/\sqrt{2})^{p-1}$ and thus the algorithm can succeed with a smaller gap.
Finally, as in the single spike case, the exact support recovery allow us to obtain good estimate of each signal by running known tensor PCA algorithms.

\paragraph{Remark} We remark that these results can be extended to the general tensor settings
\[
	\pmb Y = \pmb W + \sum_{q=1}^r \lambda_q \mathcal{X}_{(q)}
\]
where for $q\in [r] $, $\mathcal{X}_{(q)}=x_{(q,1)}\otimes\cdots\otimes x_{(q,p)}\in \otimes^p\R^n$  in a natural way.
See \cref{sec:general-tensor-extension}.

\subsubsection{An exponential gap between lower bounds and algorithms} 
\ref{eq:generic-sparse-tensor-pca} generalizes both sparse PCA and tensor PCA.
Hence, a tight hardness result for the model is interesting as it may combine and generalize the known bounds for these special cases.
Here, we give a lower bound for the restricted computational model captured by \emph{low-degree polynomials}.
Originally developed in the context of the sum of squares hierarchy, this computational model appears to accurately predict the current best-known guarantees for problems such as sparse PCA, tensor PCA, community detection, and planted clique (e.g.\ see \cite{hopkins2017efficient, hopkins2017power, hopkins2018statistical, barak2019nearly, ding2019subexponential, kunisky2019notes, dOrsi2020}).

\begin{theorem}[Lower bound for low-degree polynomials, Informal]\label{thm:main-lower-bound-informal}
	Let $1 \leq D \leq 2n/p$ and $\nu$ be the distribution of $\pmb Z \in \otimes^p \R^n$ with i.i.d.\ entries from $N(0,1)$.
	Then, there exists a distribution $\mu$ over tensors $\pmb Y \in \otimes^p \R^n$ of the form
	\[
	\pmb Y = \pmb W + \lambda \tensorpower{\pmb x}{p}
	\]
	where $\pmb W \in \otimes^p \R^n$ is a noise tensor with i.i.d.\ $N(0,1)$ entries, the marginal distribution of $\pmb x$ is supported on vectors with entries $\Set{\pm 1/\sqrt{k},0}^n$, and $\pmb x$ and $\pmb W$ are distributionally independent, such that whenever
	\[
	\lambda \lesssim \frac{\sqrt{D}}{2^p} \min \left\{ \Paren{ \frac{n}{pD} }^{p/4},\; \Paren{ \frac{k}{pD} \Paren{ 1 + \Abs{\ln \Paren{ \frac{npD}{ek^2} }}}}^{p/2} \right\},
	\]
	$\mu$ is indistinguishable\footnote{In the sense that for any low-degree polynomial $p(\pmb Y)$ we have $\frac{\E_\mu p(\pmb Y)-\E_\nu p(\pmb Y)}{\sqrt{\Var p(\pmb Y)}} \in o(1)$. See \cref{sec:background-low-degree-method}.} from $\nu$
	with respect to all polynomials of degree at most $D$.
\end{theorem}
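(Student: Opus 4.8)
The plan is to prove the stronger quantitative statement that the degree‑$\le D$ projection $L^{\le D}$ of the likelihood ratio $L = \mathrm{d}\planted/\mathrm{d}\nulld$ satisfies $\Norm{L^{\le D}}^2 = 1 + o(1)$; by the standard characterization of low‑degree indistinguishability recalled in \cref{sec:background-low-degree-method}, this bounds the normalized degree‑$D$ advantage by $o(1)$ and hence yields the theorem. I would take $\planted$ to be the law of $\pmb Y = \pmb W + \lambda \tensorpower{\pmb x}{p}$ where $\pmb x$ is drawn by choosing a uniformly random support $\pmb S \subseteq [n]$ of size $k$, assigning independent uniform signs to its coordinates, and rescaling to a unit vector (so the nonzero entries are $\pm 1/\sqrt{k}$), with $\pmb W$ independent of $\pmb x$ and having i.i.d.\ $N(0,1)$ entries.

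Since the $n^p$ coordinates of the noise tensor are i.i.d.\ standard Gaussians, the next step is to expand $L$ in the orthonormal tensor‑Hermite basis; using $\iprod{\tensorpower{x}{p}, \tensorpower{x'}{p}} = \iprod{x,x'}^p$ this gives the classical identity for additive Gaussian models
\[
\Norm{L^{\le D}}^2 \;=\; \sum_{d=0}^{D} \frac{\lambda^{2d}}{d!}\, \E_{\pmb x, \pmb x'}\Brac{\iprod{\pmb x, \pmb x'}^{pd}}\,,
\]
where $\pmb x, \pmb x'$ are independent copies of the planted vector. The whole problem then reduces to bounding the overlap moments $\E[\iprod{\pmb x,\pmb x'}^{pd}]$ and checking that the series above sums to $1 + o(1)$ under the stated bound on $\lambda$.

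To bound $\E[\iprod{\pmb x,\pmb x'}^{pd}]$ I would condition on the overlap size $\pmb m = \Card{\pmb S \cap \pmb S'}$, which is hypergeometric with mean $k^2/n$, and write $\iprod{\pmb x,\pmb x'} = \frac1k \sum_{i \in \pmb S \cap \pmb S'} \pmb\sigma_i \pmb\sigma_i'$. Conditioned on $\pmb m$, the inner sum is a sum of $\pmb m$ i.i.d.\ Rademacher variables, so its $pd$‑th moment vanishes unless $pd$ is even (i.e.\ only even $d$ survive when $p$ is odd) and is otherwise at most $\min\Set{(pd\cdot \pmb m)^{pd/2},\ \pmb m^{pd}}$ — the sub‑Gaussian bound versus the trivial bound. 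Hence
\[
\E_{\pmb x,\pmb x'}\Brac{\iprod{\pmb x,\pmb x'}^{pd}} \;\le\; \frac{1}{k^{pd}}\, \E_{\pmb m}\Brac{\min\Set{(pd\cdot\pmb m)^{pd/2},\ \pmb m^{pd}}}\,.
\]
I would then control the hypergeometric moments via $\E\Brac{\binom{\pmb m}{j}} \le k^{2j}/(j!\,n^j)$ together with the Stirling‑number identity $\pmb m^{\ell} = \sum_j S(\ell,j)\, \pmb m^{\underline{j}}$, which splits the estimate into two competing regimes: the bulk $\pmb m = \Theta(k^2/n)$, where the sub‑Gaussian bound applies and contributes the $\Paren{n/pD}^{p/4}$ factor, and the rare large‑overlap events with $\pmb m$ up to $k$, where the trivial bound must be used and, after optimizing the choice of $j$ against the geometric weight $(k^2/n)^j$, contributes the $\Paren{\tfrac{k}{pD}\Paren{1 + \Abs{\ln(npD/ek^2)}}}^{p/2}$ factor. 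Substituting back, the hypothesis on $\lambda$ forces consecutive $d \ge 1$ terms to decay geometrically, so the tail of the series is $o(1)$ and $\Norm{L^{\le D}}^2 = 1 + o(1)$.

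The step I expect to be the main obstacle is this last one: extracting from the hypergeometric overlap moments \emph{exactly} the two quantities in the minimum, with the right exponents in $p$ and the precise logarithmic factor $1 + \Abs{\ln(npD/ek^2)}$, while keeping enough slack — this is where the $2^p$ and the factor $D$ in the hypothesis on $\lambda$ get used — to guarantee geometric decay uniformly over the whole range $d \le D$. Ensuring the binomial and Stirling estimates remain valid under the side constraint $D \le 2n/p$ is a secondary bookkeeping matter; the Hermite‑expansion identity and the Rademacher moment bound are routine.
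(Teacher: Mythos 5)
Your proposal is sound and would prove the theorem, but it organizes the computation differently from the paper, so a comparison is in order. The paper (\cref{thm:low-deg-lower-bound}, proved via \cref{lem:hermite-expectation}, \cref{claim:counting}, \cref{lem:hermite-expectation-upperbound}, and \cref{lem:sum-upperbound}) chooses the prior of \cref{problem:hypo-testing}, where the entries of $\pmb x$ are i.i.d.\ Bernoulli--Rademacher (so $\pmb x$ is only $k(1+o(1))$-sparse), precisely so that $\E_{H_1} h_\alpha(\pmb Y)$ factorizes coordinate-wise; it then computes each Hermite coefficient exactly and sums over multi-indices $\alpha$ by counting the number $s(\alpha)$ of distinct signal coordinates, arriving at the sum $\sum_{d\le D}\frac{\lambda^{2d}}{d!}\sum_{s\le pd/2}\bigl(\tfrac{ek^2}{sn}\bigr)^s\bigl(\tfrac{s}{k}\bigr)^{pd}$, whose optimization over $s$ (\cref{lem:sum-upperbound}) produces exactly the two branches of the minimum, including the factor $1+\Abs{\ln(npD/ek^2)}$. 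You instead take the exactly-$k$-sparse uniform-support prior and invoke the standard overlap identity $\Norm{L^{\le D}}^2=\sum_{d\le D}\frac{\lambda^{2d}}{d!}\E\bigl[\iprod{\pmb x,\pmb x'}^{pd}\bigr]$, then bound the overlap moments by conditioning on the hypergeometric intersection size $\pmb m$ and using Rademacher moment bounds; note that your estimate $\E\bigl[\binom{\pmb m}{j}\bigr]\le k^{2j}/(j!\,n^j)$ holds exactly (since $\prod_{i<j}\frac{k-i}{n-i}\le (k/n)^j$), and after the Stirling-number expansion your sum over $j$ is, up to $e^{O(pd)}$ factors absorbed by the $2^p$ slack, the same quantity the paper optimizes in \cref{lem:sum-upperbound}; likewise the uniformity over $d\le D$ that you flag as the main obstacle is exactly what the paper handles in the chain of inequalities $(\star),(\ast),(\dag)$ using $p\ge2$, the $4^p$ and the $\sqrt{\eps D}$ prefactor, so the slack you reserve is indeed sufficient. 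What each route buys: your overlap formula shortcuts the Hermite bookkeeping and works directly with a genuinely $k$-sparse unit-norm spike (no transfer remark needed), while the paper's coordinate-wise computation yields an exact expression for each coefficient, which it reuses to prove the matching distinguishing \emph{upper} bound (\cref{thm:low-deg-upper-bound}); since the informal statement only asks for existence of some $\mu$ supported on $\Set{\pm 1/\sqrt{k},0}^n$-valued spikes, either prior is acceptable.
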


\cref{thm:main-lower-bound-informal} states that for certain values of $\lambda$, low-degree polynomials cannot be used to distinguish between the distribution of $\pmb{Y}$ and $\pmb {W}$ as typical values of low-degree polynomials are the same (up to a vanishing difference) under both distributions. 
The theorem captures known results in both sparse and tensor PCA settings.
When $p=2$, our bound reduces to $\lambda \lesssim \min \left\{ \sqrt{n},\; \frac{k}{\sqrt{D}} \Paren{ 1 + \Abs{\ln \Paren{ \frac{2nD}{ek^2} }}} \right\}$,  matching known low-degree bounds of \cite{ding2019subexponential} in the sparse PCA setting.
Meanwhile, in the tensor PCA settings ($p\geq 2$, $k=n$), \cref{thm:main-lower-bound-informal} implies a bound of the form $\lambda \lesssim \sqrt{D} \Paren{\frac{n}{pD}}^{p/4}$, thus recovering the results of \cite{kunisky2019notes}.

For constant power $p$ and $k \lesssim\sqrt{n}$, our lower bound suggests that no estimator captured by polynomials of degree $D \lesssim t \log n$ can improve over our algorithmic guarantees by more than a logarithmic factor.
However, for $p\in \omega(1)$, an exponential gap appears between the bounds of \cref{thm:main-lower-bound-informal} and state-of-the-art algorithms (both in the sparse settings as well as in the dense settings).\footnote{In particular, in the sparse settings $k \leq \sqrt{np}$, the $p^{-p/2}$ factor could not be seen in the restricted case of sparse PCA (as this factor is a constant when $p=2$).}
As a concrete example, let us consider the setting where $p = n^{0.1} < k$.
The polynomial time algorithm of \cref{thm:main-algorithm-informal} requires $\lambda\geq \tilde{\cO}(k^{p/2})$ while according to \cref{thm:main-lower-bound-informal} it may be enough to have $\lambda \geq \tilde{\cO} \Paren{k/n^{0.1}}^{p/2}$.
Similarly, for $k\gtrsim \sqrt{np}$, known tensor algorithms recovers the signal for $\lambda \geq \tilde{\cO}(n^{p/4})$ while our lower bound only rules out algorithms for $\lambda \leq \tilde{\cO}\Paren{n^{0.9 \cdot p/4}}$.

Surprisingly, for the distinguishing problem considered in \cref{thm:main-lower-bound-informal}, these bounds appear to be tight.
For a wide range of parameters (in both the dense and sparse settings) there exists polynomial time algorithms that can distinguish the distributions $\nu$ and $\mu$ right at the threshold considered in \cref{thm:main-lower-bound-informal} (see \cref{sec:appendix-computational-bounds}).
It remains a fascinating open question whether sharper recovering algorithms can be designed or stronger lower bounds are required.

Finally, we would like to highlight that this non-trivial dependency on $p$ is a purely computational phenomenon as it does not appear in information-theoretic bounds (see \cref{sec:appendix-info-lower-bound}).

\paragraph{Remark}
Note that \cref{thm:main-lower-bound-informal} is \emph{not} in itself a lower bound for the recovery problem.
However,  any algorithm which obtains a good estimation of the signal vector $x$ for signal strength $\lambda\geq \sqrt{k\log n}$ can be used to design a probabilistic algorithm which solve the distinguishing problem for signal strength $\cO_p(\lambda)$.
Let us elaborate.
Consider an algorithm that given $\pmb Y = \pmb W + \lambda x^{\otimes p}$ outputs a vector $\hat{x}$ such that $\Abs{\iprod{\hat{x}, x}} \geq 0.9$.
With high probability, $\max_{\Abs{z}_2 = 1, \Abs{z}_0 = k} \Abs{\iprod{\pmb W, z^{\otimes p} }} \leq \widetilde{\cO}(\sqrt{k})$ and thus $\Abs{\iprod{ \pmb Y, \hat{x}^{\otimes p} }} \geq \lambda \cdot (0.9)^p - \widetilde{\cO}(\sqrt{k})$.
Therefore, one can solve the distinguishing problem as follows: output ``planted'' if $\Abs{\iprod{ \pmb Y, \hat{x}^{\otimes p} }} \gtrsim \sqrt{k \log n}$ and ``null'' otherwise.

\subsection{Notation and outline of paper}

\paragraph{Notation}
We write random variables in boldface and the set $\{1, \ldots, n\}$ as $[n]$.
We hide absolute constant multiplicative factors and multiplicative factors logarithmic in $n$ using standard notations: $\cO(\cdot)$, $\Omega(\cdot)$, $\lesssim$, $\gtrsim$, and $\widetilde{\cO}(\cdot)$. We denote by $e_1,\ldots,e_n\in \R^n$ the standard basis vectors.
For $x \in \R^n$, we use $\supp(x) \subseteq [n]$ to denote the set of support coordinates.
We say that $x$ is a \emph{$(k,A)$-sparse vector} if $k \in [n]$, constant $A \geq 1$, $\Abs{\supp(x)} = k$, and $\frac{1}{A \sqrt{k}} \leq \Abs{x_\ell} \leq \frac{A}{\sqrt{k}}$ for $\ell \in \supp(x)$.
When $A=1$, we say that $x$ is a \emph{$k$-sparse flat vector} and may omit the parameter $A$.
For general $A \geq 1$, we say that $x$ is \emph{approximately flat}.
For an integer $t \geq 1$, we define $U_t = \Set{u \in \Set{-\frac{1}{\sqrt{t}}, 0, \frac{1}{\sqrt{t}}}^n : \Abs{\supp(u)} = t}$
as the set of $t$-sparse flat vectors.
For a tensor $T \in \otimes^p \R^n$ and a vector $u \in \R^n$, their inner product is defined as
$\iprod{T, \tensorpower{u}{p}} = \underset{i_1,\ldots,i_p\in [n]}{\sum} T_{i_1,\ldots,i_p} u_{i_1} \ldots u_{i_p}$.

\paragraph{Outline}
The rest of the paper is organized as follows:
In \cref{sec:techniques-recovery}, we introduce the main ideas behind \cref{thm:main-algorithm-informal} and \cref{thm:main-algorithm-multi-informal}.
\cref{sec:appendix_preliminaries} contains preliminary notions.
We formally prove \cref{thm:main-algorithm-informal} and \cref{thm:main-algorithm-multi-informal} in \cref{sec:appendix-limited-brute-force}.
The lower bound \cref{thm:main-lower-bound-informal} is given in \cref{sec:appendix-computational-bounds}.
We present an information theoretic bound in \cref{sec:appendix-info-lower-bound}.
\cref{sec:PSDM} discusses the planted sparse densest sub-hypergraph model.
Finally, \cref{sec:missing-proofs} contains technical proofs required throughout the paper.
\section{Recovering signal supports via limited brute force searches}
\label{sec:techniques-recovery}

We describe here the main ideas behind our limited brute force algorithm.
We consider the model
\begin{restatable}[Sparse spiked tensor model]{model}{modeldefn}
	\label{def:sstm}
	For $A\geq 1, r\geq 1, k\leq n$ we observe a tensor of the form
	\[
	\pmb Y = \pmb W + \sum_{q=1}^r \lambda_q \tensorpower{x_{(q)}}{p} \in \otimes^p\R^n
	\]
	where $\pmb W \in \otimes^p\R^n$ is a noise tensor with i.i.d.\ $N(0,1)$ entries, $\lambda_1 \geq \ldots \geq \lambda_r > 0$ are the signal strengths, and $x_{(1)}, \ldots, x_{(r)}$ are $k$-sparse flat unit length signal vectors.
\end{restatable}
We first look at the simplest setting of a single flat signal (i.e.\ $A=1$ and $r=1$), which already capture the complexity of the problem while also sparing many details.
Second, we explain how to extend the analysis to multiple flat signals (i.e.\ $A=1$ and $r \geq 1$).
Third, we consider approximately sparse vectors.
For a cleaner discussion, we assume here that all the non-zero entries of the sparse vector $x$ and vectors in the set $U_t$ have positive sign.
Our techniques also extend\footnote{We provide details for this extension in \cref{sec:general-tensor-extension}.} to general signal tensors $x_{(1)}\otimes\cdots \otimes x_{(p)}\in \otimes^p\R^n$.

\subsection{Single flat signal}
\label{sec:techniques-single-flat} 

As already mentioned in the introduction, a brute force search over $ U_k$ for the vector maximizing $\iprod{\pmb Y, \tensorpower{u}{p}}$ returns the signal vector $x$ (up to a global sign flip) with high probability whenever $\lambda \gtrsim \sqrt{k\log n}$. This algorithm provides provably optimal guarantees but requires exponential time (see \cref{sec:appendix-info-lower-bound} for an information-theoretic lower bound).
The idea of a \emph{limited brute force search} is to search over a smaller set $U_t$ ($1 \leq t \leq k$) instead, and use the maximizer $\pmb v_*$ to determine the signal support $\supp\Paren{x}$.
The hope is that for a sufficiently large signal-to-noise ratio, this $t$-sparse vector $\pmb v_*$ will still be non-trivially correlated with the hidden vector $x$.
Indeed as $t$ grows, the requirement on $\lambda$ decreases towards the information-theoretic bound, at the expense of increased running time.

As a concrete example, consider the matrix settings $(p=2)$. It is easy to generalize the classic diagonal thresholding algorithm (\cite{johnstone2009consistency}) into a limited brute-force algorithm.
Recall that diagonal thresholding identifies the support of $x$ by picking the indices of the largest $k$ diagonal entries of $\pmb Y$. In other words, the algorithm simply computes $\iprod{\pmb Y, \tensorpower{e_i}{2}}$ for all $i\in [n]$ and returns the largest $k$ indices.
From this perspective, the algorithm can be naturally extended to $t > 1$ by computing the $\binom{k}{t}$ vectors $u \in U_t$ maximizing $\iprod{\pmb Y,\tensorpower{u}{2}}$ and reconstructing the signal from them.
For $t=k$, the algorithm corresponds to exhaustive search.\bigskip

With this intuition in mind, we now introduce our family of algorithms, heavily inspired by \cite{ding2019subexponential}.
We first apply a preprocessing step to obtain two independent copies of the data. 

\begin{algorithm}[H]
	\caption{Preprocessing}
	\label{alg:preprocessing}
	\begin{algorithmic}
		\State \textbf{Input}: $\pmb Y$.
		\State Sample a Gaussian tensor $\pmb Z \in \otimes^p \R^n$ where each entry is an i.i.d.\ standard Gaussian $N(0,1)$.
		\State \textbf{Return} two independent copies $\pmb Y^{(1)}$ and $\pmb Y^{(2)}$ of $\pmb Y$ as follows:
		\[
		\pmb Y^{(1)} = \frac{1}{\sqrt{2}} \Paren{\pmb Y + \pmb Z}
		\quad \text{and} \quad
		\pmb Y^{(2)} = \frac{1}{\sqrt{2}} \Paren{\pmb Y - \pmb Z}
		\]
	\end{algorithmic}
\end{algorithm}

\cref{alg:preprocessing} effectively creates two independent copies of the observation tensor $\pmb Y$.
To handle the noise variance, the signal-to-noise ratio is only decreased by the constant factor $1/\sqrt{2}$.
For simplicity, we will ignore this constant factor in the remainder of the section and leave the formalism to the appendix.

\begin{algorithm}[H]
	\caption{Single spike limited brute force}
	\label{alg:basic}
	\begin{algorithmic}
		\State \textbf{Input}: $k, t$ and $\pmb Y^{(1)}, \pmb Y^{(2)}$ obtained from \cref{alg:preprocessing}.
	
		\State Compute $\pmb v_*:=\argmax_{u \in U_t}\iprod{\pmb Y^{(1)}, \tensorpower{u}{p}}$.
		\State Compute the vector $\pmb \alpha\in \R^n$ with entries  $\pmb \alpha_\ell := \iprod{\pmb Y^{(2)}, \tensorpower{\pmb v_*}{p-1}\otimes e_\ell}$ for every $\ell \in [n]$.
		\State \textbf{Return} the indices of the largest $k$ entries of $\pmb \alpha$.
	\end{algorithmic}
\end{algorithm}

The signal support recovery process outlined in \cref{alg:basic} has two phases.
In the first phase, we search over $U_t$ to obtain a vector $\pmb v_*$ that is correlated with the signal $x$.
In the second phase, we use $\pmb v_*$ to identify $\supp(x)$.
The correctness of the algorithm follows from these two claims:
\begin{enumerate}[(i)]
	\item The $t$-sparse maximizer $\pmb v_*$ shares a large fraction of its support coordinates with signal $x$.
	\item The $k$ largest entries of $\pmb \alpha$ belong to the support $\supp(x)$ of signal $x$.
\end{enumerate}

Crucial to our analysis is the following standard concentration bound on Gaussian tensors when interacting with $t$-sparse unit vectors.
We directly use \cref{lem:informal-concentration-gaussian} in our exposition here, and formally prove a more general form in \cref{sec:sparse-norm-bounds-proof}.
\begin{lem}
	\label{lem:informal-concentration-gaussian}
	Let $p\leq n$, $t>0$ be an integer, and $\pmb W\in \otimes^p\R^n$ be a tensor with i.i.d.\ $N(0,1)$ entries.
	Then, with high probability, for any $u \in U_t$,
	\begin{align*}
		\iprod{\pmb W, \tensorpower{u}{p}}&\lesssim \sqrt{t \log n}\,.
	\end{align*}
\end{lem}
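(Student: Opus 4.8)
The plan is to prove this by a direct union bound over the finite set $U_t$, using that for each fixed $u$ the quantity $\iprod{\pmb W, \tensorpower{u}{p}}$ is a centered Gaussian of variance exactly $1$.

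First I would fix $u \in U_t$ and observe that $\iprod{\pmb W, \tensorpower{u}{p}} = \sum_{i_1,\dots,i_p \in [n]} \pmb W_{i_1\cdots i_p}\, u_{i_1}\cdots u_{i_p}$ is a linear combination of the i.i.d.\ $N(0,1)$ entries of $\pmb W$, hence itself a mean-zero Gaussian with variance
\[
\sum_{i_1,\dots,i_p \in [n]} u_{i_1}^2\cdots u_{i_p}^2 = \Paren{\sum_{i \in [n]} u_i^2}^{p} = \Norm{u}_2^{2p}.
\]
Since $u$ has exactly $t$ nonzero coordinates each of magnitude $1/\sqrt t$, we have $\Norm{u}_2 = 1$, so $\iprod{\pmb W, \tensorpower{u}{p}} \sim N(0,1)$ with \emph{no} dependence on $p$. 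The standard Gaussian tail bound then gives $\Pr\Brac{\iprod{\pmb W, \tensorpower{u}{p}} \geq s} \leq e^{-s^2/2}$ for every $s \geq 0$.

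Next I would bound $\Card{U_t}$: choosing the support costs $\binom{n}{t}$ and the signs cost $2^t$, so $\Card{U_t} = 2^t\binom{n}{t} \leq (2en/t)^t \leq (2en)^t$, hence $\ln \Card{U_t} \leq t\ln(2en)$. A union bound over $U_t$ then yields
\[
\Pr\Brac{\max_{u \in U_t}\iprod{\pmb W, \tensorpower{u}{p}} \geq s} \leq (2en)^t e^{-s^2/2} = \exp\Paren{t\ln(2en) - s^2/2}.
\]
Taking $s = C\sqrt{t\ln n}$ for a sufficiently large absolute constant $C$, and using $t \leq n$ so that $\ln(2en) \leq 2\ln n$ for $n$ large enough, makes the exponent at most $-\tfrac14 C^2 t \ln n \leq -t\ln n$, so the failure probability is $n^{-\Omega(t)} = o(1)$. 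If the two-sided estimate $\Abs{\iprod{\pmb W, \tensorpower{u}{p}}} \lesssim \sqrt{t\ln n}$ is desired, one repeats the argument for $-\pmb W$ and pays only a factor $2$, which is absorbed into the constant.

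I do not expect a genuine obstacle here: the argument is a textbook Gaussian-concentration-plus-union-bound, and the clean structural fact that the variance equals $\Norm{u}_2^{2p} = 1$ regardless of $p$ is what makes the final bound independent of $p$. The only points requiring mild care are the bookkeeping that folds the $2^t$ from the sign patterns and the $\ln(2e)$ factor into the $\sqrt{t\ln n}$ scaling, and—for the more general form referenced in \cref{sec:sparse-norm-bounds-proof}, which must handle approximately flat vectors, supports of size larger than $t$, and rank-one signal tensors $x_{(1)}\otimes\cdots\otimes x_{(p)}$—replacing the exact union bound over the finite set $U_t$ by a standard $\eps$-net argument over the relevant set of unit vectors, which costs only an extra constant factor in the exponent.
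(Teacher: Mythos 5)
Your proof is correct: since the entries of $\pmb W$ are fully i.i.d.\ (no symmetry is imposed on the tensor), $\iprod{\pmb W,\tensorpower{u}{p}}$ is indeed a centered Gaussian with variance $\Norm{u}_2^{2p}=1$ for each fixed $u\in U_t$, and the union bound over $\Card{U_t}=2^t\binom{n}{t}\leq (2en)^t$ points gives the claimed $\sqrt{t\log n}$ bound with failure probability $n^{-\Omega(t)}$. This is, however, a genuinely different route from the paper's: the paper does not prove this lemma directly but derives it as a special case ($r=1$, $s=t$) of \cref{lem:tensor-sparse-bound}, which bounds $\Abs{\pmb T(x_{(1)},\ldots,x_{(p)})}$ uniformly over \emph{all} $s$-sparse unit vectors, a continuum, and therefore needs the heavier machinery you only sketch at the end: an $\eps$-net over $\cS^{n-1}_s$, a Chernoff/MGF bound at net points, and a compactness-plus-perturbation argument (splitting each deviation $\delta_{(z)}$ into two $s$-sparse pieces and absorbing the error via the $e^{2p\eps}-1$ factor with $\eps=\frac{\ln 2}{4p}$) to pass from the net back to the maximizer. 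Your argument buys simplicity and avoids the net entirely because $U_t$ is finite, which suffices for the stated lemma; the paper's approach buys the extra generality (arbitrary sparse unit vectors, up to $r$ distinct vectors $x_{(1)}\otimes\cdots\otimes x_{(p)}$) that it actually invokes later, e.g.\ in the proof of \cref{lem:apxflat-maximizer} and in the general-tensor extension of \cref{sec:general-tensor-extension}. The only mild caution is your closing remark that the general form costs ``only an extra constant factor'': the passage from the net to the supremum over the sparse sphere is where the paper must choose $\eps$ inversely proportional to $p$ and track the resulting $\ln(np/s)$ entropy term, so that step is not quite free, though it does not affect the correctness of your proof of the lemma as stated.
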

\bigskip

For some constant $0 < \eps \leq 1/2$, suppose that
\begin{equation}
\label{eq:lambda-value}
\lambda \gtrsim \frac{1}{\eps \cdot (1-\eps)^{p-1}} \cdot \sqrt{t \Paren{\frac{k}{t}}^p \log n}\,.
\end{equation}

For any $u \in U_t$ with support $\supp(u) \subseteq \supp(x)$, we have
\[
\iprod{\pmb Y^{(1)}, \tensorpower{u}{p}}
= \lambda \iprod{x,u}^p+ \iprod{\pmb W^{(1)}, \tensorpower{u}{p}}
\geq \lambda \cdot \Paren{\frac{t}{k}}^{\frac{p}{2}}  - \cO\Paren{\sqrt{t\log n}}\,.
\]

On the other hand, any $u \in U_t$ with support satisfying $\Card{\supp(u)\cap \supp(x)} \leq (1-\eps)\cdot t$ has small correlation with $\pmb Y^{(1)}$ in the sense that
\begin{align*}
	\iprod{\pmb Y^{(1)}, \tensorpower{u}{p}}
	= \lambda \iprod{x,u}^p+\iprod{\pmb W^{(1)}, \tensorpower{u}{p}}
	\leq \lambda \cdot (1-\eps)^{p} \cdot \Paren{\frac{t}{k}}^{\frac{p}{2}} + \cO\Paren{\sqrt{t\log n}}\,.
\end{align*}

By \cref{eq:lambda-value}, with high probability, $\pmb v_*$ will have at least a fraction $(1-\eps)$ of the support contained in $\supp(x)$, yielding the first claim.
Observe that $\pmb v_*$ does not completely overlap with $x$.
A priori, this might seem to be an issue.
However, it turns out that we can still use $\pmb v_*$ to exactly reconstruct the support of $x$.
Indeed, for all $\ell \in \supp(x)$,
\begin{align*}
	\pmb \alpha_\ell
	&= \lambda\cdot x_\ell\cdot \iprod{x, \pmb v_*}^{p-1}	+  \iprod{\pmb W^{(2)}, \tensorpower{\pmb  v_*}{p-1}\otimes e_\ell} \\
	&\geq \lambda\cdot \frac{(1-\eps)^{p-1}}{\sqrt{k}}\cdot \Paren{\frac{t}{k}}^{\frac{p-1}{2}}+  \iprod{\pmb W^{(2)}, \tensorpower{\pmb  v_*}{p-1}\otimes e_\ell} \\
	&\gtrsim \frac{1}{\eps} \cdot \sqrt{\log n} +  \iprod{\pmb W^{(2)}, \tensorpower{\pmb  v_*}{p-1}\otimes e_\ell} \,.
\end{align*}
Now, by independence of $\pmb W^{(2)}$ and $\pmb v_*$, $\iprod{\pmb W^{(2)}, \tensorpower{\pmb  v_*}{p-1}\otimes e_\ell}$ behaves like a standard Gaussian.
Thus, with high probability, $ \Abs{\iprod{\pmb W^{(2)}, \tensorpower{\pmb  v_*}{p-1}\otimes e_\ell}}\lesssim\sqrt{ \log n}$ and
$
\pmb \alpha_\ell \gtrsim \sqrt{\log n}
$.
Conversely, if $\ell$ is \emph{not} in the support of the signal, then $\pmb \alpha_\ell \lesssim \sqrt{\log n}$.
So, the vector $\pmb \alpha$ acts as indicator of the support of $x$!

\begin{remark}
	\textnormal{
		In its simplest form of $t=1$, \cref{alg:basic} does not exploit the tensor structure of the data: it performs entry-wise search for the largest (in magnitude) over a subset of $\pmb Y$.
		However, this is no longer true as $t$ grows.
		For $t=k$, the algorithm computes the $k$-sparse flat unit vector $u$ maximizing $\iprod{\pmb Y^{(1)}, \tensorpower{u}{p}}$.
	}
\end{remark}

\subsection{Multiple flat signals with disjoint signal supports}
\label{sec:techniques-multi-flat}

Consider now the setting with $r>1$ spikes.
Recall that we assumed the vectors $x_{(1)}, \ldots, x_{(r)}$ to have non-intersecting supports.
We also assumed that for any $q, q' \in [r]$ and some fixed scalar $0 \leq \kappa \leq 1$, if $\lambda_q \geq \lambda_{q'}$, then $\lambda_{q'} \geq \kappa \cdot \lambda_q$.
We remark that we may not recover the signal supports in a known order, but we are guaranteed to recover \emph{all of them} exactly.
For simplicity of discussion, let us assume here that we recover the vector $x_{(i)}$ at iteration $i$.

The idea to recover the $r$ spikes is essentially to run \cref{alg:basic} $r$ times.
At first, we compute the $t$-sparse vector $\pmb v_*$ by maximizing the product $\iprod{\pmb Y^{(1)}, \tensorpower{\pmb v_*}{p}}$.
Then, using $\pmb v_*$, we compute the vector $\pmb \alpha$ to obtain a set $\cI_1 \subseteq [n]$.
With high probability, we will have $\cI_1= \supp\Paren{x_{(1)}}$ and so we will exactly recover the support of $x_{(1)}$.
In the second iteration of the loop, we repeat the same procedure with the additional constraint of searching only over the $n-k$ dimensional subset of $U_t$ containing vectors with disjoint support from $\cI_1$.
Similarly, at iteration $i$, we search over the subset of $U_t$ containing vectors with disjoint support from $\underset{1 \leq j < i}{\bigcup} \cI_j$.
As before, we first preprocess the data to create two independent copies $\pmb Y^{(1)}$ and $\pmb Y^{(2)}$.
Concretely:

\begin{algorithm}[H]
	\caption{Multi-spike limited brute force}
	\label{alg:multi-spike}
	\begin{algorithmic}
		\State \textbf{Input}: $k, t, r$ and $\pmb Y^{(1)}, \pmb Y^{(2)}$ obtained from \cref{alg:preprocessing}.
		\State \textbf{Repeat} for $i=1$ to $r$:
		\State $\qquad\quad $  Compute $\pmb v_*:=\argmax_{u \in U_t} \iprod{\pmb Y^{(1)}, \tensorpower{u}{p}}$ subject to $\supp\Paren{\pmb v_*} \cap \Paren{\underset{1 \leq j < i}{\bigcup} \cI_j} = \emptyset$.
		\State $\qquad\quad $ Compute the vector $\pmb \alpha\in \R^n$ with entries  $\pmb \alpha_\ell := \iprod{\pmb Y^{(2)}, \tensorpower{\pmb v_*}{p-1} \otimes e_\ell }$ for every $\ell \in [n]$.
		\State $\qquad\quad $  Let $\cI_i$ be the set of indices of the largest $k$ entries of $\pmb \alpha$.
		\State \textbf{Return} $\cI_{1},\ldots,\cI_r$.
	\end{algorithmic}
\end{algorithm}

The proof structure is similar to that of \cref{alg:basic} and essentially amounts to showing that the claims (i) and (ii) described in \cref{sec:techniques-single-flat} hold in each iteration. 

Let $\lambda_{\min} = \min_{q \in [r]} \lambda_q$ and $\lambda_{\max} = \max_{q \in [r]} \lambda_q$.
For some $0< \eps \leq 1/2$, let $\kappa \gtrsim \Paren{\frac{\eps}{1-\eps}}^{p-1}$ such that $\lambda_{\min} \geq \kappa \cdot \lambda_{\max}$.
Suppose that
\begin{equation}
	\label{eq:lambda-min-value}
	\lambda_{\min} \gtrsim \frac{1}{\eps \cdot (1-\eps)^p} \cdot \sqrt{t \Paren{\frac{k}{t}}^p \log n}
	\quad \text{and} \quad
	\lambda_{\min} \gtrsim \Paren{\frac{\eps}{1-\eps}}^{p-1} \cdot \lambda_{\max}\,.
\end{equation}

Consider an arbitrary iteration $i$ and suppose that we exactly recovered the support of one signal in each of the previous iterations.
Without loss of generality, assume that $\lambda_{\max}$ is the largest signal strength among the yet to be recovered signals, and let $x_{(\max)}$ be one such corresponding signal.

For $u \in U_t$ satisfying $\supp(u) \subseteq \supp(x_{(\max)})$, we have
\[
\iprod{\pmb Y^{(1)}, \tensorpower{u}{p}}
= \lambda_{\max} \iprod{x_{(\max)},u}^p + \iprod{\pmb W^{(1)}, \tensorpower{u}{p}}
\geq \lambda_{\max} \cdot \Paren{\frac{t}{k}}^{\frac{p}{2}}  - \cO\Paren{\sqrt{t\log n}}\,.
\]

On the other hand, for any $u \in U_t$ such that $\Card{\supp\Paren{u} \cap \supp\Paren{x_{(q)}}} \leq (1-\eps) \cdot t$ for all $q \in [r]$,
\begin{align*}
	\iprod{\pmb Y^{(1)}, \tensorpower{u}{p}}
	&=  \underset{q \in [r]}{\sum}\lambda_q \iprod{x_{(q)}, u}^p+ \iprod{\pmb W^{(1)}, \tensorpower{u}{p}}\\
	&\leq \lambda_{\max} \cdot \Paren{\frac{t}{k}}^{\frac{p}{2}} \cdot \Paren{\Paren{1-\eps}^p +\eps^p} + \cO\Paren{\sqrt{t\log n}}\\
	&\leq \lambda_{\max} \cdot \Paren{\frac{t}{k}}^{\frac{p}{2}} \cdot \Paren{1-\eps}^{p-1} + \cO\Paren{\sqrt{t\log n}} \,.
\end{align*}

Thus, as in \cref{sec:techniques-single-flat}, it follows that $\pmb v_*$ satisfies $\Card{\supp\Paren{\pmb v_*} \cap \supp\Paren{x_{(i)}}} \geq (1-\eps) \cdot t$ for some signal $x_{(i)}$.
Note that $x_{(i)}$ may not be $x_{(\max)}$.
Even though $\pmb v_*$ does not exactly overlap with any of the signal vectors, we will not accumulate an error at each iteration.
This is because, analogous to the single spike setting, we can exactly identify the support of a signal through $\pmb \alpha$.
For any $\ell \in \supp\Paren{x_{(i)}}$, it holds that $\pmb \alpha_\ell \gtrsim \sqrt{\log n}$ as before because $\Abs{\iprod{\pmb W^{(2)}, \tensorpower{\pmb v_*}{p-1} \otimes e_\ell}} \lesssim \sqrt{\log n}$.
Conversely, since signal supports are disjoint, we see that for $\ell \notin \supp\Paren{x_{(i)}}$,
\begin{align*}
	\pmb \alpha_\ell
	&=  \underset{q \in [r]}{\sum} \lambda_q \cdot x_{(q), \ell} \cdot \iprod{x_{(q)}, \pmb v_*}^{p-1} + \iprod{\pmb W^{(2)}, \tensorpower{\pmb v_*}{p-1} \otimes e_\ell}\\
	&\leq \lambda_{\max} \cdot \frac{\eps^{p-1}}{\sqrt{k}} \cdot \Paren{\frac{t}{k}}^{\frac{p-1}{2}}  + \cO\Paren{\sqrt{\log n}}\\
	& \leq \frac{\lambda_{\min}}{\kappa}\cdot \frac{\eps^{p-1}}{\sqrt{k}}\cdot\Paren{\frac{t}{k}}^{\frac{p-1}{2}}  + \cO\Paren{\sqrt{\log n}}\\
	&\lesssim \sqrt{\log n}\,.
\end{align*}
So, once again, $\pmb \alpha$ exactly identifies the support of $x_{(i)}$ with high probability.

\begin{myremark}[On the strength of the assumption on $\kappa$]
\textnormal{
As already briefly discussed in \cref{sec:results}, the algorithm provides a three-way trade-off between signal gap $\kappa$, signal-to-noise ratio $\lambda$ and running time.
By appropriately choosing the constant $\eps>0$, the algorithm can tolerate different values of $\kappa$.
Indeed, the above analysis holds as long as $\kappa \gtrsim \Paren{\frac{\eps}{1-\eps}}^{p-1}$.
This suggests two ways in which we can loosen the requirement $\lambda_{\min} \geq \kappa \cdot \lambda_{\max}$ and still successfully recover the signals through \cref{alg:multi-spike}.
One is increase the running time, so that we can decrease $\epsilon$ without increasing the signal-to-noise ratio $\lambda_{\min}$.
The other is to decrease $\epsilon$ and increase the value of $\lambda_{\min}$ accordingly.
 }
\end{myremark}

\begin{myremark}[On independent copies of $\pmb Y$]
\label{remark:independence}
\textnormal{
To clarify why it suffices to have 2 independent copies of $\pmb Y$ even for multiple iterations, observe that at each iteration \emph{i}, the choice of the set $\cI_i$  depends only on the vector $\pmb v_*$ with high probability.
Consider the following thought experiment where we are given a fresh copy $\pmb Y^{(i)}$ of $\pmb Y$ in the second phase of each iteration \textit{i} of the algorithm (while still using only a single copy $\pmb Y^{(1)}$for \emph{all} the first phases).
Even with fresh randomness, the result is the same as \cref{alg:multi-spike} with high probability because at each iteration the choice of maximizer $\pmb v_*$ causes the same output.
}
\end{myremark}

\begin{myremark}[Reconstructing the signals from their supports]
	\label{remark:reconstruct}
	\textnormal{
		After recovering individual signal supports, one can reconstruct signals using known tensor PCA algorithms (e.g.\ \cite{richard2014statistical,hopkins2015tensor}) on the subtensor defined by each recovered support.
		The signal strength required for this new subproblem is weaker and is satisfied by our recovery assumptions.
		For instance, by concatenating our algorithm with \cite[Theorem 7.1]{hopkins2015tensor}, one obtains vectors $\widehat{x}_{(1)}, \ldots, \widehat{x}_{(r)}$ such that $\Abs{\iprod{\widehat{x}_{(i)}, x_{(i)}}} \geq 0.99$, for any $i \in [r]$, with probability 0.99.
	}
\end{myremark}

\subsection{Approximately flat signals}
\label{sec:techniques-multi}

By factoring $A \geq 1$ into our assumptions on minimal signal strength $\lambda_{\min}$ and relative strength ratio $\kappa$, we can extend the above analyses (using the same proof outline as in \cref{sec:techniques-multi-flat}) so that \cref{alg:multi-spike} recovers the individual supports of multiple approximately flat $(k,A)$-sparse signals.
Besides accounting for $A$ factors, the only significant change in the analysis is in how we lower bound $\iprod{\pmb Y^{(1)}, \pmb v_*}$. 
Consider the first iteration (by our discussion in \cref{sec:techniques-multi-flat}, other iterations are similar) and let 
$u_*\in U_t$ be the vector satisfying
\begin{align*}
	\lambda \iprod{x, u_*}^p = \max_{q \in [r], u \in U_t} \lambda_q \iprod{x_{(q)}, u}^p\,.
\end{align*}
This choice allows us to account for skewed signals.
Let $\lambda_{\min} = \min_{q \in [r]} \lambda_q$ and $\lambda_{\max} = \max_{q \in [r]} \lambda_q$.
For some $0< \eps \leq 1/2$.
Suppose that
\begin{equation}
	\label{eq:lambda-min-apx-value}
	\lambda_{\min} \gtrsim \frac{A^{p}}{\eps \cdot (1-\eps)^p} \cdot \sqrt{t \Paren{\frac{k}{t}}^p \log n}
	\quad \text{and} \quad
	\lambda_{\min} \gtrsim A^{2p} \cdot \Paren{\frac{\eps}{1-\eps}}^{p-1} \cdot \lambda_{\max}\,.
\end{equation}

By definition of $\pmb v_*$, we have 
\begin{align}
	\label{eq:lower-bound-on-u-star}
	\iprod{\pmb Y^{(1)}, \pmb v_*} \geq \iprod{\pmb Y^{(1)}, u_*} = \lambda \iprod{x, u_*}^p + \iprod{\pmb W^{(1)}, \tensorpower{u_*}{p}}\,.
\end{align}
Conversely, consider an arbitrary $u \in U_t$ such that $\Card{\supp(u) \cap \supp\Paren{x_{(q)}}} < (1-\eps) \cdot t$ for all $q \in [r]$.
Intuitively, the largest attainable value for $\iprod{\pmb Y^{(1)}, \tensorpower{u}{p}}$ is obtained removing $\eps \cdot t$ entries from $u_*$ and placing them on some highly skewed signal.
Using \cref{eq:lambda-min-apx-value} and \cref{eq:lower-bound-on-u-star}, it is possible to show\footnote{For the full derivation, see the proof of \cref{lem:apxflat-maximizer} in \cref{sec:missing-recovery-proofs}. A $\sqrt{2}$ factor appears due to \cref{alg:preprocessing}.}
\begin{align*}
	\iprod{\pmb Y^{(1)}, \tensorpower{u}{p}}
	&\leq \lambda \iprod{x, u_*}^p \cdot \Paren{1 - \frac{\eps}{A^2}}^{p-1} + \iprod{\pmb W^{(1)}, \tensorpower{u}{p}}\,.
\end{align*}
Thus, it follows that $\pmb v_*$ satisfies $\Card{\supp\Paren{\pmb v_*} \cap \supp\Paren{x_{(i)}}} \geq (1-\eps) t$ for some $i \in [r]$.
We can now repeat the same analysis as \cref{sec:techniques-multi-flat} to argue that $\pmb \alpha$ behaves as an indicator vector for $\supp\Paren{x_{(i)}}$, taking account of $A$ factors.

\section{Preliminaries}
\label{sec:appendix_preliminaries}

\subsection{Packings and nets}

Packings and nets are useful in helping us discretize a possibly infinite metric space.
Let $\cX$ be a set of points and $d : \cX \times \cX \rightarrow \R^+$ be a (pseudo)metric\footnote{A metric satisfies 3 properties: (1) $d(x,y) = 0$ if and only if $x = y$; (2) $d(x,y) = d(y,x)$; (3) $d(x,y) \leq d(x,z) + d(z,y)$. A pseudometric may violate (1) by allowing $d(x,y) = 0$ for distinct $x \neq y$. Pseudometrics are also sometimes called semimetrics.} for $\eps > 0$.
That is, $(\cX, d)$ is a (pseudo)metric space.
An $\eps$-packing $\cX' \subseteq \cX$ is a subset where any two distinct points $x,y \in \cX'$ have distance $d(x,x') > \eps$.
An $\eps$-net\footnote{Nets are also referred to as coverings.} $\cX' \subseteq \cX$ is a subset such that for any point $x \in \cX$, there exists some point $x' \in \cX'$ (possibly itself) where $d(x,x') \leq \eps$.
Under these notions, the covering number $N(\cX, d, \eps)$ and packing number $P(\cX, d, \eps)$ are defined as the size of the \emph{smallest} $\eps$-net of $\cX$ and \emph{largest} $\eps$-packing of $\cX$ respectively.
It is known\footnote{e.g.\ See resources such as \cite{tao2014metric} and \cite[Section 4.2]{vershynin2018high}.} that
\[
P(\cX, d, 2\epsilon) \leq N(\cX, d, \epsilon) \leq P(\cX, d, \epsilon)
\]

\subsection{Hermite polynomials}
\label{sec:hermite-polys}

In this section, we introduce Hermite polynomials and state some properties used in our low-degree analysis.
For further details, see \cite[Section 11.2]{o2014analysis}.

\begin{definition}[Inner product of functions]
For a pair of functions $f$ and $g$ operating on the same domain $\mathcal{D}$, their inner product is defined by $\langle f, g \rangle = \E_{\pmb z \sim \mathcal{D}} [f(\pmb z) g(\pmb z)]$.
\end{definition}

The set $\{1, \pmb z, \pmb z^2, \ldots, \pmb z^D\}$ is a basis for the set of polynomials with maximum degree $D$ on Gaussian variable $\pmb z \sim N(\mu, 1)$.
By applying the Gram-Schmidt process and noting that odd functions in $\pmb z$ have expectation 0, we can diagonalize this set to obtain an orthogonal basis:
$H_{e_0}(\pmb z) = 1$, $H_{e_1}(\pmb z) = \pmb z$, $H_{e_2}(\pmb z) = \pmb z^2 - 1$, $H_{e_3}(\pmb z) = \pmb z^3 - 3 \pmb z$, etc.

The orthogonal basis for polynomials of maximum degree $D$ $\{H_{e_n}\}_{n \in [D]}$ is also called the \emph{probabilists' Hermite polynomials}.
It is known that
\[
\E_{\pmb z \sim N(\mu, 1)}[H_{e_n}(\pmb z)] = \mu^n
\quad
\text{ and }
\quad
\E_{\pmb z \sim N(\mu, 1)}[(H_{e_n}(\pmb z))^2] = n!
\]
As we are interested in \emph{orthonormal} bases, we use the \emph{normalized} probabilists' Hermite polynomials $\{h_n\}_{n \in \mathbb{N}}$ where $h_n = \frac{1}{\sqrt{n!}} H_{e_n}$.
One can check that
\[
\E_{\pmb z \sim N(\mu, 1)}[h_n(\pmb z)] = \frac{1}{\sqrt{n!}} \mu^n
\quad
\text{ and }
\quad
\E_{\pmb z \sim N(\mu, 1)}[(h_n(\pmb z))^2] = 1
\]

\subsection{Information theory}
\label{sec:info-theory}

Techniques from the statistical minimax theory, such as the Fano method, allow us to \emph{lower bound} the worst case behavior of \emph{any} estimator.
In the following discourse, we borrow some notation from \cite{duchi2016lecture}.
Given a single tensor observation $\pmb Y = \pmb W + \lambda \tensorpower{x}{p}$ generated from an underlying signal $x \in U_t$ (i.e.\ the parameter of the observation is $\theta(\pmb Y) = x$), an estimator $\widehat{\theta}(\pmb Y)$ outputs some unit vector in $\widehat{x} \in U_k$.
For two vectors $x$ and $x'$, we use the pseudometric\footnote{Instead of the ``standard'' $\Norm{x-x'}_2$ loss, we want a loss function that captures the ``symmetry'' that $\langle x, x' \rangle^p = \langle x, -x' \rangle^p$ for even tensor powers $p$. Clearly, $\rho(x,y) = \rho(y,x)$ and one can check that $\rho(x,y) \leq \rho(x,z) + \rho(z,y)$. Observe that $\rho$ is a pseudometric (and not a metric) because $\rho(x,y) = 0$ holds for $x = -y$.} $\rho(x,x') = \min \{ \Norm{x-x'}_2, \Norm{x+x'}_2 \}$ and the loss function $\Phi(t) = t^2 / 2$.
Thus, $\Phi(\rho(x,x')) = 1 - \Abs{\langle x, x' \rangle}$ with the corresponding minimax risk being
\[
\inf_{\widehat{\theta}} \sup_{x \in U_k} \E_{\pmb{Y}}\left[ \Phi \left( \rho \left( \widehat{\theta}(\pmb Y), \theta(\pmb Y) \right) \right) \right]
= \inf_{\widehat{x} \in U_k} \sup_{x \in U_k} \E_{\pmb Y} \left[ 1 - \Abs{\langle \widehat{x}, x \rangle} \right]
\]

A common way to \emph{lower bound} the minimax risk function is to look at it from the lens of a \emph{finite} testing problem.
The \emph{canonical hypothesis testing problem}\footnote{See Section 13.2.1 in \cite{duchi2016lecture}.} (in our context) is as follows.
Let $\cX \subseteq U_k$ be an $\eps$-packing of $U_k$ of size $\Abs{\cX} = m \geq P(U_k, \rho, \eps)$.
That is, $\min_{x_i, x_j \in \cX, i \neq j} \rho(x_i, x_j) > \eps$.
Then, (1) Nature chooses a unit vector $\pmb x \in \cX$ \emph{uniformly at random}; (2) We observe tensor $\pmb Y = \pmb W + \lambda \tensorpower{\pmb x}{p}$; (3) A test $\Psi : \cY \rightarrow \cX$ determines what is the planted unit vector.
Applying Fano's inequality (\cref{lem:fano}) and the data processing inequality (\cref{lem:data-processing}) to the above-mentioned canonical hypothesis testing problem, one can show\footnote{E.g.\ see \cite[Proposition 13.10]{duchi2016lecture}, adapted to our context. Recall that we had $\Phi(t) = t^2 / 2$.} that
\[
\inf_{\widehat{x} \in U_k} \sup_{x \in U_k} \E_{\pmb Y} \left[ 1 - \Abs{\langle \widehat{x}, x \rangle} \right]
\geq
\inf_{\widehat{x} \in U_k} \sup_{x \in \cX} \E_{\pmb Y} \left[ 1 - \Abs{\langle \widehat{x}, x \rangle} \right]
\geq 
\frac{\eps^2}{4} \cdot \left( 1 - \frac{I(x; \pmb Y) + 1}{\log m} \right)
\]
where $I(x; \pmb Y)$ is the mutual information between $x$ and $\pmb Y$.
Since it is known\footnote{E.g.\ see \cite[Lemma 4]{scarlett2019introductory}.} that for $x \in \mathcal{X}$, one can upper bound $I(x ; \pmb Y)$ by
$
I(x ; \pmb Y)
\leq \max_{u, v \in \cX} D_{KL} \left( \bbP_{\pmb Y \sim \cY \mid u} \Bigm\Vert \bbP_{\pmb Y \sim \cY \mid v} \right)
$, we have
\begin{equation}
\inf_{\widehat{x} \in U_k} \sup_{x \in U_k} \E_{\pmb Y} \left[ 1 - \Abs{\langle \widehat{x}, x \rangle} \right]
\geq 
\frac{\eps^2}{4} \cdot \left( 1 - \frac{\max_{u, v \in \cX} D_{KL} \left( \bbP_{\pmb Y \sim \cY \mid u} \Bigm\Vert \bbP_{\pmb Y \sim \cY \mid v} \right) + 1}{\log m} \right)
\end{equation}
where $D_{KL}(\cdot \Vert \cdot)$ is the KL-divergence function and $\bbP_{\pmb Y \sim \cY \mid u}$ is the probability distribution of observing $\pmb Y$ from signal $u$ with additive standard Gaussian noise tensor $\pmb W$.
\bigskip

We now state standard facts regarding Fano's inequality without proof.
For an introductory exposition on Fano's inequality and its applications, we refer readers to \cite{scarlett2019introductory}.

\begin{lem}[Fano's inequality (Uniform input distribution)]
\label{lem:fano}
Let $X, Y \in \mathcal{X}$ denote the (hidden) input and (observed) output.
Given $Y$, let $\widehat{X} \in \mathcal{X}$ be the estimated version of $X$ by \emph{any} estimator, and $P_e = \bbP[X \neq \widehat{X}]$ be the event that the estimation is wrong.
If $X$ is uniformly distributed over $\mathcal{X}$, then
\[
P_e \geq 1 - \frac{I(X;\widehat{X}) + 1}{\log \Abs{ \mathcal{X} }}
\]
where $I(X;\widehat{X}) = H(X) - H(X \mid \widehat{X})$ is the mutual information function.
\end{lem}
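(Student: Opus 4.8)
The plan is to reproduce the classical entropy-counting argument behind Fano's inequality. Introduce the error indicator random variable $\pmb E$, equal to $1$ when $X \neq \widehat{X}$ and $0$ otherwise; note that $\pmb E$ is a deterministic function of the pair $(X, \widehat{X})$, so that $\Pr[\pmb E = 1] = P_e$ and $\pmb E$ is Bernoulli. The starting point is to expand $H(\pmb E, X \mid \widehat{X})$ in two different ways via the chain rule for conditional entropy:
\[
H(X \mid \widehat{X}) + H(\pmb E \mid X, \widehat{X}) \;=\; H(\pmb E, X \mid \widehat{X}) \;=\; H(\pmb E \mid \widehat{X}) + H(X \mid \pmb E, \widehat{X})\,.
\]

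Next I would estimate each term. Since $\pmb E$ is a deterministic function of $(X, \widehat{X})$, we have $H(\pmb E \mid X, \widehat{X}) = 0$. Since $\pmb E$ takes only two values, $H(\pmb E \mid \widehat{X}) \leq H(\pmb E) \leq 1$. For the remaining term, condition on the value of $\pmb E$: on the event $\pmb E = 0$ we have $X = \widehat{X}$, so the conditional entropy is $0$; on the event $\pmb E = 1$, knowing $\widehat{X}$ leaves $X$ ranging over at most $\Abs{\mathcal{X}} - 1$ values, hence contributes at most $\log \Abs{\mathcal{X}}$. Averaging over $\pmb E$ gives $H(X \mid \pmb E, \widehat{X}) \leq P_e \log \Abs{\mathcal{X}}$. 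Substituting these bounds into the identity yields $H(X \mid \widehat{X}) \leq 1 + P_e \log \Abs{\mathcal{X}}$.

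Finally I would invoke the hypothesis that $X$ is uniform on $\mathcal{X}$, so $H(X) = \log \Abs{\mathcal{X}}$ and therefore
\[
I(X; \widehat{X}) \;=\; H(X) - H(X \mid \widehat{X}) \;\geq\; \log \Abs{\mathcal{X}} - 1 - P_e \log \Abs{\mathcal{X}}\,.
\]
Rearranging this inequality to isolate $P_e$ produces exactly the claimed bound $P_e \geq 1 - \frac{I(X; \widehat{X}) + 1}{\log \Abs{\mathcal{X}}}$. There is no genuine obstacle here; the only place to be slightly careful is the case analysis bounding $H(X \mid \pmb E, \widehat{X})$, where the count of $\Abs{\mathcal{X}} - 1$ possible values on the error event is relaxed to $\log \Abs{\mathcal{X}}$ to obtain the stated clean form (one could alternatively keep $\log(\Abs{\mathcal{X}} - 1)$ for a marginally sharper bound).
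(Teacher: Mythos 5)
Your proof is correct: it is the classical derivation of Fano's inequality via the error indicator $\pmb E$, the chain rule applied to $H(\pmb E, X \mid \widehat{X})$, the bounds $H(\pmb E \mid X,\widehat{X})=0$, $H(\pmb E \mid \widehat{X})\leq 1$, and $H(X \mid \pmb E,\widehat{X})\leq P_e \log\Abs{\mathcal{X}}$, followed by the uniformity assumption $H(X)=\log\Abs{\mathcal{X}}$ and a rearrangement. The paper itself states this lemma without proof, as a standard fact from information theory (pointing to introductory references), so there is no in-paper argument to compare against; your write-up is exactly the canonical textbook proof the authors implicitly rely on, and the only implicit assumptions worth noting are $\Abs{\mathcal{X}}\geq 2$ (so the division by $\log\Abs{\mathcal{X}}$ is legitimate) and that the entropy of a binary variable is at most $1$, which holds with logarithms taken base $2$.
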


The following inequality makes the Fano's inequality more user-friendly since it replaces the $I(X;\widehat{X})$ term with $I(X;Y)$.
In statistical learning, it is typically easier to bound $I(X;Y)$ as we know how $Y$ is generated given $X$.

\begin{lem}[Data processing inequality]
\label{lem:data-processing}
Suppose variables $X$, $Y$ and $\widehat{X}$ form a Markov chain relation $X \rightarrow Y \rightarrow \widehat{X}$.
That is, $X$ and $\widehat{X}$ are independent given $Y$.
Then, $I(X ; Y) \geq I(X ; \widehat{X})$.
\end{lem}

\subsection{Low-degree method}
\label{sec:low-degree-method}

The low-degree likelihood ratio is a proxy to model efficiently computable functions.
It is closely related to the pseudo-calibration technique and it has been developed in a recent line of work on the Sum-of-Squares hierarchy (\cite{barak2019nearly,hopkins2017efficient,hopkins2017power,hopkins2018statistical}).
In this section, we will only introduce the basic idea and encourage interested readers to see \cite{hopkins2018statistical, conf/innovations/BandeiraKW20} for further details.

The objects of study are distinguishing versions of planted problems: given two distributions and an instance, the goal is to decide from which distribution the instance was sampled.
For us, the distinguishing formulation takes the form of deciding whether the tensor $\pmb Y$ was sampled according to the (planted) distribution as described in \cref{def:sstm}, or if it was sampled from the (null) distribution where  $\pmb W \in \otimes^p \R^n$ has  i.i.d.\ entries sampled from $N(0,1)$.
In general, we denote with $\nu$ the null distribution and with $\mu$ the planted distribution with the hidden structure.

\subsubsection{Background on Classical Decision Theory}
\label{sec:background-decision-theory}
From the point of view of classical Decision Theory, the optimal algorithm to distinguish between two distribution is well-understood. Given distributions $\nu$ and $\mu$ on a measurable space $\cS$, the likelihood ratio $L(\pmb Y):=d\bbP_\mu(\pmb Y)/d\bbP_\nu(\pmb Y)$\footnote{The Radon-Nikodym derivative.} is the optimal function to distinguish whether $\pmb Y\sim \nu$  or $\pmb Y\sim \mu$ in the following sense.
\begin{proposition}[\cite{neymanpearson}]
	If $\nu$ is absolutely continuous with respect to $\mu$, then the unique solution of the optimization problem
	\begin{align*}
	\max \E_\mu\Brac{f(\pmb Y)} \qquad \text{subject to }\E_\nu\Brac{f^2( \pmb Y)}=1
	\end{align*}
	is the normalized likelihood ratio $L(\pmb Y)/\E_\nu\Brac{L(\pmb Y)^2}$ and the optimum value is $\E_\nu\Brac{L(\pmb Y)^2}$.
\end{proposition}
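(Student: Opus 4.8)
The plan is to recognize the optimization problem as maximizing a linear functional over the unit sphere of the Hilbert space $L^2(\cS,\nu)$, and then invoke the Cauchy--Schwarz inequality together with its equality condition.

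First I would rewrite the objective via a change of measure. Since $L(\pmb Y) = d\bbP_\mu/d\bbP_\nu(\pmb Y)$ is the Radon--Nikodym derivative, for every measurable $f$ we have $\E_\mu[f(\pmb Y)] = \E_\nu[L(\pmb Y)\, f(\pmb Y)]$. Equipping the space of functions that are square-integrable with respect to $\nu$ with the inner product $\iprod{f,g}_\nu := \E_\nu[f(\pmb Y)g(\pmb Y)]$ and induced norm $\Norm{f}_\nu = \sqrt{\E_\nu[f^2(\pmb Y)]}$, the problem becomes: maximize $\iprod{L,f}_\nu$ subject to $\Norm{f}_\nu = 1$.

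Next I would apply Cauchy--Schwarz: $\iprod{L,f}_\nu \leq \Norm{L}_\nu \Norm{f}_\nu = \Norm{L}_\nu$, with equality if and only if $f$ is a nonnegative scalar multiple of $L$, $\nu$-almost everywhere. The normalization constraint $\Norm{f}_\nu = 1$ then pins the scalar down uniquely, so that $f = L/\Norm{L}_\nu$ is the unique maximizer and the optimal value equals $\Norm{L}_\nu = \sqrt{\E_\nu[L(\pmb Y)^2]}$ (equivalently, the quantity $\E_\nu[L(\pmb Y)^2]$ under the normalization convention adopted in the statement).

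The only genuine subtlety --- the step deserving care rather than routine verification --- is well-posedness: one must observe that $L \in L^2(\cS,\nu)$, since otherwise the supremum is $+\infty$ and no maximizer exists; and one should note that $L$ is not $\nu$-almost-everywhere zero (indeed $\E_\nu[L(\pmb Y)] = \bbP_\mu(\cS) = 1$), so that $\Norm{L}_\nu > 0$ and the division defining the normalized likelihood ratio is legitimate. The change-of-measure identity used in the first step presupposes $\bbP_\mu \ll \bbP_\nu$, which I would record as the standing assumption under which $L$ is defined, consistent with the setting in which the low-degree likelihood ratio is analyzed later in the paper.
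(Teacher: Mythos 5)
Your proof is correct, and it is the standard argument: rewrite $\E_\mu[f(\pmb Y)]=\E_\nu[L(\pmb Y)f(\pmb Y)]$ via the Radon--Nikodym derivative, view the problem as maximizing a linear functional over the unit sphere of $L^2(\nu)$, and conclude by Cauchy--Schwarz together with its equality case; the paper itself states this proposition as a cited classical fact (Neyman--Pearson) and supplies no proof, so there is nothing internal to compare against. You also correctly flag the two points where the paper's statement is imprecise: the hypothesis should be $\bbP_\mu \ll \bbP_\nu$ (not the reverse) for $L = d\bbP_\mu/d\bbP_\nu$ to exist, and under the constraint $\E_\nu[f^2(\pmb Y)]=1$ the maximizer is $L(\pmb Y)/\sqrt{\E_\nu[L(\pmb Y)^2]}$ with optimum value $\sqrt{\E_\nu[L(\pmb Y)^2]}$, so the normalization written in the proposition should be read under that convention (and one must assume $\E_\nu[L(\pmb Y)^2]<\infty$, as you note, since otherwise the supremum is infinite). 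These observations strengthen rather than weaken the argument; no gaps.
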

Arguments about statistical distinguishability are also well-understood.
Unsurprisingly, the likelihood ratio plays a major role here as well and a key concept is Le Cam's contiguity. 
\begin{definition}[\cite{leCam}]
	Let $\underline{\mu}=\Paren{\mu_n}_{n\in \N}$ and $\underline{\nu}=\Paren{\nu_n}_{n\in \N}$ be sequences of probability measures on a common probability space $\cS_n$. Then $\underline{\mu}$ and $\underline{\nu}$ are \textit{contiguous}, written $\underline{\mu} \triangleleft \underline{\nu}$, if as $n\rightarrow \infty $, whenever for $A_n\in \cS_n$, $\bbP_{\underline{\mu}}(A_n)\rightarrow 0$ then $\bbP_{\underline{\nu}}(A_n)\rightarrow 0$.
\end{definition}
Contiguity allows us to capture the idea of indistinguishability of probability measures.
Two contiguous sequences $\underline{\mu},\underline{\nu}$ of probability measures are said to be indistinguishable if there is no function $f:\cS_n \rightarrow \Set{0,1}$ such that $f(\pmb Y)=1$ with high probability whenever $\pmb Y \sim \underline{\mu}$ and $f(\pmb Y)=0$ with high probability whenever $\pmb Y \sim \underline{\nu}$.
The \emph{second moment method} allows us to establish contiguity through the likelihood ratio.
\begin{proposition}
	If $\E_\nu\Brac{L_n(\pmb Y)^2}$ remains bounded as $n\rightarrow \infty$, then $\underline{\mu} \triangleleft\underline{\nu}$.
\end{proposition}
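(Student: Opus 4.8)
The plan is to prove this classical second--moment (Le Cam) criterion directly from the definition of contiguity. The central object is the likelihood ratio $L_n(\pmb Y) = d\bbP_{\mu_n}(\pmb Y)/d\bbP_{\nu_n}(\pmb Y)$, regarded as an element of $L^2(\nu_n)$: the hypothesis says precisely that $C := \sup_n \E_{\nu_n}\Brac{L_n^2} < \infty$, and combining this with the normalization $\E_{\nu_n}\Brac{L_n} = 1$ will convert the $L^2$ bound into the ``uniform absolute continuity'' that contiguity demands.

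First I would record the measure-theoretic hypothesis implicit in the statement, namely that $\mu_n \ll \nu_n$ for each $n$, so that $L_n$ is a genuine Radon--Nikodym derivative; this gives $\bbP_{\mu_n}(A) = \E_{\nu_n}\Brac{L_n \cdot \mathbbm{1}_A}$ for every measurable $A \subseteq \cS_n$ and, in particular, $\E_{\nu_n}\Brac{L_n} = \bbP_{\mu_n}(\cS_n) = 1$. (In all of our applications $\mu_n$ and $\nu_n$ both have densities with respect to Lebesgue measure on $\otimes^p\R^n$, so this holds automatically.)

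Next I would carry out the one-line computation at the heart of the argument. Let $A_n \in \cS_n$ be an arbitrary sequence of measurable sets with $\bbP_{\nu_n}(A_n) \to 0$; by the definition of contiguity it then suffices to show $\bbP_{\mu_n}(A_n) \to 0$. Applying the Cauchy--Schwarz inequality under $\nu_n$,
\[
\bbP_{\mu_n}(A_n) = \E_{\nu_n}\Brac{L_n \cdot \mathbbm{1}_{A_n}} \leq \sqrt{\E_{\nu_n}\Brac{L_n^2}} \cdot \sqrt{\E_{\nu_n}\Brac{\mathbbm{1}_{A_n}^2}} \leq \sqrt{C} \cdot \sqrt{\bbP_{\nu_n}(A_n)} \xrightarrow[n \to \infty]{} 0.
\]
This is exactly the assertion $\underline{\mu} \triangleleft \underline{\nu}$. (Equivalently, one may observe that $\sup_n \E_{\nu_n}\Brac{L_n^2} < \infty$ renders the family $\{L_n\}$ uniformly integrable under $\nu_n$ and appeal to the Vitali convergence theorem; Cauchy--Schwarz is simply the shortest route.)

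I do not expect a real obstacle here: the only point that deserves attention is the absolute-continuity hypothesis $\mu_n \ll \nu_n$. It is not merely cosmetic --- if $\mu_n$ carried a non-vanishing part singular with respect to $\nu_n$, contiguity would fail outright, since taking $A_n$ to be a $\nu_n$-null set supporting that singular mass yields $\bbP_{\nu_n}(A_n) = 0$ while $\bbP_{\mu_n}(A_n) \not\to 0$. Once this is in place the proof is just the displayed application of Cauchy--Schwarz together with the normalization $\E_{\nu_n}\Brac{L_n} = 1$, and no further estimates are needed.
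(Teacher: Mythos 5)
Your proof is correct, and it is exactly the classical second-moment argument that the paper implicitly relies on: the paper states this proposition as background from Le Cam theory without giving a proof, and the Cauchy--Schwarz step $\bbP_{\mu_n}(A_n)=\E_{\nu_n}\Brac{L_n\mathbbm{1}_{A_n}}\le\sqrt{\E_{\nu_n}\Brac{L_n^2}}\sqrt{\bbP_{\nu_n}(A_n)}$, together with the absolute continuity $\mu_n\ll\nu_n$ that makes $L_n$ a genuine Radon--Nikodym derivative, is the standard route. One remark on direction: what you prove is the implication ``$\bbP_{\nu_n}(A_n)\to0\Rightarrow\bbP_{\mu_n}(A_n)\to0$'', which is the standard meaning of $\underline{\mu}\triangleleft\underline{\nu}$ and certainly what is intended; the paper's definition of contiguity as printed swaps $\mu$ and $\nu$ in the implication, and under that literal reading the proposition would be false (take $\nu_n$ uniform on $[0,1]$ and $\mu_n$ uniform on $[0,1/2]$, so $\E_{\nu_n}\Brac{L_n^2}=2$ stays bounded, yet $A_n=(1/2,1]$ has $\mu_n(A_n)=0$ while $\nu_n(A_n)=1/2$). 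So your reading, your emphasis on $\mu_n\ll\nu_n$, and your proof are all the right ones.
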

This discussion allows us to argue whether a given function can be used to distinguish between our planted and null distributions.
In particular, for probability measures $\mu$ and $\nu$ over $\cS$, and a given function $f:\cS\rightarrow \R$, we can say that $f$ cannot distinguish between $\mu$ and $\nu$ if it satisfies the following bound on the  $\chi^2$-divergence:
\begin{align*}
\frac{\Abs{\E_\mu(f(\pmb Y))-\E_\nu(f(\pmb Y))}}{\sqrt{\Var_\nu(f(\pmb Y))}}\leq o(1).
\end{align*}

\subsubsection{Background on the Low-degree Method}
\label{sec:background-low-degree-method}
The main problem with the likelihood ratio is that it is hard to compute in general, thus the analysis has to be restricted to the space of efficiently computable functions. Concretely, we use low-degree multivariate polynomials in the entries of the observation $\pmb Y$ as a proxy for efficiently computable functions.
By denoting the space of degree $\leq D$ polynomials in $\pmb Y$ with $\R_{\leq D}[\pmb Y]$, we can establish a low-degree version of the Neyman-Pearson lemma.
\begin{proposition}[e.g.\ \cite{hopkins2018statistical}]
	The unique solution of the optimization problem
	\begin{align*}
	\underset{f\in \R_{\leq D}[\pmb Y]}{\max} \E_\mu\Brac{f(\pmb Y)} \qquad \text{subject to }\E_\nu\Brac{F(\pmb Y)}=1
	\end{align*}
	is the normalized orthogonal projection $L^{\leq D}(\pmb Y)/\E_\nu\Brac{L^{\leq D}(\pmb Y)^2}$ of the likelihood ratio $L(\pmb Y)$ onto $\R_{\leq D}[\pmb Y]$ and the value of the optimization problem is $\E_\nu\Brac{L^{\leq D}(\pmb Y)^2}$.
\end{proposition}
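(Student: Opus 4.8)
The plan is to recognize the optimization as a Cauchy--Schwarz problem inside the Hilbert space $L^2(\nulld)$ of functions that are square-integrable against the null distribution, equipped with the inner product $\iprod{f,g}_{\nulld} := \En\Brac{f(\pmb Y)\, g(\pmb Y)}$. Throughout I assume (as holds for the Gaussian null/planted pair of \cref{def:sstm}) that $\planted$ is absolutely continuous with respect to $\nulld$, so that the likelihood ratio $L = d\bbP_{\planted}/d\bbP_{\nulld}$ is well defined, and that all second moments appearing below are finite. The decisive structural fact is that $\R_{\leq D}[\pmb Y]$ is a finite-dimensional --- hence closed --- subspace of $L^2(\nulld)$; write $\Pi$ for the orthogonal projection onto it, which is then well defined and self-adjoint, and note that $L^{\leq D} = \Pi L$ by definition.

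First I would establish the change-of-measure identity $\Ep\Brac{f(\pmb Y)} = \iprod{f, L}_{\nulld}$, valid for every $f \in L^2(\nulld)$, which is just $\int f \, d\bbP_{\planted} = \int f L \, d\bbP_{\nulld}$. Next, for $f \in \R_{\leq D}[\pmb Y]$ I use $\Pi f = f$ together with self-adjointness of $\Pi$ to rewrite the objective:
\[
\Ep\Brac{f(\pmb Y)} = \iprod{\Pi f,\, L}_{\nulld} = \iprod{f,\, \Pi L}_{\nulld} = \iprod{f,\, L^{\leq D}}_{\nulld}\,.
\]
Equivalently, $L - L^{\leq D}$ is orthogonal to every degree-$\leq D$ polynomial, so the objective sees $L$ only through its projection $L^{\leq D}$.

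Finally, I would apply Cauchy--Schwarz to the reformulated objective: under the normalizing constraint $\En\Brac{f(\pmb Y)^2} = \Norm{f}_{\nulld}^2 = 1$ (which I read as the intended constraint, matching the classical statement above),
\[
\Ep\Brac{f(\pmb Y)} = \iprod{f,\, L^{\leq D}}_{\nulld} \leq \Norm{f}_{\nulld}\cdot \Norm{L^{\leq D}}_{\nulld} = \Norm{L^{\leq D}}_{\nulld}\,,
\]
with equality if and only if $f$ is a nonnegative scalar multiple of $L^{\leq D}$. Since $L^{\leq D}$ itself lies in $\R_{\leq D}[\pmb Y]$ and is nonzero (its degree-$0$ component equals $\En\Brac{L(\pmb Y)} = 1$), the vector $f^\star = L^{\leq D}/\Norm{L^{\leq D}}_{\nulld}$ is feasible and attains the bound; it is the unique maximizer once the norm constraint pins down the scalar, and the optimal value is $\Norm{L^{\leq D}}_{\nulld} = \sqrt{\En\Brac{L^{\leq D}(\pmb Y)^2}}$ --- that is, the normalized projection and value claimed, up to the normalization convention fixed in the proposition statement.

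I do not expect a genuine obstacle here: once the $L^2(\nulld)$ setup is in place the argument is purely linear-algebraic. The only points requiring care are measure-theoretic bookkeeping --- verifying $\planted \ll \nulld$, checking that $L$ (equivalently $L^{\leq D}$) lies in $L^2(\nulld)$ so that $\Pi L$ and all the inner products above are finite, and justifying that the change-of-measure identity applies to the polynomials at hand --- which for the Gaussian model of \cref{def:sstm} is routine and is in any case made quantitative by the explicit second-moment computations carried out in the low-degree analysis that follows.
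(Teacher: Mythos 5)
The paper offers no proof of this proposition (it is stated as background and attributed to \cite{hopkins2018statistical}), and your argument is exactly the standard one used there and in the proof of \cref{proposition:optimal_polynomial_statistic} quoted from \cite{dOrsi2020}: write $\Ep\Brac{f(\pmb Y)}=\En\Brac{f(\pmb Y)L(\pmb Y)}$, pass to the projection $L^{\leq D}$ via self-adjointness of the projection onto the finite-dimensional subspace, and finish with Cauchy--Schwarz and its equality case; your proof is correct, including the uniqueness step and the observation that $L^{\leq D}\neq 0$ because its constant component is $\En\Brac{L(\pmb Y)}=1$. The only mismatch is the normalization convention, which you already flag: under the constraint $\En\Brac{f(\pmb Y)^2}=1$ the maximizer is $L^{\leq D}/\Paren{\En\Brac{L^{\leq D}(\pmb Y)^2}}^{1/2}$ with optimal value $\Paren{\En\Brac{L^{\leq D}(\pmb Y)^2}}^{1/2}$, and the paper's statement without the square root corresponds to the squared objective $\max\,\Paren{\Ep f}^2/\En f^2$ as in \cref{corollary:optimal_polynomial_statistic_hermite}, so the discrepancy lies in the proposition's phrasing rather than in your argument.
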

With the reasoning above in mind, it is then natural to argue that a polynomial $p(\pmb Y)\in \R_{\leq D}[\pmb Y]$ cannot distinguish between $\mu$ and $\nu$ if 
\begin{align}\label{eq:low-degree-chi-squared}
\frac{\Abs{\E_\mu(p(\pmb Y))-\E_\nu(p(\pmb Y))}}{\sqrt{\Var_\nu(p(\pmb Y))}}\leq o(1).
\end{align}
It is important to remark that, at the heart of our discussion, there is the belief that in the study of planted problems, low-degree polynomials capture the computational power of efficiently computable functions. This can be phrased as the following conjecture.
\begin{conjecture}[Informal\protect\footnotemark]
\label{con:low-degree-polynomials}
\footnotetext{See \cite{barak2019nearly,hopkins2017efficient,hopkins2017power,hopkins2018statistical}.}
For ``nice'' sequences of probability measures $\underline{\mu}$ and $\underline{\nu}$, if there exists $D=D(d)\geq O\Paren{\log d}$ for which $\E_\nu\Brac{L^{\leq D}(\pmb Y)^2}$ remains bounded as $d\rightarrow \infty$, then there is no polynomial-time algorithm that distinguishes in the sense described in \cref{sec:background-decision-theory}\footnote{We do not explain what "nice" means (e.g.\ see \cite{hopkins2018statistical}) and remark that the most general formulation of the conjecture above (i.e.\ a broad definition of "nice" distributions) has been rejected (\cite{holmgren2020counterexamples}).}.
\end{conjecture}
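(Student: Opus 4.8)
Since \cref{con:low-degree-polynomials} is stated as a \emph{conjecture} rather than a theorem, there is no proof to supply: it is a heuristic principle --- a formalization of the empirical observation that, for natural planted problems, low-degree polynomials are as powerful as arbitrary polynomial-time algorithms --- and establishing it unconditionally would in particular settle long-standing questions in average-case complexity. What one can do, and what the references \cite{barak2019nearly,hopkins2017efficient,hopkins2017power,hopkins2018statistical} do, is assemble evidence rather than a proof. The plan for such an argument is threefold. First, one checks the principle against a catalogue of well-studied models --- planted clique, community detection, tensor PCA, sparse PCA, spiked Wishart/Wigner --- verifying in each case that the smallest degree $D$ at which $\E_\nu\Brac{L^{\leq D}(\pmb Y)^2}$ fails to remain bounded matches, up to the usual logarithmic factors, the best known algorithmic threshold; the present paper contributes exactly such a check for \ref{eq:generic-sparse-tensor-pca} via \cref{thm:main-lower-bound-informal}. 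Second, one relates the low-degree model to other restricted computational frameworks (Sum-of-Squares via pseudo-calibration, statistical query algorithms, the general framework of \cite{conf/innovations/BandeiraKW20}) and transfers lower bounds between them, so that the prediction is not an artifact of a single proof system. Third, one isolates the regularity --- ``niceness'' --- hypotheses on $\underline{\mu}$ and $\underline{\nu}$ (sufficient symmetry, no planted cryptographic or coding structure, $D$ taken mildly super-logarithmic) under which no counterexample is known.

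The genuine obstacle is that the conjecture in its fullest generality is false: as the footnote records, \cite{holmgren2020counterexamples} exhibits distributions for which a polynomial-time algorithm distinguishes $\underline{\mu}$ from $\underline{\nu}$ yet no low-degree polynomial does. Consequently the honest ``plan'' is not to prove the statement but to (i) invoke it only in the restricted regime relevant here, where the hidden structure is a sparse spiked tensor and the symmetries are benign, and (ii) emphasize that \cref{thm:main-lower-bound-informal} is itself an \emph{unconditional} statement about degree-$D$ polynomials --- its interpretation as computational hardness rests on \cref{con:low-degree-polynomials} in precisely the same way as the low-degree lower bounds for sparse PCA and tensor PCA that it generalizes.
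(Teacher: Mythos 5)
You are right that this statement is a conjecture which the paper does not (and cannot) prove; the paper itself offers no proof, only citations to supporting evidence and the caveat that the fully general form has been refuted by \cite{holmgren2020counterexamples}, which matches your assessment. Your account of the role the conjecture plays --- namely that \cref{thm:main-lower-bound-informal} is an unconditional statement about degree-$D$ polynomials whose hardness interpretation rests on this heuristic --- is consistent with how the paper treats it.
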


A large body of work support this conjecture (see citations mentioned) by providing evidence of an intimate relation between polynomials, sum of squares algorithms, and lower bounds.

\subsubsection{Chi-squared divergence and orthogonal polynomials}
\label{sec:chi-square-divergence}
From a technical point of view, the key observation used to prove bounds for low-degree polynomials is the fact that the polynomial which maximizes the ratio in \cref{eq:low-degree-chi-squared} has a convenient characterization in terms of orthogonal polynomials with respect to the null distribution.

Formally, for any linear subspace of polynomials $\cS_{\le D} \subseteq \lowdegpolys{D}$ and any absolutely continuous probability distribution $\nu$ such that all polynomials of degree at most $2D$ are $\nu$-integrable, one can define an inner product in the space 
$\cS_{\le D}$ as follows
\[
\forall p, q \in \cS_{\le D}\quad \Iprod{p,q} = \E_{\pmb Y\sim\nu} p(\pmb Y)q(\pmb Y)\,.
\]
Hence we can talk about orthonormal basis in $\cS_{\le D}$ with respect to this inner product.
\begin{proposition}[See \cite{dOrsi2020} for a proof]
\label{proposition:optimal_polynomial_statistic}
Let $\cS_{\le D} \subseteq \lowdegpolys{D}$ be a linear subspace of polynomials of dimension $N$.
Suppose that $\nu$ and $\mu$ are probability distributions over $\pmb Y \in \R^{n\times d}$ such that any polynomial of degree at most $D$ is $\mu$-integrable and any polynomial of degree at most $2D$ is $\nu$-integrable.
Suppose also that $\nu$ is absolutely continuous. 
Let $\{\psi_i(\pmb Y)\}_{i=1}^{N}$ be an orthonormal basis in $\cS_{\le D}[\pmb Y]$  with respect to $\nu$. Then
\begin{equation*}
\underset{p \in\cS_{\le D}}{\max}
\frac{\Paren{\Ep p(\pmb Y)}^2}{\En p^2(\pmb Y)}=
\sum_{i=1}^{N}\Paren{\Ep\psi_i}^2.
\end{equation*}
\end{proposition}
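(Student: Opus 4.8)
The plan is to expand an arbitrary $p \in \cS_{\le D}$ in the given $\nu$-orthonormal basis $\{\psi_i\}_{i=1}^N$ and thereby reduce the optimization to a finite-dimensional problem that is solved by Cauchy--Schwarz. Write $p = \sum_{i=1}^N c_i \psi_i$ for a coefficient vector $c \in \R^N$; this is legitimate because $\{\psi_i\}$ is a basis of $\cS_{\le D}$. The integrability hypotheses are exactly what is needed to make all quantities below finite and well-defined: since every polynomial of degree at most $2D$ is $\nu$-integrable, each $\En[\psi_i \psi_j]$ makes sense (so the orthonormality relation and, implicitly, the Gram--Schmidt construction of the $\psi_i$ are valid), and since every polynomial of degree at most $D$ is $\mu$-integrable, each $\Ep[\psi_i]$ is finite.

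First I would compute the denominator. Using $\nu$-orthonormality, $\En p^2(\pmb Y) = \sum_{i,j} c_i c_j \En[\psi_i \psi_j] = \sum_{i=1}^N c_i^2 = \Norm{c}_2^2$. Next the numerator: by linearity of expectation, $\Ep p(\pmb Y) = \sum_{i=1}^N c_i \Ep[\psi_i] = \iprod{c, b}$, where $b \in \R^N$ is the vector with entries $b_i = \Ep[\psi_i]$. Hence the quantity to be maximized is $\iprod{c,b}^2 / \Norm{c}_2^2$ over $c \in \R^N \setminus \set{0}$, i.e. a pure Cauchy--Schwarz problem.

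Then I would invoke Cauchy--Schwarz: $\iprod{c,b}^2 \le \Norm{c}_2^2 \cdot \Norm{b}_2^2$, with equality exactly when $c$ is a scalar multiple of $b$. When $b \neq 0$ this shows the supremum equals $\Norm{b}_2^2 = \sum_{i=1}^N (\Ep \psi_i)^2$ and is attained by $p = \sum_{i=1}^N b_i \psi_i$; when $b = 0$ the ratio is identically $0$ and the claimed value $\Norm{b}_2^2$ is also $0$, so the identity still holds. Either way we obtain $\max_{p \in \cS_{\le D}} \frac{(\Ep p(\pmb Y))^2}{\En p^2(\pmb Y)} = \sum_{i=1}^N (\Ep \psi_i)^2$, as claimed.

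There is no genuine obstacle here: once the $\nu$-orthonormal basis is fixed, the statement is an immediate consequence of Cauchy--Schwarz. The only points deserving a line of care are (a) verifying that the integrability assumptions render every expectation finite and legitimize expansion in the orthonormal basis, and (b) treating the degenerate case $b = 0$ separately so that the equality (and the claim that the maximum is attained) is literally correct.
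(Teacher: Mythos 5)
Your proof is correct, and it is essentially the argument the paper defers to (the proposition is stated with a pointer to \cite{dOrsi2020}, where the proof is exactly this expansion in the $\nu$-orthonormal basis followed by Cauchy--Schwarz, with the integrability and absolute-continuity hypotheses playing the same bookkeeping role you give them). Nothing further is needed.
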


In the case of Gaussian noise, a useful orthonormal basis in $\lowdegpolys{D}$ is the system of Hermite polynomials $\Set{H_\alpha(\pmb Y)}_{\Abs{\alpha} \leq D}$ (see \cref{sec:hermite-polys}).
By applying Proposition \ref{proposition:optimal_polynomial_statistic} to the subspace of polynomials such that $\En p(\pmb Y) = 0$, we get
\begin{corollary}
\label{corollary:optimal_polynomial_statistic_hermite}
Let $\nu$ be Gaussian.
Suppose that the distribution $\mu$ is so that any polynomial of degree at most $D$ is $\mu$-integrable. Then
\[
\underset{p \in \lowdegpolys{D}}{\max}\;
\frac{\Paren{\Ep p(\pmb Y)-\En p(\pmb Y)}^2}
{\Var_\nu p(\pmb Y)}\;
=
\underset{0<\card{\alpha}\le D}{\sum}\Paren{\Ep\hermitepoly{Y}{\alpha}}^2\,.
\]
\end{corollary}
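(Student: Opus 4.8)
The plan is to instantiate \cref{proposition:optimal_polynomial_statistic} with the subspace
\[
\cS_{\le D} = \Set{ p \in \lowdegpolys{D} : \En p(\pmb Y) = 0 }
\]
of $\nu$-centered polynomials of degree at most $D$, identify its natural orthonormal basis coming from the Hermite system, and then reduce the variance-normalized optimization in the statement to the $\En p^2$-normalized optimization appearing in that proposition via an explicit centering bijection. The integrability hypotheses of \cref{proposition:optimal_polynomial_statistic} are immediate here: $\nu$ Gaussian makes every polynomial $\nu$-integrable to all orders, and $\mu$-integrability of polynomials of degree at most $D$ is assumed in the corollary.

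First, I would record the facts about the multivariate Hermite polynomials from \cref{sec:hermite-polys}: writing $\hermitepoly{\pmb Y}{\alpha} = \prod_i h_{\alpha_i}(\pmb Y_i)$ for a multi-index $\alpha$, the family $\Set{\hermitepoly{\pmb Y}{\alpha}}_{\card{\alpha}\le D}$ is orthonormal with respect to the Gaussian $\nu$, with $\En \hermitepoly{\pmb Y}{\alpha} = \mathbbm{1}[\alpha = 0]$ and $\En[\hermitepoly{\pmb Y}{\alpha}\hermitepoly{\pmb Y}{\beta}] = \mathbbm{1}[\alpha = \beta]$. Consequently $\Set{\hermitepoly{\pmb Y}{\alpha}}_{0 < \card{\alpha}\le D}$ is an orthonormal basis of $\cS_{\le D}$: each such $H_\alpha$ has $\nu$-mean zero and hence lies in $\cS_{\le D}$, they are orthonormal, and they span $\cS_{\le D}$, since any $p \in \lowdegpolys{D}$ expands as $p = \sum_{\card{\alpha}\le D} \iprod{p, \hermitepoly{\pmb Y}{\alpha}} \hermitepoly{\pmb Y}{\alpha}$ and the constraint $\En p = 0$ forces the coefficient $\iprod{p, \hermitepoly{\pmb Y}{0}}$ of $\hermitepoly{\pmb Y}{0} = 1$ to vanish. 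Applying \cref{proposition:optimal_polynomial_statistic} with this subspace and basis therefore gives
\[
\underset{p\in\cS_{\le D}}{\max}\frac{(\Ep p)^2}{\En p^2} = \underset{0<\card{\alpha}\le D}{\sum}\Paren{\Ep \hermitepoly{\pmb Y}{\alpha}}^2.
\]

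Second, I would match the left-hand side to the quantity in the corollary by centering. Given any $p \in \lowdegpolys{D}$ with $\Var_\nu p(\pmb Y) > 0$, set $\bar p := p - \En p \in \cS_{\le D}$ (legitimate, as $\En p$ is a scalar); then $\Ep \bar p = \Ep p - \En p$ and $\En \bar p^2 = \Var_\nu \bar p = \Var_\nu p$, since $\bar p$ differs from $p$ by a constant and has $\nu$-mean zero. Hence
\[
\frac{\Paren{\Ep p(\pmb Y) - \En p(\pmb Y)}^2}{\Var_\nu p(\pmb Y)} = \frac{(\Ep \bar p)^2}{\En \bar p^2},
\]
so the supremum over $\lowdegpolys{D}$ on the left is at most that over $\cS_{\le D}$ on the right; conversely every $q \in \cS_{\le D}$ already satisfies $\En q = 0$, so $\Ep q - \En q = \Ep q$ and $\Var_\nu q = \En q^2$, giving the reverse inequality, while the degenerate polynomials with $\Var_\nu p = 0$ (the $\nu$-a.s.\ constants) contribute nothing to the max. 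Chaining the two displays proves the corollary.

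I do not expect a genuine obstacle, as the content is entirely bookkeeping; the one point needing care is confirming that $\Set{\hermitepoly{\pmb Y}{\alpha}}_{0<\card{\alpha}\le D}$ is \emph{exactly} an orthonormal basis of $\cS_{\le D}$ and not merely an orthonormal subset of it, which is precisely where Gaussianity of $\nu$ is used: it makes the Hermite system a complete orthogonal family in $\lowdegpolys{D}$, so deleting $\hermitepoly{\pmb Y}{0}$ leaves exactly the $\nu$-mean-zero subspace, matching the dimension count $\dim \cS_{\le D} = \Card{\Set{\alpha : 0 < \card{\alpha} \le D}}$.
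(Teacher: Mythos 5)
Your proposal is correct and matches the paper's own route: the corollary is obtained exactly by applying \cref{proposition:optimal_polynomial_statistic} to the subspace of polynomials with $\En p(\pmb Y) = 0$, using the Hermite polynomials $\Set{\hermitepoly{\pmb Y}{\alpha}}_{0<\card{\alpha}\le D}$ as the orthonormal basis. Your added centering argument and the dimension check for the basis are just the bookkeeping the paper leaves implicit, so there is nothing to correct.
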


\subsection{Sparse norm bounds}
\label{sec:sparse-norm-bounds}

Denote $\cS^{n-1} = \{ x \in \R^n : \Norm{x}_2 = 1 \}$ as the $n$-dimensional unit sphere and $A \in \R^{m \times n}$ be a matrix.
Then, the matrix norm of $A$ is defined as
\[
\Norm{A}
= \max_{x \in \cS^{n-1}} \Norm{Ax}_2
= \max_{x \in \cS^{m-1}, y \in \cS^{n-1}} x^\top A y
= \max_{x \in \cS^{m-1}, y \in \cS^{n-1}} \sum_{i=1}^m \sum_{j=1}^n A_{i,j} x_i y_j
\]
More generally, the tensor norm of an order $p \geq 2$ tensor $T \in \R^{n_1 \times \ldots \times n_p}$ is defined as
\begin{align*}
\Norm{T}
& = \max_{x_{(1)} \in \cS^{n_1 - 1}, \ldots, x_{(p)} \in \cS^{n_p - 1}} T \left( x_{(1)}, x_{(2)}, \ldots, x_{(p)} \right)\\
& = \max_{x_{(1)} \in \cS^{n_1 - 1}, \ldots, x_{(p)} \in \cS^{n_p - 1}} \sum_{i_1 = 1}^{n_1} \ldots \sum_{i_p = 1}^{n_p} T_{i_1, \ldots, i_p} x_{(1), i_1} x_{(2), i_2} \ldots x_{(p), i_p}
\end{align*}

In the following, let all dimensions be equal (i.e.\ $n = m = n_1 = \ldots = n_p$).
Without any sparsity conditions, it is known\footnote{e.g.\ See \cite[Section 4.4.2]{vershynin2018high}, \cite[Section 4.2.2]{tropp2015introduction} and \cite[Section 2.3.1]{tao2012topics}.} that $\Norm{\pmb A} \leq \cO \Paren{\sqrt{n} + t}$ with high probability for matrix $\pmb A$ with i.i.d.\ standard Gaussian entries.
For tensors, \cite{tomioka2014spectral} proved that $\Norm{\pmb T} \leq \cO \Paren{\sqrt{np \log(p) + \log\Paren{1/\gamma}}}$ with probability at least $1 - \gamma$.
These results are typically proven using $\eps$-net arguments over the unit sphere $\cS^{n-1}$.
However, to the best of our knowledge, there is no known result for bounding the norm of $\pmb A$ or $\pmb T$ when interacting with $r$ distinct (at most) $s$-sparse unit vectors from the set $\cS^{n-1}_s = \{ x \in \R^n : \Norm{x}_2 = 1,\ \Abs{\cI_x} \leq s \leq n \}$.

Using $\eps$-net arguments, we bound $\Abs{\pmb T \Paren{x_{(1)}, \ldots, x_{(p)}}}$ when $x_{(1)}, \ldots, x_{(p)}$ are $r$ distinct vectors from $\cS^{n-1}_s$.
Our result recovers known bounds, up to constant factors, when $s = n$ and $r = p$.

\begin{restatable}[Tensor sparse bound]{lem}{tensorsparsebound}
\label{lem:tensor-sparse-bound}
Let $\pmb T \in \otimes^p\R^n$ be an order $p \geq 2$ tensor with i.i.d.\ standard Gaussian entries and $x_{(1)}, \ldots, x_{(p)} \in \cS^{n - 1}_s$ be $r$ distinct (at most) $s$-sparse unit vectors.
Then, for $1 \leq s \leq n$, $1 \leq r \leq p$, and $\gamma \in (0,1)$,
\[
\bbP \left[ \Abs{\pmb T(x_{(1)}, \ldots, x_{(p)})} \geq \sqrt{8 \cdot \left( 4rs \ln \left( \frac{np}{s} \right) + \ln \left( \frac{1}{\gamma} \right) \right)} \right] \leq 2 \gamma
\]
\end{restatable}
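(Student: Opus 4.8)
The plan is to carry out a standard $\eps$-net union bound, but over the appropriate discretized domain of tuples of $s$-sparse unit vectors rather than over the full sphere. First I would fix the support pattern: there are at most $\binom{n}{s}^r \leq (en/s)^{rs}$ ways to choose the supports $S_1,\dots,S_r \subseteq [n]$ of the $r$ distinct vectors, and the $x_{(i)}$ whose support equals a given $S_j$ live in a copy of $\cS^{s-1}$. On each such copy I would take a $\tfrac14$-net (or any fixed constant $\eps$-net); by the standard volumetric bound (\cref{sec:appendix_preliminaries}, packings/nets) this net has size at most $(1 + 2/\eps)^s = 9^s$. Thus the whole collection of ``net tuples'' $(\tilde x_{(1)}, \dots, \tilde x_{(p)})$ — choosing a support for each of the $p$ slots and then a net point in the corresponding sphere — has size at most $\big( (en/s)^s \cdot 9^s \big)^p$, and taking logarithms this is $\le p s \ln(en/s) + ps\ln 9 \lesssim rs \ln(np/s)$ after absorbing constants and using $r \le p$; I would be slightly careful to match the exact constant $4rs\ln(np/s)$ claimed, which just amounts to choosing the net fineness and bookkeeping the $\ln 9$ and $\ln e$ terms generously.

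Second, for a \emph{fixed} tuple of unit vectors, $\pmb T(x_{(1)},\dots,x_{(p)}) = \sum_{i_1,\dots,i_p} \pmb T_{i_1\dots i_p} \prod_j x_{(j),i_j}$ is a linear combination of i.i.d.\ $N(0,1)$ entries with coefficient vector of $\ell_2$-norm $\prod_j \Norm{x_{(j)}}_2 = 1$, hence it is exactly $N(0,1)$. A standard Gaussian tail bound gives $\bbP[|\pmb T(x_{(1)},\dots,x_{(p)})| \ge \tau] \le 2 e^{-\tau^2/2}$. Union-bounding over the net and choosing $\tau^2/2$ to beat $\ln(\text{net size}) + \ln(1/\gamma)$ — concretely $\tau = \sqrt{2(\ln(\text{net size}) + \ln(1/\gamma) + \ln 2)}$ — yields that with probability at least $1 - 2\gamma$, \emph{every net tuple} has $|\pmb T(\cdot)| \lesssim \sqrt{rs\ln(np/s) + \ln(1/\gamma)}$.

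Third, I would pass from the net back to arbitrary $s$-sparse unit tuples by the usual one-slot-at-a-time approximation argument. Given arbitrary $x_{(1)},\dots,x_{(p)} \in \cS^{n-1}_s$ with supports $S_1,\dots,S_r$, each $x_{(j)}$ is within $\tfrac14$ (in $\ell_2$) of some net point $\tilde x_{(j)}$ on the sphere supported on the same $S_j$. Writing $M := \max_{\text{net tuples}} |\pmb T(\cdot)|$ and replacing the slots one at a time, multilinearity gives $|\pmb T(x_{(1)},\dots,x_{(p)}) - \pmb T(\tilde x_{(1)},\dots,\tilde x_{(p)})| \le \sum_{j} |\pmb T(\tilde x_{(1)},\dots,\tilde x_{(j-1)}, x_{(j)}-\tilde x_{(j)}, x_{(j+1)},\dots,x_{(p)})|$. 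Each term of this telescoping sum, after normalizing the difference vector $x_{(j)}-\tilde x_{(j)}$ to unit length (which costs a factor $\le \tfrac14$) and the remaining $x_{(\ell)}$ (which are already unit), is $\tfrac14$ times $\pmb T$ evaluated on a tuple of $s$-sparse unit vectors supported on $S_1,\dots,S_r$ — i.e.\ bounded by the supremum $\sup_{\text{$s$-sparse unit tuples}} |\pmb T(\cdot)| =: \Lambda$. Hence $\Lambda \le M + \tfrac{p}{4}\Lambda$... which is the wrong direction when $p \ge 4$; the fix is the usual one — use a finer net, $\eps = 1/(4p)$ or simply $\eps$ small enough that $p\eps \le 1/2$, so that $\Lambda \le M + \tfrac12 \Lambda$ and therefore $\Lambda \le 2M$. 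A finer net only changes the net-size logarithm by an additive $ps \ln(8p)$-type term, which is still $\lesssim rs\ln(np/s)$ in the relevant regime $p \le n$; here one again has to be a little careful to land on exactly the constant $8$ inside the square root and the factor $4$ in front of $rs\ln(np/s)$, but this is pure constant-chasing.

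The main obstacle is purely bookkeeping: making the net-cardinality bound, the Gaussian tail, and the net-to-sphere slack all fit under the clean target $\sqrt{8(4rs\ln(np/s) + \ln(1/\gamma))}$ with the stated constants, and in particular choosing the net fineness $\eps$ small enough (as a function of $p$) to make the telescoping approximation contract while not blowing up the $\ln(1/\eps)$ contribution to the entropy term. There is no conceptual difficulty — the only genuinely non-routine point is remembering that the one-at-a-time replacement requires $p\eps < 1$, not just $\eps < 1$, which is why the net must depend on $p$.
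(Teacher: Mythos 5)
Your overall strategy is the same as the paper's (discretize the sparse unit vectors, use that $\pmb T(x_{(1)},\ldots,x_{(p)})$ is exactly $N(0,1)$ for a fixed tuple, union bound over the net, then pass back to the continuum via a self-bounding approximation with net fineness $\eps \sim 1/p$), and your observation that the one-slot-at-a-time replacement needs $p\eps \le 1/2$ is exactly the role played by the paper's choice $\eps = \tfrac{\ln 2}{4p}$. But there is a genuine gap in your entropy bookkeeping, and it sits precisely at the point the lemma is about. You count net tuples by choosing a support and a net point \emph{independently for each of the $p$ slots}, getting $\bigl((en/s)^s\cdot 9^s\bigr)^p$, i.e.\ $\ln(\#\text{net tuples}) \approx p s\ln(en/s) + ps\ln 9$, and then assert this is $\lesssim rs\ln(np/s)$ ``using $r\le p$''. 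That inequality goes the wrong way: $r\le p$ lets you bound $rs$ by $ps$, not the reverse. For example with $r=1$ and $p = n^{0.1}$ your entropy is $\Theta(ps\ln(n/s))$, which is polynomially larger than $s\ln(np/s)$; carried through, your argument only proves the lemma with $r$ replaced by $p$. That weaker statement is not enough for the paper: the lemma is applied with $(r,s)=(1,t)$ to get $\max_{u\in U_t}\Abs{\iprod{\pmb W,\tensorpower{u}{p}}}\lesssim\sqrt{t\ln n}$, and the absence of any polynomial-in-$p$ factor there is exactly the improvement over \cite{DBLP:journals/corr/abs-2005-10743} that the paper advertises.

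To get the $rs$ dependence you must union bound over only $N(\cS^{n-1}_s,\eps)^r$ choices — one net point per \emph{distinct} vector, as the paper does — and then your telescoping step runs into a second problem your proposal does not address: after replacing one slot by the normalized difference $x_{(j)}-\tilde x_{(j)}$, the resulting tuple generally has more than $r$ distinct vectors, so the supremum $\Lambda$ you recurse on is no longer over the class your net controls; defining $\Lambda$ over all $s$-sparse tuples (so that the recursion closes) throws you back to $ps$-type entropy. The paper handles the continuum-to-net passage differently: it substitutes $x^*_{(z)} = x_{(z)}+\delta^{(1)}_{(z)}+\delta^{(2)}_{(z)}$ in all slots simultaneously (splitting each at most $2s$-sparse difference into two $s$-sparse halves, since its net is over all $s$-sparse vectors rather than support-by-support), expands the multilinear form, and absorbs all cross terms into a factor $e^{2p\eps}-1$ times a maximum over sparse tuples, which with $\eps=\tfrac{\ln 2}{4p}$ yields the factor $2$ and the stated constants. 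Your support-first netting is a nice simplification of the $\delta$-splitting, but as written the proposal does not deliver the claimed bound with the $4rs\ln(np/s)$ entropy term; this is a substantive issue, not constant-chasing.
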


\begin{proof}[Proof sketch of \cref{lem:tensor-sparse-bound} (described for $p=2$)]
We fix an $\eps$-net $\cN(\cS^{n-1}_s)$ of size $N(\cS^{n-1}_s, \eps)$ over sparse vectors, bound the norm for an arbitrary point in $\cN(\cS^{n-1}_s) \times \cN(\cS^{n-1}_s)$, and apply union bound over all points in $\cN(\cS^{n-1}_s) \times \cN(\cS^{n-1}_s)$.
Since $\cS^{n-1}_s \times \cS^{n-1}_s$ is compact, there is a maximizing point that attains the norm.
We complete the proof by relating the maximizer (and hence the norm) to its closest point in $\cN(\cS^{n-1}_s) \times \cN(\cS^{n-1}_s)$.
See \cref{sec:sparse-norm-bounds-proof} for the formal proof.
\end{proof}

\section{Limited Brute Force Recovery Algorithm}
\label{sec:appendix-limited-brute-force}

Following the discussions from \cref{sec:techniques-recovery}. the main goal of this section is to prove the most general form of our algorithmic result\footnote{\cref{thm:main-algorithm-informal} and \cref{thm:main-algorithm-multi-informal} are direct consequences of \cref{thm:apxflat}.} (\cref{thm:apxflat}) that our limited brute force search algorithm that recovers exact individual signal supports under some algorithmic assumptions.
Then, in \cref{sec:general-tensor-extension}, we explain how to extend our techniques to handle single-spike general tensors where the signal takes the form $x_{(1)} \otimes \ldots \otimes x_{(p)}$ involving $1 \leq \ell \leq p$ distinct $k$-sparse tensors.

\begin{restatable}[Multi-spike recovery for approximately flat signals]{mythm}{apxflatversion}
\label{thm:apxflat}
Consider \cref{def:sstm}.
Suppose
\[
\lambda_r \gtrsim \frac{\kappa}{(A \eps)^p} \sqrt{t \Paren{\frac{k}{t}}^p \ln \Paren{\frac{n}{\delta}}},
\quad
\lambda_r \geq \kappa \cdot \lambda_1,
\quad \text{and} \quad
\kappa \geq 5 A^{2p} \Paren{\frac{\eps}{1-\eps}}^{p-1}.
\]
Then, \cref{alg:multi-spike} that runs in $\cO(rpn^{p+t})$ time and, with probability at least $1-\delta$, outputs the individual signal supports $\supp\Paren{x_{(\pi(1))}}, \ldots, \supp\Paren{x_{(\pi(r))}}$ with respect to some unknown bijection $\pi: [r] \rightarrow [r]$.
\end{restatable}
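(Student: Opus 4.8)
The plan is to make rigorous, and uniform over all $r$ iterations of \cref{alg:multi-spike}, the two‑phase argument sketched in \cref{sec:techniques-multi-flat} and \cref{sec:techniques-multi}. \cref{alg:preprocessing} produces $\pmb Y^{(1)}$ and $\pmb Y^{(2)}$ whose Gaussian parts are independent while only rescaling each $\lambda_q$ by $1/\sqrt 2$, a constant I absorb into the hidden factors. The running time is immediate: each of the $r$ iterations enumerates the $\cO(n^t)$ vectors of $U_t$, evaluating an order‑$p$ contraction in $\cO(pn^p)$ time, and then forms the $n$ coordinates of $\pmb\alpha$, for a total of $\cO(rpn^{p+t})$. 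For correctness I would argue by induction on $i$ that, on a single good event of probability at least $1-\delta$, the sets $\cI_1,\dots,\cI_i$ produced after iteration $i$ are exactly the supports of $i$ distinct signals. Fix such an $i$, let $Q\subseteq[r]$ index the not‑yet‑recovered signals, and put
\[
M \;:=\; \max_{q\in Q,\ u\in U_t,\ \supp(u)\cap\bigcup_{j<i}\cI_j=\emptyset}\ \lambda_q\iprod{x_{(q)},\tensorpower{u}{p}}\,.
\]
Because signal supports are disjoint, the feasible set still contains a vector $u_*$ supported on the top‑$t$ coordinates of some remaining signal, so $M\gtrsim \lambda_r A^{-p}(t/k)^{p/2}$.

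\emph{Phase 1 (claim (i)): $\pmb v_*$ correlates with a remaining signal.} Feasibility of $u_*$ and the sparse Gaussian bound \cref{lem:informal-concentration-gaussian} (equivalently \cref{lem:tensor-sparse-bound} with $s=t$) give $\iprod{\pmb Y^{(1)},\tensorpower{\pmb v_*}{p}}\ge\iprod{\pmb Y^{(1)},\tensorpower{u_*}{p}}\ge M-\cO\Paren{\sqrt{t\ln(n/\delta)}}$. Conversely, for every feasible \emph{bad} $u$, i.e.\ one with $\Card{\supp(u)\cap\supp(x_{(q)})}<(1-\eps)t$ for all $q$, the key estimate (which I state and prove separately, cf.\ \cref{lem:apxflat-maximizer}) is
\[
\iprod{\pmb Y^{(1)},\tensorpower{u}{p}} \;\le\; M\Paren{1-\tfrac{\eps}{A^2}}^{p-1} + \cO\Paren{\sqrt{t\ln(n/\delta)}}\,.
\]
To prove it one distributes the $t$ coordinates of $u$ over the disjoint supports: the signal $x_{(q^{\sharp})}$ that $u$ overlaps most loses a factor $\le 1-\eps/A^2$ in $\iprod{x_{(q^{\sharp})},\cdot}$ relative to the best $t$‑sparse direction for $x_{(q^{\sharp})}$ (the $\ge\eps t$ discarded top‑$t$ coordinates have magnitude $\ge 1/(A\sqrt k)$ while $\iprod{x_{(q^{\sharp})},u_*}\le A\sqrt{t/k}$), and the remaining signals are controlled by approximate flatness together with the gap hypotheses $\lambda_r\ge\kappa\lambda_1$ and $\kappa\ge5A^{2p}(\eps/(1-\eps))^{p-1}$, the factor $5$ being exactly the slack needed here. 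Since $1-(1-\eps/A^2)^{p-1}\ge\eps/A^2$ and $M\gtrsim\lambda_r A^{-p}(t/k)^{p/2}$, the lower bound assumed on $\lambda_r$ makes the gap between the two displays exceed the $\cO\Paren{\sqrt{t\ln(n/\delta)}}$ fluctuation, so $\pmb v_*$ is not bad: there is a remaining signal $x_{(\star)}$ (possibly not the strongest) with $\Card{\supp(\pmb v_*)\cap\supp(x_{(\star)})}\ge(1-\eps)t$, hence $\iprod{x_{(\star)},\pmb v_*}\ge(1-\eps)\tfrac1A\sqrt{t/k}$ and $\iprod{x_{(q)},\pmb v_*}\le\eps A\sqrt{t/k}$ for every other $q\in Q$.

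\emph{Phase 2 (claim (ii)): $\pmb\alpha$ is an indicator of $\supp(x_{(\star)})$.} Write $\pmb\alpha_\ell=\sum_q\lambda_q x_{(q),\ell}\iprod{x_{(q)},\pmb v_*}^{p-1}+\iprod{\pmb W^{(2)},\tensorpower{\pmb v_*}{p-1}\tensor e_\ell}$. By \cref{remark:independence} I may replace $\pmb Y^{(2)}$ by a fresh i.i.d.\ copy in the second phase of every iteration without changing the output on the good event; then $\pmb v_*$ is independent of the noise used to form $\pmb\alpha$, and a union bound over the $\le rn$ pairs (iteration, coordinate) gives $\Abs{\iprod{\pmb W^{(2)},\tensorpower{\pmb v_*}{p-1}\tensor e_\ell}}\le\cO\Paren{\sqrt{\ln(n/\delta)}}$ simultaneously. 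For $\ell\in\supp(x_{(\star)})$ the signal part is at least $\lambda_r\cdot\tfrac{1}{A\sqrt k}\cdot\Paren{(1-\eps)\tfrac1A\sqrt{t/k}}^{p-1}=\lambda_r\,\tfrac{(1-\eps)^{p-1}}{A^p}\,\tfrac{t^{(p-1)/2}}{k^{p/2}}$; for $\ell$ in the support of another remaining $x_{(q)}$, disjointness leaves a single term bounded by $\lambda_1\cdot\tfrac{A}{\sqrt k}\cdot\Paren{\eps A\sqrt{t/k}}^{p-1}\le\tfrac{\lambda_r}{\kappa}A^{p}\eps^{p-1}\tfrac{t^{(p-1)/2}}{k^{p/2}}$; for $\ell$ outside every signal support the signal part vanishes. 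The choice $\kappa\ge5A^{2p}(\eps/(1-\eps))^{p-1}$ makes the first quantity larger than the second by a factor $\ge5$, and the lower bound on $\lambda_r$ dominates the $\cO\Paren{\sqrt{\ln(n/\delta)}}$ noise, so the $k$ largest entries of $\pmb\alpha$ are exactly $\supp(x_{(\star)})$; thus $\cI_i=\supp(x_{(\star)})$ and the induction closes. After $r$ iterations all $r$ supports have been recovered, in some order $\pi$.

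All failure probability is spent on (a) the single uniform bound of \cref{lem:informal-concentration-gaussian} for $\pmb W^{(1)}$ over $U_t$ and (b) the $\cO(rn)$ Gaussian tail events of Phase 2; choosing constants so that each has probability $\le\delta/2$ and taking a union bound finishes the proof, and \cref{thm:main-algorithm-informal} and \cref{thm:main-algorithm-multi-informal} then follow by setting $r=1$, respectively $\kappa=\Theta\Paren{A^{2p}(2\eps)^{p-1}}$. I expect the genuinely delicate point to be the combinatorial estimate of Phase 1 bounding $\sum_q\lambda_q\iprod{x_{(q)},\tensorpower{u}{p}}$ for bad $u$: one must show that the best a bad $u$ can do is to agree with the optimal direction $u_*$ except on $\eps t$ coordinates, which it is then forced to park on (essentially) a single maximally skewed competing signal, and it is here that the $A$‑dependence and the factor $5$ of the gap hypothesis are consumed — one has to extract the $(1-\eps/A^2)^{p-1}$ saving rather than settle for the weaker $(1-\eps)^p+\eps^p$ bound that suffices when $A=1$.
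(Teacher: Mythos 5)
Your proposal is correct and follows essentially the same route as the paper: the same two-phase decomposition into a maximizer lemma (your Phase 1 is exactly \cref{lem:apxflat-maximizer}, including the $(1-\eps/A^2)^{p-1}$ loss and the role of $\kappa \geq 5A^{2p}(\eps/(1-\eps))^{p-1}$) and an indicator-vector lemma (your Phase 2 is \cref{lem:apxflat-anchor}), with noise controlled via \cref{lem:tensor-sparse-bound}, independence of $\pmb W^{(2)}$ from $\pmb v_*$ handled as in \cref{remark:independence}, and a final union bound over the $\cO(rn)$ events plus the same running-time count. No substantive differences from the paper's argument.
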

\bigskip

We remark that the constant factors are chosen to make the analysis clean; smaller factors are possible.
As discussed in \cref{remark:reconstruct}, after recovering one can then run known tensor PCA recovery methods on the appropriate sub-tensor to obtain a good approximation of each signal.

For convenience, let us restate \cref{alg:preprocessing} and \cref{alg:multi-spike}.
\setcounter{algorithm}{0}
\begin{algorithm}[H]
	\caption{Preprocessing}
	\begin{algorithmic}
		\State \textbf{Input}: $\pmb Y$.
		\State Sample a Gaussian tensor $\pmb Z \in \otimes^p \R^n$ where each entry is an i.i.d.\ standard Gaussian $N(0,1)$.
		\State \textbf{Return} two independent copies $\pmb Y^{(1)}$ and $\pmb Y^{(2)}$ of $\pmb Y$ as follows:
		\[
		\pmb Y^{(1)} = \frac{1}{\sqrt{2}} \Paren{\pmb Y + \pmb Z}
		\quad \text{and} \quad
		\pmb Y^{(2)} = \frac{1}{\sqrt{2}} \Paren{\pmb Y - \pmb Z}
		\]
	\end{algorithmic}
\end{algorithm}

\setcounter{algorithm}{2}
\begin{algorithm}[H]
	\caption{Multi-spike limited brute force}
	\begin{algorithmic}
		\State \textbf{Input}: $k, t, r$ and $\pmb Y^{(1)}, \pmb Y^{(2)}$ obtained from \cref{alg:preprocessing}.
		\State \textbf{Repeat} for $i=1$ to $r$:
		\State $\qquad\quad $  Compute $\pmb v_*:=\argmax_{u \in U_t} \iprod{\pmb Y^{(1)}, \tensorpower{u}{p}}$ subject to $\supp\Paren{\pmb v_*} \cap \Paren{\underset{1 \leq j < i}{\bigcup} \cI_j} = \emptyset$.
		\State $\qquad\quad $ Compute the vector $\pmb \alpha\in \R^n$ with entries  $\pmb \alpha_\ell := \iprod{\pmb Y^{(2)}, \tensorpower{\pmb v_*}{p-1} \otimes e_\ell }$ for every $\ell \in [n]$.
		\State $\qquad\quad $  Let $\cI_i$ be the set of indices of the largest $k$ entries of $\pmb \alpha$.
		\State \textbf{Return} $\cI_{1},\ldots,\cI_r$.
	\end{algorithmic}
\end{algorithm}

Let us begin with a simple running time analysis.

\begin{restatable}[]{lem}{apxflatruntime}
\label{lem:apxflat-runtime}
\cref{alg:multi-spike} runs in $\cO(r p n^{p+t})$ time.
\end{restatable}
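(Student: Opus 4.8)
The plan is to bound the running time of \cref{alg:multi-spike} by decomposing it over its $r$ iterations and, within one iteration, over its three steps: computing the maximizer $\pmb v_*$ over a restriction of $U_t$, forming the vector $\pmb \alpha \in \R^n$, and extracting the $k$ largest coordinates of $\pmb \alpha$. I would show that one iteration costs $\cO(p n^{p+t})$ and then multiply by $r$.

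The first ingredient is a clean size bound on the search space. Since $U_t = \Set{u \in \Set{-1/\sqrt t, 0, 1/\sqrt t}^n : \Card{\supp(u)} = t}$, we have $\Card{U_t} = \binom{n}{t} 2^t \le \frac{n^t}{t!}\,2^t \le 2 n^t$, using the elementary fact $2^t \le 2\,t!$ for every integer $t \ge 1$ (this is the only mildly delicate point: a crude bound such as $\binom{n}{t} 2^t \le (2n)^t$ would leave a spurious $2^t$ factor, so one really wants the $t!$ in the denominator). In iteration $i$, the search is over the further restricted set $\Set{u \in U_t : \supp(u) \cap \Paren{\bigcup_{1 \le j < i} \cI_j} = \emptyset}$, which is still of size at most $2n^t$; maintaining $\bigcup_{1 \le j < i} \cI_j$ as a length-$n$ Boolean array makes the disjointness test cost $\cO(t)$ per candidate, which is negligible compared with the contraction cost below.

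Next I would bound the cost of one tensor contraction and assemble the pieces. For a candidate $u \in U_t$, the sum $\iprod{\pmb Y^{(1)}, \tensorpower{u}{p}}$ has nonzero summands only for index tuples in $\supp(u)^p$, i.e.\ at most $t^p$ of them, each a product of $\cO(p)$ reals; since $t \le k \le n$ this is $\cO(p\, t^p) \subseteq \cO(p n^p)$, and updating the running argmax costs $\cO(1)$ more, so the $\pmb v_*$ step costs $\cO(n^t) \cdot \cO(p n^p) = \cO(p n^{p+t})$. Forming $\pmb \alpha$ is cheaper: each entry $\pmb \alpha_\ell = \iprod{\pmb Y^{(2)}, \tensorpower{\pmb v_*}{p-1} \tensor e_\ell}$ is a sum of at most $t^{p-1}$ terms, so all $n$ entries cost $\cO(p\, t^{p-1} n) \subseteq \cO(p n^p)$, and selecting the indices of the top $k$ coordinates costs $\cO(n \log n)$; both are dominated by $\cO(p n^{p+t})$ since $t \ge 1$ and $p \ge 2$. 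Hence each iteration runs in $\cO(p n^{p+t})$ time and the $r$ iterations run in $\cO(r p n^{p+t})$ time in total. Beyond the $\Card{U_t}$ bound flagged above, there is no real obstacle here: every remaining step is a routine count of arithmetic operations.
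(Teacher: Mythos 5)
Your proof is correct and follows essentially the same route as the paper's: bound $\Card{U_t}$ by $\cO(n^t)$ (you absorb the $2^t$ via $2^t \le 2\,t!$, the paper via $\binom{n}{t}2^t \le (2e/t)^t n^t \le e^2 n^t$), charge $\cO(p n^p)$ per candidate contraction and for forming $\pmb \alpha$, note the disjointness check and top-$k$ selection are dominated, and multiply by $r$. Your sharper per-candidate count of $\cO(p\,t^p)$ and the $\cO(t)$ membership test are harmless refinements of the paper's naive $\cO(p n^p)$ and $\cO(n^2)$ bounds, so no issues.
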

\begin{proof}[Proof of \cref{lem:apxflat-runtime}]
Sampling $\pmb Z$ and creating copies $\pmb Y^{(1)}$ and $\pmb Y^{(2)}$ take $\cO(n^p)$ time.
Fix an arbitrary round.
Observe that $\Abs{U_{t}} = \binom{n}{t} 2^{t} \leq \Paren{(2e)/t}^t n^t \leq e^2 n^t$.
Each computation of $\iprod{\pmb Y^{(1)}, \tensorpower{u}{p}}$ can be naively performed in $\cO(p n^p)$ time while checking whether for disjoint support can be done naively in additional $\cO(n^2)$ time for each $u \in U_t$.
Similarly, the computation of $\pmb \alpha$ can be done in $\cO(p n^p)$ time and we can perform a linear scan in $\cO(n)$ time to obtain the largest $k$ entries.
So, an arbitrary round runs in $\cO(p n^{p+t})$ time.
We perform the entire process $r$ times.
\end{proof}

\cref{alg:multi-spike} recovers the exact $k$-sparse support of some signal $x_{(\pi(i))}$ in each round.
That is, $\cI_{x_{(\pi(i))}} = \supp\Paren{{x_{(\pi(i))}}}$ for $i \in [r]$.
It succeeds when these two claims hold for any round $i \in [r]$:
\begin{enumerate}[(I)]
	\item The $t$-sparse maximizer $\pmb v_*$ shares $\geq (1-\eps) \cdot t$ support coordinates with some signal $x_{(\pi(i))}$.
	\item The $k$ largest entries of $\pmb \alpha$ belong to the support $\supp\Paren{x_{(\pi(i))}}$ of $x_{(\pi(i))}$.
\end{enumerate}
\cref{lem:apxflat-maximizer} and \cref{lem:apxflat-anchor} address these claims respectively.
See \cref{sec:missing-recovery-proofs} for their proofs.

\begin{restatable}[]{lem}{apxflatmaximizer}
\label{lem:apxflat-maximizer}
Consider \cref{def:sstm} and an arbitrary round $i \in [r]$.
Suppose
\[
\lambda_r \geq \frac{32 \kappa}{(A \eps)^p} \sqrt{t \Paren{\frac{k}{t}}^p \ln(n)},
\quad
\lambda_r \geq \kappa \cdot \lambda_1,
\quad \text{and} \quad
\kappa \geq 5 A^{2p} \Paren{\frac{\eps}{1-\eps}}^{p-1}.
\]
Then,
\[
\bbP \Brac{ \Abs{\supp\Paren{\pmb v_*} \cap \supp\Paren{x_{(\pi(i))}}} \geq (1 - \eps) \cdot t}
\geq 1 - 4 \exp \Paren{- \lambda_r^2 \frac{\eps^2}{128 A^4} \Paren{\frac{t}{k}}^p}
\]
\end{restatable}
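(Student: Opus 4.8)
The plan is to carry out the argument sketched in Section \ref{sec:techniques-multi} rigorously, conditioning on the event that rounds $1,\dots,i-1$ have already recovered exact supports of $i-1$ distinct signals (this conditioning is harmless since the failure probability in the statement is what we are bounding, and a union bound over rounds will be handled in the proof of Theorem \ref{thm:apxflat}). Let $S \subseteq [r]$ index the signals not yet recovered and, after relabeling, assume these are $x_{(1)},\dots,x_{(|S|)}$ with $\lambda_1 \geq \dots \geq \lambda_{|S|}$; note $\lambda_r \leq \lambda_q \leq \lambda_1$ for all $q \in S$. The search in round $i$ is over $u \in U_t$ with $\supp(u)$ disjoint from the already-recovered supports. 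The key quantity is, for $u \in U_t$ in the feasible set,
\[
\iprod{\pmb Y^{(1)}, \tensorpower{u}{p}} = \sum_{q \in S} \lambda_q \iprod{x_{(q)}, u}^p + \iprod{\pmb W^{(1)}, \tensorpower{u}{p}},
\]
where $\pmb W^{(1)} = \tfrac{1}{\sqrt 2}(\pmb W + \pmb Z)$ has i.i.d.\ $N(0,1)$ entries. I would invoke Lemma \ref{lem:tensor-sparse-bound} (with $s = t$, $r = p$, so the sparse vectors are copies of $u$) together with a union bound over $|U_t| \leq e^2 n^t$ to conclude that, with probability at least $1 - 2\gamma$ say with $\gamma$ polynomially small, $\sup_{u \in U_t}|\iprod{\pmb W^{(1)}, \tensorpower{u}{p}}| \leq C\sqrt{t \ln n}$ for an absolute constant $C$; the $\sqrt2$ preprocessing factor is absorbed into $C$.

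Next I would set up the two-sided comparison. For the lower bound: define $u_* \in U_t$ to be the feasible $t$-sparse flat vector maximizing $\lambda_q \iprod{x_{(q)}, u}^p$ over $q \in S$ and $u$; since $|S|$ contains at least one signal and each signal is $(k,A)$-sparse, one can place all $t$ nonzero coordinates of $u_*$ inside $\supp(x_{(q^*)})$ for the optimal $q^*$, giving $\lambda_{q^*}\iprod{x_{(q^*)}, u_*}^p \geq \lambda_r \cdot (t \cdot \tfrac{1}{A^2 k} \cdot \tfrac{1}{t})^{p/2}\cdot$(sign alignment) $= \lambda_r (A^2 k)^{-p/2} t^{p/2} \cdot t^{-p/2}\cdot\ldots$ — more carefully, $\iprod{x_{(q^*)},u_*} \geq t \cdot \tfrac{1}{A\sqrt k}\cdot\tfrac{1}{\sqrt t} = \sqrt{t}/(A\sqrt k)$, so $\lambda_{q^*}\iprod{x_{(q^*)},u_*}^p \geq \lambda_r (t/(A^2k))^{p/2}$. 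Hence $\iprod{\pmb Y^{(1)}, \tensorpower{u_*}{p}} \geq \lambda_r (t/(A^2k))^{p/2} - C\sqrt{t\ln n}$. For the upper bound: take any feasible $u$ with $|\supp(u)\cap\supp(x_{(q)})| < (1-\eps)t$ for all $q\in S$. Here I bound $\sum_{q\in S}\lambda_q\iprod{x_{(q)},u}^p$. Since supports of the $x_{(q)}$ are disjoint, writing $t_q = |\supp(u)\cap\supp(x_{(q)})|$ we have $\sum_q t_q \leq t$ and $|\iprod{x_{(q)},u}| \leq t_q \cdot \tfrac{A}{\sqrt k}\cdot\tfrac{1}{\sqrt t} = \tfrac{A t_q}{\sqrt{kt}}$; thus $\lambda_q\iprod{x_{(q)},u}^p \leq \lambda_1 (A/\sqrt{kt})^p t_q^p \leq \lambda_1 (A/\sqrt{kt})^p t_q (\max_q t_q)^{p-1} \le \lambda_1 (A/\sqrt{kt})^p \cdot t \cdot ((1-\eps)t)^{p-1} = \lambda_1 A^p (t/k)^{p/2} (1-\eps)^{p-1}$, after summing $\sum_q t_q \le t$. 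Using $\lambda_1 \leq \lambda_r/\kappa$ and the hypothesis $\kappa \geq 5 A^{2p}(\eps/(1-\eps))^{p-1}$, this is at most $\tfrac{1}{5}\lambda_r A^{-p}(t/k)^{p/2}\eps^{p-1}$, which is a strictly smaller multiple of $\lambda_r(t/(A^2k))^{p/2}$ than the lower bound — leaving a gap of order $\lambda_r (A\eps)^{-p}\sqrt{t(k/t)^p}$ up to constants. Comparing with $2C\sqrt{t\ln n}$ and invoking the hypothesis $\lambda_r \geq \tfrac{32\kappa}{(A\eps)^p}\sqrt{t(k/t)^p\ln n}$ shows the lower bound strictly exceeds the upper bound, so the maximizer $\pmb v_*$ cannot be such a "spread-out" $u$; hence $|\supp(\pmb v_*)\cap\supp(x_{(q)})| \geq (1-\eps)t$ for some $q \in S$, which we call $\pi(i)$.

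Finally, I would convert the polynomially-small failure probability from the union bound into the stated Gaussian-tail form. The cleanest route: rather than bounding $\sup_u |\iprod{\pmb W^{(1)},\tensorpower u p}|$ at a fixed polynomial level, apply Lemma \ref{lem:tensor-sparse-bound} with the free parameter $\gamma$ chosen so that $\sqrt{8(4pt\ln(np/t) + \ln(1/\gamma))}$ equals exactly half the spectral gap established above, i.e.\ $\gamma = \exp(-c\,\lambda_r^2 \eps^2 A^{-4}(t/k)^p)$ for a suitable constant $c$; the hypothesis on $\lambda_r$ guarantees this $\gamma$ also dominates the $e^2 n^t$ union-bound factor (since $\lambda_r^2\eps^2A^{-4}(t/k)^p \gtrsim t\ln n$ there). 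Then the union bound over $U_t$ together with the two applications of Lemma \ref{lem:tensor-sparse-bound} (once for $u_*$, once uniformly over the bad $u$) gives total failure probability at most $4\exp(-\lambda_r^2\tfrac{\eps^2}{128A^4}(t/k)^p)$ after tracking constants. The main obstacle I anticipate is the bookkeeping in the upper-bound step — correctly using support-disjointness to control $\sum_{q\in S}\lambda_q\iprod{x_{(q)},u}^p$ by $\lambda_1$ times a convexity/power-mean estimate on the $t_q$'s, and then squeezing all the $A$, $\eps$, $\kappa$ factors so that the $\tfrac15$ slack in the $\kappa$ hypothesis exactly absorbs the $\eps^p + (1-\eps)^p \le (1-\eps)^{p-1}$ type losses — rather than anything conceptually deep.
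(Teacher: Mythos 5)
There is a genuine gap in the core comparison step. You upper-bound the signal part of a ``bad'' $u$ in \emph{absolute} terms, via $\sum_{q}\lambda_q\iprod{x_{(q)},u}^p\le \lambda_1 A^p(1-\eps)^{p-1}\Paren{t/k}^{p/2}$, and then compare it against the \emph{worst-case} lower bound $\lambda_r A^{-p}\Paren{t/k}^{p/2}$ for $u_*$. First, the algebra after invoking $\lambda_1\le\lambda_r/\kappa$ and $\kappa\ge 5A^{2p}\Paren{\eps/(1-\eps)}^{p-1}$ gives $\tfrac{1}{5}\lambda_r A^{-p}\Paren{t/k}^{p/2}\Paren{(1-\eps)^2/\eps}^{p-1}$, not $\tfrac{1}{5}\lambda_r A^{-p}\Paren{t/k}^{p/2}\eps^{p-1}$; the factor $\Paren{(1-\eps)^2/\eps}^{p-1}$ can be $\gg 1$ for small $\eps$, so no gap follows. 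Second, and more fundamentally, the approach cannot be repaired by fixing constants: a bad $u$ that places $(1-\eps)t$ coordinates on a skewed signal with entries $A/\sqrt{k}$ (or on a stronger signal with $\lambda_q\approx\lambda_1$) has signal contribution $\approx\lambda_q\Paren{(1-\eps)A}^p\Paren{t/k}^{p/2}$, which exceeds $\lambda_r A^{-p}\Paren{t/k}^{p/2}$ already when $(1-\eps)A^2>1$ (e.g.\ any $A>\sqrt 2$ with $\eps\le 1/2$). Since both quantities being compared are signal terms, no choice of $\gamma$ in \cref{lem:tensor-sparse-bound} or noise bookkeeping closes this; the claimed gap of order $\lambda_r(A\eps)^{-p}\sqrt{t(k/t)^p}$ simply is not implied by your estimates.

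The paper avoids this by making the comparison \emph{relative to $u_*$} rather than to a worst-case floor: with $u_*$ defined as the maximizer of $\lambda_q\iprod{x_{(q)},u}^p$ over feasible $q$ and $u$ (achieving some value $\hat{\lambda}\iprod{\hat{x},u_*}^p$), any bad $u$ is bounded by what it could gain by deleting $\eps t$ coordinates of $u_*$ (losing at least $\tfrac{\eps}{A}\sqrt{t/k}$ of the inner product with $\hat{x}$) and reallocating them to the most favorable other signal (gaining at most $\tfrac{\lambda_1\eps^pA^p}{\sqrt 2}\Paren{t/k}^{p/2}$), i.e.
\begin{align*}
\iprod{\pmb Y^{(1)},\tensorpower{u}{p}}\;\le\;\iprod{\pmb W^{(1)},\tensorpower{u}{p}}+\frac{\hat{\lambda}}{\sqrt 2}\Paren{\iprod{\hat{x},u_*}-\frac{\eps}{A}\sqrt{\frac{t}{k}}}^{p}+\frac{\lambda_1\eps^pA^p}{\sqrt 2}\Paren{\frac{t}{k}}^{\frac{p}{2}}\,,
\end{align*}
which, using $\hat{\lambda}\ge\kappa\lambda_1$ and the $\kappa$ hypothesis, is at most $\iprod{\pmb W^{(1)},\tensorpower{u}{p}}+\frac{\hat{\lambda}}{\sqrt 2}\iprod{\hat{x},u_*}^p\Paren{1-\eps/A^2}^{p-1}$. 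The deficit is thus a \emph{multiplicative} factor of the actual optimum $\hat{\lambda}\iprod{\hat{x},u_*}^p$, so the gap grows with it and the skewed-signal scenario above is no longer problematic; only then is \cref{lem:tensor-sparse-bound} invoked (with $\gamma$ tuned to the stated exponent) to show the noise fluctuations are below half that gap. Your probability bookkeeping at the end is in the same spirit as the paper's, but it rests on a signal-versus-signal comparison that does not hold.
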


\begin{restatable}[]{lem}{apxflatanchor}
\label{lem:apxflat-anchor}
Consider \cref{def:sstm} and an arbitrary round $i \in [r]$.
Suppose
\[
\lambda_r \geq \frac{32 \kappa}{(A \eps)^p} \sqrt{t \Paren{\frac{k}{t}}^p \ln(n)},
\quad
\lambda_r \geq \kappa \cdot \lambda_1,
\quad \text{and} \quad
\kappa \geq 5 A^{2p} \Paren{\frac{\eps}{1-\eps}}^{p-1}.
\]
Further suppose that $\Abs{\supp\Paren{\pmb v_*} \cap \supp\Paren{x_{(\pi(i))}}} \geq (1 - \eps) \cdot t$.
Then, the largest (in magnitude) coordinates of $\pmb \alpha$ are $\supp\Paren{x_{(\pi(i))}}$ with probability at least $1 - 2n \exp \Paren{- \lambda_r^2 \frac{A^{2p} \eps^{2p-2}}{16 \kappa^2 t} \Paren{\frac{t}{k}}^{p}}$.
\end{restatable}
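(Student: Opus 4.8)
The plan is to prove a two–sided control on the entries of $\pmb\alpha$ and then a deterministic separation: on a high–probability event controlling the noise, every coordinate $\pmb\alpha_\ell$ with $\ell\in\supp(x_{(\pi(i))})$ is strictly larger in magnitude than every coordinate with $\ell\notin\supp(x_{(\pi(i))})$; since $\Abs{\supp(x_{(\pi(i))})}=k$, this is exactly the claim. First I would record the decomposition. Writing $\pmb W^{(2)}:=\tfrac1{\sqrt2}(\pmb W-\pmb Z)$, the preprocessing step gives $\pmb Y^{(2)}=\pmb W^{(2)}+\tfrac1{\sqrt2}\sum_{q\in[r]}\lambda_q\tensorpower{x_{(q)}}{p}$, where $\pmb W^{(2)}$ has i.i.d.\ $N(0,1)$ entries and is independent of $\pmb W^{(1)}:=\tfrac1{\sqrt2}(\pmb W+\pmb Z)$ (their entrywise covariance vanishes, and both are Gaussian). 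By \cref{remark:independence}, conditioning on the event that all earlier rounds recovered exact supports — so that $\cI_1,\dots,\cI_{i-1}$ equal the deterministic sets $\supp(x_{(\pi(1))}),\dots,\supp(x_{(\pi(i-1))})$ — the maximizer $\pmb v_*$ becomes a measurable function of $\pmb Y^{(1)}$ alone, hence independent of $\pmb W^{(2)}$. Thus
\[
\pmb\alpha_\ell=\frac1{\sqrt2}\sum_{q\in[r]}\lambda_q\,x_{(q),\ell}\,\iprod{x_{(q)},\pmb v_*}^{p-1}+\pmb\xi_\ell,\qquad \pmb\xi_\ell:=\iprod{\pmb W^{(2)},\tensorpower{\pmb v_*}{p-1}\otimes e_\ell},
\]
and conditionally on $\pmb v_*$ each $\pmb\xi_\ell$ is a centered Gaussian of variance $\Norm{\pmb v_*}_2^{2(p-1)}=1$.

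Next I would control the noise. Set $\tau$ so that $\tau^2/2=\lambda_r^2\,\tfrac{A^{2p}\eps^{2p-2}}{16\kappa^2 t}\Paren{t/k}^p$, i.e.\ $\tau=\tfrac{\lambda_r A^p\eps^{p-1}t^{(p-1)/2}}{2\sqrt2\,\kappa\,k^{p/2}}$. A standard Gaussian tail bound applied coordinatewise and a union bound over $\ell\in[n]$ give $\max_{\ell\in[n]}\Abs{\pmb\xi_\ell}\le\tau$ with probability at least $1-2n\exp(-\tau^2/2)$, which is precisely the failure probability in the statement; the lower bound assumed on $\lambda_r$ forces $\tau\gtrsim\sqrt{\ln n}$, so this probability is close to $1$.

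The remaining (deterministic) step uses disjointness of the supports. For $\ell\in\supp(x_{(\pi(i))})$ only the $q=\pi(i)$ term survives; using $\Abs{x_{(\pi(i)),\ell}}\ge\tfrac1{A\sqrt k}$, the overlap hypothesis $\Abs{\supp(\pmb v_*)\cap\supp(x_{(\pi(i))})}\ge(1-\eps)t$ together with sign agreement of $\pmb v_*$ on the overlap (either because — as in \cref{sec:techniques-recovery} — the nonzero entries are taken positive, or else by extracting sign agreement from the comparison-against-$u_*$ argument that proves \cref{lem:apxflat-maximizer}) to get $\Abs{\iprod{x_{(\pi(i))},\pmb v_*}}\ge\tfrac{(1-\eps)\sqrt t}{A\sqrt k}$, and $\lambda_{\pi(i)}\ge\lambda_r$, one obtains $\Abs{\pmb\alpha_\ell}\ge S_{\mathrm{in}}-\tau$ with $S_{\mathrm{in}}:=\tfrac{\lambda_r(1-\eps)^{p-1}t^{(p-1)/2}}{\sqrt2\,A^p k^{p/2}}$. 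For $\ell\notin\supp(x_{(\pi(i))})$ at most one term (some $q\ne\pi(i)$) survives; disjointness forces $\Abs{\supp(\pmb v_*)\cap\supp(x_{(q)})}\le\eps t$, hence $\Abs{\iprod{x_{(q)},\pmb v_*}}\le\tfrac{A\eps\sqrt t}{\sqrt k}$, and with $\Abs{x_{(q),\ell}}\le\tfrac A{\sqrt k}$, $\lambda_q\le\lambda_1$ we get $\Abs{\pmb\alpha_\ell}\le S_{\mathrm{out}}+\tau$ with $S_{\mathrm{out}}:=\tfrac{\lambda_1 A^p\eps^{p-1}t^{(p-1)/2}}{\sqrt2\,k^{p/2}}$. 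The assumptions $\lambda_1\le\lambda_r/\kappa$ and $\kappa\ge5A^{2p}\Paren{\eps/(1-\eps)}^{p-1}$ yield $S_{\mathrm{out}}\le\tfrac15 S_{\mathrm{in}}$, and the same bound on $\kappa$ (used now as $\kappa>\tfrac54 A^{2p}\Paren{\eps/(1-\eps)}^{p-1}$) gives $2\tau<\tfrac45 S_{\mathrm{in}}$. Hence $S_{\mathrm{in}}-\tau>S_{\mathrm{out}}+\tau$ on the good event, which is the desired strict separation.

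The two places where genuine care is needed, rather than routine bookkeeping of the $A,\eps,\kappa$ factors, are: (a) the independence of $\pmb v_*$ from $\pmb W^{(2)}$, which must be justified inductively via \cref{remark:independence}, since for $i>1$ the constraint defining $\pmb v_*$ refers to $\cI_1,\dots,\cI_{i-1}$ and hence to $\pmb Y^{(2)}$ — the resolution is that, on the inductive event, those sets are deterministic, so $\pmb v_*$ is still $\pmb Y^{(1)}$-measurable; and (b) upgrading the support-overlap conclusion of \cref{lem:apxflat-maximizer} to a correlation lower bound $\Abs{\iprod{x_{(\pi(i))},\pmb v_*}}\gtrsim(1-\eps)\sqrt{t/k}$ that also handles signs. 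The stated constants leave comfortable slack, so once these two points are in place the inequalities above go through.
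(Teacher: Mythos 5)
Your proof follows essentially the same route as the paper's: the same decomposition of $\pmb\alpha_\ell$ into the signal sum plus $\iprod{\pmb W^{(2)},\tensorpower{\pmb v_*}{p-1}\otimes e_\ell}$, the same Gaussian tail threshold $\tau=\lambda_r\frac{A^p\eps^{p-1}}{2\sqrt{2}\kappa\sqrt{t}}\Paren{\frac{t}{k}}^{p/2}$ giving the per-coordinate failure probability $2\exp\Paren{-\lambda_r^2\frac{A^{2p}\eps^{2p-2}}{16\kappa^2 t}\Paren{\frac{t}{k}}^{p}}$, the same in-/out-of-support case split using disjointness, $\Abs{x_{(q),\ell}}\in[\frac{1}{A\sqrt{k}},\frac{A}{\sqrt{k}}]$ and $\lambda_1\le\lambda_r/\kappa$, and the same separation driven by $\kappa\ge 5A^{2p}\Paren{\frac{\eps}{1-\eps}}^{p-1}$ followed by a union bound over the $n$ coordinates. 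The two caveats you single out --- the independence of $\pmb v_*$ from $\pmb W^{(2)}$ across rounds (which the paper handles via \cref{remark:independence}) and upgrading the support-overlap guarantee of \cref{lem:apxflat-maximizer} to the signed correlation bound $\Abs{\iprod{x_{(\pi(i))},\pmb v_*}}\ge\frac{(1-\eps)}{A}\sqrt{t/k}$ --- are precisely the points the paper treats implicitly, so your write-up is, if anything, slightly more explicit on them.
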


We now prove \cref{thm:apxflat} using \cref{lem:apxflat-runtime}, \cref{lem:apxflat-maximizer}, and \cref{lem:apxflat-anchor}.

\begin{proof}[Proof of \cref{thm:apxflat}]
\cref{lem:apxflat-runtime} gives the running time.
The correctness of \cref{alg:multi-spike} hinges on \cref{lem:apxflat-maximizer} and \cref{lem:apxflat-anchor} always succeeding.
In each round, we need \cref{lem:apxflat-maximizer} to succeed once and \cref{lem:apxflat-anchor} to succeed at most $n$ times.
There are a total of at most $r(1+n)$ events, each failing with probability at most
$
4n \exp \Paren{- \lambda_r^2 \frac{A^{2p-4} \eps^{2p-2}}{128 \kappa^2 t} \Paren{\frac{t}{k}}^{p}}
$.
By union bound, the probability of \emph{any} event fails is at most
$
r(1+n) \cdot 4n \exp \Paren{- \lambda_r^2 \frac{A^{2p-4} \eps^{2p-2}}{128 \kappa^2 t} \Paren{\frac{t}{k}}^{p}}
$.
By the disjoint signal support assumption, we have $r \leq n$.
So, when
$
\lambda_r \gtrsim \frac{\kappa}{(A \eps)^p} \sqrt{t \Paren{\frac{k}{t}}^p \ln \Paren{\frac{n}{\delta}}},
$
\cref{alg:multi-spike} succeeds with probability at least $1-\delta$.
\end{proof}

\subsection{General tensors for single spike}
\label{sec:general-tensor-extension}

We now briefly describe how to extend the model of \cref{def:sstm} to the case where the single tensor signal could be made up of $1 \leq \ell \leq p$ distinct $k$-sparse vectors\footnote{This model has been studied by \cite{DBLP:journals/corr/abs-2005-10743}. To be precise, they actually allow different \emph{known} sparsity levels for each $x_{(q)}$ vector. Here, we assume that all of them are $k$-sparse. It is conceptually straightforward (but complicated and obfuscates the key idea) to extend the current discussion to allow different sparsity values.}: instead of the signal being $\tensorpower{x}{p}$, it is $x_{(1)} \otimes \ldots \otimes x_{(p)}$ involving $\ell$ distinct vectors.
The discussions in this section can be further generalized to the case of multiple approximately flat spikes using the techniques from \cref{sec:techniques-multi-flat} and \cref{sec:techniques-multi}.

Given $\ell$, we can modify \cref{alg:basic} to search over $U_t^{\otimes \ell}$ and compute $\pmb v_*$ that maximizes $\iprod{\pmb Y^{(1)}, u_{(1)} \otimes \ldots \otimes u_{(p)}}$ where there are $\binom{p-1}{\ell-1}$ possible ways\footnote{There are $\binom{p-1}{\ell-1}$ ways to obtain integer solutions to $x_1 + \ldots + x_\ell = p$ assuming $x_1 \geq 1, \ldots, x_\ell \geq 1$.} to form the signal using $\ell$ distinct $t$-sparse vectors.
By \cref{lem:tensor-sparse-bound}, $\iprod{\pmb W^{(1)}, u_{(1)} \otimes \ldots \otimes u_{(p)}} \lesssim \sqrt{\ell t \log (n)}$ whenever one (or more) of the $t$-sparse vectors used to form $u_{(1)} \otimes \ldots \otimes u_{(p)}$ is \emph{not} part of the actual signal.
Suppose the maximizer $\pmb v_*$ involves $\ell$ distinct vectors $\pmb v_{*,(1)}, \ldots, \pmb v_{*,(\ell)}$.
For notational convenience, let us write $\pmb v_* \setminus \pmb v_{*,(q)}$ to mean the tensor of order $p-1$ derived by removing one copy of $\pmb v_{*,(q)}$ from $\pmb v_*$.
Then, for each distinct vector $\pmb v_{*,(q)}$ in the maximizer, define $\pmb \alpha_{(q)} \in \mathbb{R}$ where $\alpha_{(q),i} = \langle \pmb Y^{(2)}, (\pmb v_* \setminus \pmb v_{*,(q)}) \otimes e_i \rangle$ and output the $k$ largest entries of $\pmb \alpha_{(q)}$ as the support of $\pmb v_{*,(q)}$.
This modified algorithm will run in time $\cO\Paren{\ell (pe)^\ell n^{p + \ell t}}$.\footnote{The $\ell$ factor is due to using $\ell$ copies of $\pmb \alpha$. The increase from $p$ to $p^\ell$ is due to trying $\binom{p-1}{\ell-1}$ combinations. The increase of $n^t$ to $n^{\ell} e^{\ell}$ is due to searching over $U_t^{\otimes \ell}$.}
Adapting our analysis for the single spike accordingly (by using \cref{lem:tensor-sparse-bound} with $\ell$ distinct vectors) will show that we can recover the signal supports of each $u_{(q)}$ whenever $\lambda \gtrsim \sqrt{\ell t (k/t)^p \log n}$.\footnote{The extra $\sqrt{\ell}$ factor follows from \cref{lem:tensor-sparse-bound} to accomodate $\ell$ distinct vectors in the maximization.}
Notice that this is an improvement over the algorithm of \cite{DBLP:journals/corr/abs-2005-10743} for $\ell \in o(p)$ or $t \geq 2$ when $p \in \omega(1)$.

\section{Computational Low-Degree Bounds}
\label{sec:appendix-computational-bounds}

In this section, we formalize our results on the computational low-degree bounds for sparse tensor PCA.
We will first show a low-degree lower bound on the distinguishing problem using $k$-sparse scaled Rademacher unit vectors and then will give a low-degree distinguishing algorithm showing that the lower-bound is tight in certain parameter regimes.

Following the discussion in \cref{sec:low-degree-method}, we design the following distinguishing problem.

\begin{problem}[Hypothesis testing for single-spiked $k$-sparse scaled Rademacher vectors]
\label{problem:hypo-testing}
Given an observation tensor $\pmb Y \in \otimes^p\R^n$, decide whether:
\begin{align*}
\text{Null distribution $H_0$}: &\; \pmb Y = \pmb W\\
\text{Planted distribution $H_1$}: &\; \pmb Y = \pmb W + \lambda \tensorpower{\pmb x}{p}
\end{align*}
where $\pmb W$ is a noise tensor with i.i.d.\ $N(0,1)$ entries and $\pmb x$ is a $k$-sparse scaled Rademacher unit vector whose entries are independently drawn as follows:
\begin{align*}
	\pmb x_i = \begin{cases}
	1/\sqrt{k} &\text{ with probability }k/(2n),\\
	-1/\sqrt{k} &\text{ with probability }k/(2n),\\
	0 &\text{ with probability }1-k/n\,.
	\end{cases}
\end{align*}
\end{problem}

Formally speaking, the vector $\pmb x$ in \cref{problem:hypo-testing} is not necessarily a unit vector, as compared to the planted signal in the single-spike case of \cref{def:sstm}.
However, since $\pmb x$ is $k(1+o(1))$-sparse with high probability, a lower bound given by \cref{problem:hypo-testing} implies a distinguishing lower bound for single-spike sparse tensor model with a planted $k(1+o(1))$-sparse vector and signal strength $\frac{\lambda}{1+o(1)}$.
We study \cref{problem:hypo-testing} through the lens of low-degree polynomials.
Since $\pmb W$ is Gaussian noise, we use the set of normalized probabilists' Hermite polynomials $\{h_\alpha\}_{\alpha}$ as our orthogonal basis.
Our strategy is similar to prior works such as \cite{hopkins2017power,hopkins2017efficient,ding2019subexponential}: By examining the low-degree analogue of the $\chi^2$-divergence between probability measures, we will show that low-degree polynomial estimators cannot distinguish $H_0$ and $H_1$.

We now state the two main theorems that we will prove in the following subsections.
For a cleaner exposition, we defer some proofs to \cref{sec:comp-bound-proofs}.

\begin{restatable}[Single-spike low-degree distinguishability lower bound]{mythm}{lowdeglowerbound}
\label{thm:low-deg-lower-bound}
Let $p \geq 2$, $1 \leq D \leq 2n/p$, $\pmb Y \in \otimes^p\R^n$ be an observation tensor, $\pmb x$ be a $k$-sparse scaled Rademacher vector, and $\{h_\alpha\}_{\alpha}$ be the set of normalized probabilists' Hermite polynomials.
If $0 \leq \eps \leq 1/2$ and
\[
\lambda \leq \sqrt{\frac{\eps D}{e 4^p}} \min \left\{ \left( \frac{n}{pD} \right)^{p/4},\; \left( \frac{k}{pD} \left( 1 + \Abs{\ln \left( \frac{npD}{ek^2} \right)} \right) \right)^{p/2} \right\},
\]
then
\[
\chi^2(H_1\ \Vert\ H_0)
= \sup_{|\alpha| \leq D} \frac{\left( \mathbb{E}_{H_1} h_{\alpha}(\pmb Y) - \mathbb{E}_{H_0} h_{\alpha}(\pmb Y) \right)^2}{Var_{H_0} h_{\alpha}(\pmb Y)}
= \sum_{|\alpha| \leq D} \left( \mathbb{E}_{H_1} h_{\alpha}(\pmb Y) \right)^2
\leq 2 \eps.
\]
\end{restatable}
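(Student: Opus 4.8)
The plan is to compute $\sum_{|\alpha| \leq D} \left( \E_{H_1} h_\alpha(\pmb Y) \right)^2$ directly using the structure of the planted distribution. First I would note that under $H_1$, conditioned on the signal $\pmb x$, the entries of $\pmb Y$ are independent Gaussians $N(\lambda x_{i_1} \cdots x_{i_p}, 1)$, so by the multiplicativity of Hermite polynomials over product measures and the identity $\E_{z \sim N(\theta,1)}[h_m(z)] = \theta^m/\sqrt{m!}$, we get
\[
\E_{H_1} h_\alpha(\pmb Y) = \E_{\pmb x} \prod_{\text{entries } (i_1,\dots,i_p)} \frac{\left( \lambda x_{i_1} \cdots x_{i_p} \right)^{\alpha_{i_1,\dots,i_p}}}{\sqrt{\alpha_{i_1,\dots,i_p}!}}.
\]
Summing the squares over all multi-indices $\alpha$ with $|\alpha| \leq D$, one recognizes (as in \cite{kunisky2019notes,ding2019subexponential}) that this is bounded by a sum over "shapes": collections of at most $D$ entry-locations of the tensor (with multiplicity), and the quantity factors through $\E_{\pmb x, \pmb x'} (\lambda^2 \iprod{\pmb x, \pmb x'}^p)^{(\text{number of entries})}$ after introducing an independent copy $\pmb x'$. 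The cleanest route is to write $\sum_{|\alpha|\le D}(\E_{H_1}h_\alpha)^2 \le \sum_{D'=0}^{D} \frac{1}{D'!}\, \lambda^{2D'} \, \E_{\pmb x,\pmb x'}\big[\iprod{\pmb x,\pmb x'}^{pD'}\big] \cdot (\text{a combinatorial factor counting how the }D'\text{ entries distribute over the tensor})$, and control the combinatorial factor by a term like $(np/\,\cdot\,)^{D'}$-type bound arising from choosing which coordinates of the tensor are "hit".

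The core estimate is then a moment bound on $\iprod{\pmb x, \pmb x'}$ for two i.i.d.\ copies of the $k$-sparse scaled Rademacher vector. Since each coordinate contributes $\pm 1/k$ with probability $(k/n)^2$ and $0$ otherwise, $k \iprod{\pmb x,\pmb x'}$ is a sum of $n$ i.i.d.\ $\{0,\pm 1\}$ variables with this law, so I would bound its even moments $\E[(k\iprod{\pmb x,\pmb x'})^{2m}]$ by a standard estimate for sums of sparse Rademachers — something of the form $(Cm)^{2m}\cdot\max\{ (k^2/n)^?, 1\}$ handled via the ratio of consecutive moments / a Bennett-type argument, yielding the two-regime behavior $k^2/n \gtrless $ threshold. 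This is exactly where the two terms inside the $\min$ in the theorem statement come from: the $\left(\frac{n}{pD}\right)^{p/4}$ branch is the "dense-like" regime governed by $n^{p/2}$-type moments (matching tensor PCA), and the $\left(\frac{k}{pD}(1+|\ln(\cdots)|)\right)^{p/2}$ branch is the genuinely sparse regime where the $\ln(npD/ek^2)$ factor is the logarithmic loss familiar from sparse PCA \cite{ding2019subexponential}. After plugging the moment bound back into the shape sum, the hypothesis on $\lambda$ is precisely calibrated so that each term $D'$ in the sum is at most $\eps\cdot (\text{something} < 1)^{D'}$, making the geometric series sum to at most $2\eps$.

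The main obstacle I expect is the combinatorial bookkeeping in the "shape sum": correctly accounting for how a multiset of $D'$ tensor-entries, each an unordered-or-ordered $p$-tuple of coordinates in $[n]$, contributes both a counting factor (number of such configurations, which wants to be roughly $n^{pD'}/(\text{symmetry})$) and, after taking $\E_{\pmb x}$, a product of signal-moments that collapses to $\iprod{\pmb x,\pmb x'}^{pD'}$ only after the coordinates are appropriately "paired up" between the $\alpha$-copy and its square. Getting the factorials and the $1/D'!$ vs.\ $\alpha!$ normalizations to line up — and in particular keeping track of the constraint $D \le 2n/p$ which ensures $D'$ entries can be realized without coordinate collisions forcing degeneracies — is the delicate part; once the bound is reduced to $\sum_{D'} \frac{\lambda^{2D'}}{D'!}\binom{?}{?}\E[\iprod{\pmb x,\pmb x'}^{pD'}]$, the rest is the moment estimate above plus a geometric-series tail bound, which I would carry out in \cref{sec:comp-bound-proofs}. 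I would also handle the $\eps = 0$ edge case (where the claim is $\lambda \le 0$ hence vacuous) separately or simply by continuity.
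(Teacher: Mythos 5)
Your plan is essentially the paper's proof: the ``shape sum'' you describe is exactly the identity $\sum_{0<|\alpha|\le D}\left(\E_{H_1}h_\alpha(\pmb Y)\right)^2=\sum_{d=1}^{D}\frac{\lambda^{2d}}{d!}\,\E_{\pmb x,\pmb x'}\iprod{\pmb x,\pmb x'}^{pd}$ (no extra combinatorial factor and no need for $D\le 2n/p$ here --- the multinomial theorem gives equality), which the paper derives by direct counting over Hermite multi-indices in \cref{lem:hermite-expectation}, \cref{claim:counting}, and \cref{lem:hermite-expectation-upperbound}. Your two-regime moment bound for the sparse-Rademacher overlap is precisely the content of \cref{lem:sum-upperbound}, and the calibration of $\lambda$ together with the geometric-series tail $\sum_{d}\eps^d\le 2\eps$ is the paper's concluding step, so the two arguments coincide.
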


\begin{restatable}[Single-spike low-degree distinguishability lower bound]{mythm}{lowdegupperbound}
\label{thm:low-deg-upper-bound}
Let $p \geq 2$, $1 \leq D \leq 2n/p$, $\pmb Y \in \otimes^p\R^n$ be an observation tensor, $\pmb x$ be a $k$-sparse scaled Rademacher vector, and $\{h_\alpha\}_{\alpha}$ be the set of normalized probabilists' Hermite polynomials.
If either of the following holds:
\begin{enumerate}
\item If $D$ is even and
\[
\lambda \geq \eps^{\frac{1}{2D}} e^{\frac{p}{2}} \sqrt{D} \left( \frac{n}{pD} \right)^{\frac{p}{4}}
\]
\item If $p \leq n$, $D \leq \frac{\ln^2 (n/p)}{4e^2}$ is even, $\sqrt{np} \cdot \left( \frac{e\sqrt{D}}{\ln(n/k)} \right) \leq k \leq \sqrt{np}$, and
\[
\lambda \geq \eps^{\frac{1}{2D}} \sqrt{D} \left( \frac{k}{pD} \ln \left( \frac{n}{k} \right) \right)^{\frac{p}{2}}
\]
\end{enumerate}
then
\[
\chi^2(H_1\ \Vert\ H_0)
= \sup_{|\alpha| \leq D} \frac{\left( \mathbb{E}_{H_1} h_{\alpha}(\pmb Y) - \mathbb{E}_{H_0} h_{\alpha}(\pmb Y) \right)^2}{Var_{H_0} h_{\alpha}(\pmb Y)}
= \sum_{|\alpha| \leq D} \left( \mathbb{E}_{H_1} h_{\alpha}(\pmb Y) \right)^2
\geq \eps.
\]
\end{restatable}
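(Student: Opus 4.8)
The plan is to evaluate the degree-$D$ quantity of \cref{thm:low-deg-upper-bound} in closed form and then lower bound it separately under each of the two stated hypotheses. By \cref{corollary:optimal_polynomial_statistic_hermite} (with $H_0$ the standard Gaussian and $\{h_\alpha\}$ the Hermite basis), it suffices to lower bound $\sum_{0 < \Abs{\alpha} \le D}\Paren{\E_{H_1}h_\alpha(\pmb Y)}^2$. The multivariate Hermite basis factorizes over the $n^p$ tensor coordinates and the noise entries are independent, so the identity $\E_{\pmb z \sim N(\mu,1)}[h_m(\pmb z)] = \mu^m/\sqrt{m!}$ of \cref{sec:hermite-polys} gives $\E_{H_1}h_\alpha(\pmb Y) = \tfrac{\lambda^{\Abs{\alpha}}}{\sqrt{\alpha!}}\,\E_{\pmb x}\prod_{j}(\pmb x^{\tensor p})_j^{\alpha_j}$. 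Squaring, introducing an independent copy $\pmb x'$ of $\pmb x$, resumming over $\alpha$ by the multinomial theorem, and using $\iprod{\pmb x^{\tensor p},\pmb x'^{\tensor p}} = \iprod{\pmb x,\pmb x'}^p$ yields
\[
\sum_{0 < \Abs{\alpha}\le D}\Paren{\E_{H_1}h_\alpha(\pmb Y)}^2 \;=\; \E_{\pmb x,\pmb x'}\Brac{\,\sum_{d=1}^{D}\frac{\Paren{\lambda^2 \iprod{\pmb x,\pmb x'}^p}^d}{d!}\,}\,.
\]
Since $\iprod{\pmb x,\pmb x'} = \tfrac1k\sum_{i=1}^{\pmb m}\pmb\xi_i$ with i.i.d.\ Rademacher signs $\pmb\xi_i$ and overlap $\pmb m = \Card{\supp(\pmb x)\cap\supp(\pmb x')}$ Binomial with parameters $n$ and $(k/n)^2$, this reduces matters to lower bounding moments of a simple random walk averaged over $\pmb m$, and $\pmb m = 0$ contributes nothing.

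Because $D$ is even, the odd-$d$ summands vanish by the sign-symmetry of $\iprod{\pmb x,\pmb x'}$ and the rest are nonnegative, so in both cases I would condition on $\pmb m$ and keep only the $d=D$ term, bounding the expression below by $\tfrac{\lambda^{2D}}{D!}k^{-pD}\,\E_{\pmb m}\,\E_{\pmb\xi}\big[(\sum_{i\le\pmb m}\pmb\xi_i)^{pD}\big]$. Retaining only the index patterns in which every index occurs exactly twice shows $\E_{\pmb\xi}\big[(\sum_{i\le m}\pmb\xi_i)^{pD}\big] \ge \binom{m}{pD/2}\tfrac{(pD)!}{2^{pD/2}} \gtrsim (m\,pD/e^2)^{pD/2}$ whenever $m \ge pD/2$. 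In \emph{Case 1}, a standard binomial estimate (lower tail if $k\le\sqrt n$, concentration if $k>\sqrt n$) gives $\E_{\pmb m}\big[(\pmb m\,pD/e^2)^{pD/2}\,;\,\pmb m\ge pD/2\big] \gtrsim \big(k^2\,pD/(e^2 n)\big)^{pD/2}$, so the powers of $k$ cancel and the expression is $\gtrsim \tfrac{\lambda^{2D}}{D!}\big(pD/(e^2 n)\big)^{pD/2}$, which exceeds $\eps$ exactly when $\lambda \ge \eps^{1/(2D)}e^{p/2}\sqrt D\,(n/(pD))^{p/4}$ (using $(D!)^{1/(2D)}\le\sqrt D$). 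In \emph{Case 2}, the overlap $m$ that dominates is \emph{moderate}, of order $pD/\ln(n/k)$ within the stated constraints, and there $pD\gg m$, so the walk is large-deviation dominated with $\E_{\pmb\xi}\big[(\sum_{i\le m}\pmb\xi_i)^{pD}\big] \ge 2^{1-m}m^{pD}$; pairing this with $\bbP[\pmb m = m]\gtrsim (n/m)^{m}(k/n)^{2m}(1-(k/n)^2)^{n}$ and optimizing the product over the integer $m$ gives the threshold $\lambda \ge \eps^{1/(2D)}\sqrt D\,(k\ln(n/k)/(pD))^{p/2}$, the restrictions $D \le \ln^2(n/p)/(4e^2)$ and $\sqrt{np}\cdot e\sqrt D/\ln(n/k)\le k\le\sqrt{np}$ being precisely what keeps the optimal $m$ a legitimate overlap and the binomial estimate in force.

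I expect \emph{Case 2} to be the main obstacle: one has to control the binomial probability and the random-walk $pD$-th moment \emph{simultaneously} in the large-deviation window where the walk length sits far below the moment order, and then choose $m$ to balance them --- this is where the $\ln(n/k)$ factor and the two-sided bound on $k$ emerge, and the step is delicate with respect to constants. The remaining ingredients are routine: the $p$-odd sign cancellations are dealt with once and for all by the hypothesis that $D$ is even; passing from the $k(1+o(1))$-sparse Rademacher model of \cref{problem:hypo-testing} to the exactly $k$-sparse model of \cref{def:sstm} costs only $(1+o(1))$ factors; and everything left is standard Stirling-type estimation of binomial coefficients and factorials.
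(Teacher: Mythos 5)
Your Case 1 plan is essentially the paper's argument in replica form: restrict to $d=D$ and to the ``perfect matching'' patterns in which $pD/2$ distinct coordinates each appear exactly twice, which is precisely the paper's choice $s=pD/2$, $\beta_1=\cdots=\beta_{pD/2}=1$ in its exact expansion (\cref{lem:hermite-expectation}, \cref{claim:counting}). One caution: the statement has an exact constant $e^{p/2}$, so you cannot afford to lose factors of the form $c^{pD}$; your route of bounding $\E_{\pmb\xi}[(\sum_{i\le m}\pmb\xi_i)^{pD}]$ per fixed $m$ and then invoking binomial tail/concentration estimates for $\pmb m$ does lose such factors in parts of the range (e.g.\ the $(1-(k/n)^2)^n\ge e^{-k^2/n}$ term when $k^2/n$ is comparable to $pD$). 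The clean fix, fully within your approach, is to use the exact factorial-moment identity $\E\binom{\pmb m}{pD/2}=\binom{n}{pD/2}(k/n)^{pD}$ before any estimation, which reproduces the paper's term verbatim and then $\binom{n}{j}\ge(n/j)^j$, $(pD)!\ge(pD/e)^{pD}$, $D!\le D^D$ give exactly the stated threshold.

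The genuine gap is Case 2. You propose a moderate-overlap, large-deviation analysis (dominant $m\approx pD/\ln(n/k)$, walk moment $\ge 2^{1-m}m^{pD}$, optimize the product with $\Pr[\pmb m=m]$), you do not carry it out, and you yourself flag it as the delicate step. This route is both unexecuted and harder than necessary: the natural optimizer of $\Paren{\tfrac{ek^2}{mn}}^m\Paren{\tfrac mk}^{pD}$ involves $\ln\Paren{\tfrac{npD}{ek^2}}$ rather than $\ln(n/k)$, and the stray factors $2^{-m}$, $(1-(k/n)^2)^n$ and the $\ln\ln$ corrections make it unclear that you land on the exact threshold and side conditions stated. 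The paper needs no such analysis: it keeps the \emph{same} $s=pD/2$ term as Case 1, obtaining $\chi^2\ge\Paren{\tfrac{\lambda^2}{D}\Paren{\tfrac{pD}{ek\,}\cdot\tfrac{k}{\sqrt{npD}}}^{p}}^D$, and then observes that the assumed lower bound $k\ge\sqrt{np}\cdot e\sqrt D/\ln(n/k)$ is exactly the condition $k\ln(n/k)\ge e\sqrt{npD}$, so $\Paren{\tfrac{k\ln(n/k)}{\sqrt{e^2npD}}}^{p}\ge 1$ and the Case 1 bound already implies $\chi^2\ge\Paren{\tfrac{\lambda^2}{D}\Paren{\tfrac{pD}{k\ln(n/k)}}^{p}}^D\ge\eps$; the hypotheses $p\le n$ and $D\le\ln^2(n/p)/(4e^2)$ only guarantee the $k$-interval is nonempty. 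In other words, in the stated range of $k$ the ``sparse'' threshold dominates the ``dense'' one, so Case 2 is a one-line corollary of your Case 1 computation --- this is the observation your plan is missing, and without it (or a fully worked-out deviation argument hitting the exact constants) the proposal does not prove Case 2 as stated.
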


\subsection{Low-degree lower bound}

To prove our computational lower bound, we first compute $\left( \E_{H_1} h_{\alpha}(\pmb Y) \right)^2$ explicitly in \cref{lem:hermite-expectation} using properties of the normalized probabilists' Hermite polynomials for a given degree parameter $D$.
Then, we upper bound $\sum_{\Abs{\alpha} \leq D} \left( \E_{H_1} [h_{\alpha}(\pmb Y)] \right)^2$ using \cref{lem:hermite-expectation-upperbound} and \cref{lem:sum-upperbound}.
Solving for the condition on $\lambda$ such that $\sum_{\alpha} \left( \E_{H_1} [h_{\alpha}(\pmb Y)] \right)^2 \ll \eps$ yields our computational lower bound \cref{thm:low-deg-lower-bound}.

\begin{restatable}{lem}{hermiteexpectation}
\label{lem:hermite-expectation}
Let $p \geq 2$, $d \geq 1$, $\pmb Y \in \otimes^p\R^n$ be an observation tensor, $\pmb x$ be a $k$-sparse scaled Rademacher vector, and $\{h_\alpha\}_{\alpha}$ be the set of normalized probabilists' Hermite polynomials.
An entry of $\pmb Y \in \otimes^p\R^n$ can be indexed by either an integer from $[n^p]$ or a $p$-tuple.
Define $\phi: [n^p] \rightarrow [n]^p$, $\alpha$, $c(\alpha)$, $s(\alpha)$, and $\mathbbm{1}_{even(c(\alpha))}$ as follows:
\begin{itemize}
	\item $\phi(i)$ maps to a $p$-tuple indicating the $p$ (possibly repeated) entries of $\pmb x$ that are used.
	\item $\alpha = (\alpha_1, \ldots, \alpha_{n^p})$ is an $n^p$-tuple that corresponds to a Hermite polynomial of degree $|\alpha| = \sum_{i=1}^{n^p} \alpha_i$.
		For each $i$, $\alpha_i$ is the number of times entry $\pmb Y_{\phi(i)}$ was chosen, where each $\pmb Y_{\phi(i)}$ references $p$ coordinates of $\pmb x$.
	\item $c(\alpha) = (c_1, \ldots, c_n)$, where $c_j$ is the number of times $\pmb x_j$ is used in $\alpha$.
	\item $s(\alpha)$ is the number of distinct non-zero $\pmb x_j$'s in $c(\alpha)$.
	\item $\mathbbm{1}_{even(c(\alpha))}$ be the indicator whether \emph{all} $c_j$'s are even.
\end{itemize}
Under these definitions, we have the following:
\[
\left( \E_{H_1} h_{\alpha}(\pmb Y) \right)^2
= \lambda^{2d} k^{-pd} \mathbbm{1}_{even(c(\alpha))} \left( \frac{k}{n} \right)^{2 s(\alpha)} \left( \Pi_{i=1}^{n^p} \frac{1}{(\alpha_i)!} \right).
\]
\end{restatable}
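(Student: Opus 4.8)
The plan is to compute $\E_{H_1} h_\alpha(\pmb Y)$ directly from the definitions, using the two facts about normalized probabilists' Hermite polynomials recorded in \cref{sec:hermite-polys}: $\E_{\pmb z \sim N(\mu,1)}[h_m(\pmb z)] = \mu^m / \sqrt{m!}$, and $h_0 \equiv 1$. First I would observe that under $H_1$, conditioned on the planted vector $\pmb x$, the entries of $\pmb Y$ are independent Gaussians with $\pmb Y_{\phi(i)} \sim N(\lambda \cdot \pmb x_{\phi(i)_1} \cdots \pmb x_{\phi(i)_p},\, 1)$, i.e.\ the mean of coordinate $i$ is $\lambda$ times a product of $p$ (possibly repeated) coordinates of $\pmb x$. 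Since the Hermite polynomial $h_\alpha(\pmb Y) = \prod_{i=1}^{n^p} h_{\alpha_i}(\pmb Y_{\phi(i)})$ factorizes over coordinates, conditioning on $\pmb x$ and using independence gives
\[
\E_{H_1}\big[h_\alpha(\pmb Y) \,\big|\, \pmb x\big] = \prod_{i=1}^{n^p} \frac{\big(\lambda\, \pmb x_{\phi(i)_1}\cdots \pmb x_{\phi(i)_p}\big)^{\alpha_i}}{\sqrt{(\alpha_i)!}} = \lambda^{|\alpha|} \Big(\prod_{i=1}^{n^p}\tfrac{1}{\sqrt{(\alpha_i)!}}\Big) \prod_{j=1}^n \pmb x_j^{\,c_j},
\]
where the last equality just regroups the product of $\pmb x$-coordinates: each index $j$ appears $c_j$ times in total across all the chosen tensor entries, which is exactly the definition of $c(\alpha)$. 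Here I would also note $|\alpha| = d$ by the degree hypothesis, matching the $\lambda^{2d}$ and $\prod (\alpha_i)!^{-1}$ appearing (squared) in the claimed formula.

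Next I would take the expectation over the random sparse Rademacher vector $\pmb x$. Because the coordinates $\pmb x_j$ are i.i.d., $\E_{\pmb x}\big[\prod_j \pmb x_j^{c_j}\big] = \prod_j \E[\pmb x_j^{c_j}]$. For a single coordinate, $\E[\pmb x_j^{c_j}] = 0$ if $c_j$ is odd (the $\pm 1/\sqrt{k}$ cases cancel by symmetry), and if $c_j$ is even and nonzero, $\E[\pmb x_j^{c_j}] = \tfrac{k}{n}\cdot k^{-c_j/2}$ (probability $k/n$ of being nonzero, times $(1/\sqrt k)^{c_j}$); if $c_j = 0$ the factor is $1$. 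Multiplying over the $s(\alpha)$ indices with $c_j$ even and nonzero and noting $\sum_j c_j = p\,d$ (each of the $|\alpha|=d$ chosen tensor entries contributes $p$ coordinate-slots), one gets
\[
\E_{\pmb x}\Big[\prod_{j=1}^n \pmb x_j^{\,c_j}\Big] = \mathbbm{1}_{\mathrm{even}(c(\alpha))}\,\Big(\tfrac{k}{n}\Big)^{s(\alpha)} k^{-pd/2}.
\]
Combining with the conditional expectation, $\E_{H_1} h_\alpha(\pmb Y) = \lambda^d k^{-pd/2}\,\mathbbm{1}_{\mathrm{even}(c(\alpha))}\,(k/n)^{s(\alpha)}\prod_i (\alpha_i)!^{-1/2}$, and squaring gives exactly the stated identity.

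The calculation is essentially bookkeeping, so the main thing to get right — and the step I expect to be the only real subtlety — is the combinatorics of the index sets: verifying that $\sum_j c_j = pd$ and that the definition of $c(\alpha)$ via $\phi$ correctly tracks multiplicities when tensor entries have repeated coordinates (e.g.\ an entry $\pmb Y_{j,j,\ldots}$ contributes $\pmb x_j$ with multiplicity greater than one), so that the regrouping $\prod_i (\pmb x_{\phi(i)_1}\cdots\pmb x_{\phi(i)_p})^{\alpha_i} = \prod_j \pmb x_j^{c_j}$ is exact. Once the indexing conventions are pinned down, the rest follows from independence and the two Hermite identities. I would present the conditional-expectation step and the $\pmb x$-averaging step as two displayed computations and then simply multiply and square.
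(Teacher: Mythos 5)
Your proposal is correct and follows essentially the same route as the paper's proof: condition on $\pmb x$, factorize $h_\alpha$ over the independent Gaussian entries, apply the identity $\E_{\pmb z\sim N(\mu,1)}[h_m(\pmb z)]=\mu^m/\sqrt{m!}$ with mean $\lambda\prod_{j\in\phi(i)}\pmb x_j$, regroup into $\prod_j \pmb x_j^{c_j}$, and then compute the moments of the i.i.d.\ sparse Rademacher coordinates using $\sum_j c_j = pd$. The bookkeeping you flag (multiplicities under $\phi$ and $\sum_j c_j = pd$) is handled identically in the paper, so no gap remains.
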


\begin{restatable}{lem}{hermiteexpectationupperbound}
\label{lem:hermite-expectation-upperbound}
Let $p \geq 2$, $1 \leq D \leq 2n/p$, $\pmb Y \in \otimes^p\R^n$ be an observation tensor, $\pmb x$ be a $k$-sparse scaled Rademacher vector, and $\{h_\alpha\}_{\alpha}$ be the set of normalized probabilists' Hermite polynomials.
Then,
\[
\sum_{\Abs{\alpha} \leq D} \left( \E_{H_1} [h_{\alpha}(\pmb Y)] \right)^2
\leq \sum_{d=1}^D \frac{\lambda^{2d}}{d!} \sum_{s=1}^{pd/2} \left( \frac{ek^2}{sn} \right)^s \left( \frac{s}{k} \right)^{pd}.
\]
\end{restatable}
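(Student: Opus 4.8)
The plan is to start from the closed form in \cref{lem:hermite-expectation} and organize the sum $\sum_{0<\Abs{\alpha}\le D}\Paren{\E_{H_1}h_\alpha(\pmb Y)}^2$ by the two parameters that control the size of each term: the degree $d=\Abs{\alpha}$ and the number $s=s(\alpha)$ of distinct coordinates of $\pmb x$ touched by $\alpha$. (The $\alpha=0$ term contributes $1$ and is excluded, matching the normalization in \cref{corollary:optimal_polynomial_statistic_hermite}.) By \cref{lem:hermite-expectation} only multi-indices with $c(\alpha)$ all-even contribute, and for those
\[
\Paren{\E_{H_1}h_\alpha(\pmb Y)}^2 = \lambda^{2d}\,k^{-pd}\,\Paren{\tfrac{k}{n}}^{2s(\alpha)}\prod_{i=1}^{n^p}\tfrac{1}{(\alpha_i)!}\,,
\]
so it suffices to control, for fixed $d$ and $s$, the quantity $\sum \prod_{i}\frac{1}{(\alpha_i)!}$ where the sum ranges over $\alpha$ with $\Abs{\alpha}=d$, $c(\alpha)$ all-even, and $s(\alpha)=s$.

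The key step is a combinatorial reformulation. A multi-index $\alpha$ with $\Abs{\alpha}=d$ records a multiset of $d$ entries of $\pmb Y$ (entry $i$ with multiplicity $\alpha_i$), and the number of ordered length-$d$ words realizing that multiset is the multinomial coefficient $d!/\prod_i(\alpha_i)!$; hence $\prod_i\frac{1}{(\alpha_i)!}=\frac{1}{d!}\binom{d}{\alpha}$, and summing over any class of $\alpha$ gives $\frac{1}{d!}$ times the number of words of that type. Unrolling each chosen entry into its $p$ coordinates via $\phi$, a length-$d$ word is exactly an assignment of a coordinate in $[n]$ to each of $pd$ slots (grouped in $d$ blocks of $p$), and $c(\alpha)$ is precisely the vector of coordinate multiplicities of this assignment — in particular ``$c(\alpha)$ all-even'' and ``$s(\alpha)=s$'' are well-defined functions of the word. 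So I must count assignments of $pd$ slots to $[n]$ in which every used coordinate is used an even number of times and exactly $s$ coordinates are used. Dropping the parity and surjectivity constraints, this count is at most $\binom{n}{s}\,s^{pd}$ (choose the $s$-element support, then fill the $pd$ slots arbitrarily from it); the parity constraint forces $1\le s\le pd/2$.

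Putting the pieces together, for each $d$,
\[
\sum_{\Abs{\alpha}=d}\Paren{\E_{H_1}h_\alpha(\pmb Y)}^2 \;\le\; \lambda^{2d}\,k^{-pd}\sum_{s=1}^{pd/2}\Paren{\tfrac{k}{n}}^{2s}\cdot\frac{1}{d!}\binom{n}{s}s^{pd}\,,
\]
and using $\binom{n}{s}\le(en/s)^s$ collapses $\Paren{k/n}^{2s}\binom{n}{s}$ into $\Paren{ek^2/(sn)}^s$ while $k^{-pd}s^{pd}=\Paren{s/k}^{pd}$; summing over $1\le d\le D$ yields the claimed bound.

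The only real obstacle is the bijective bookkeeping in the second paragraph: carefully identifying multi-indices $\alpha$ with words, then words with $pd$-slot assignments, and verifying that the crude count $\binom{n}{s}s^{pd}$ genuinely upper bounds the exact number of valid assignments with support size exactly $s$ (it does, since every such assignment has some $s$-element support $S$ and is one of the $s^{pd}$ fillings drawn from $S$). Once the counting identity $\prod_i\frac{1}{(\alpha_i)!}=\frac{1}{d!}\binom{d}{\alpha}$ and this slot-assignment picture are in place, everything remaining is the arithmetic simplification above.
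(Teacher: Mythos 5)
Your proposal is correct and follows essentially the same route as the paper: decompose by degree $d$ and support size $s$, convert $\prod_i 1/(\alpha_i)!$ into $\tfrac{1}{d!}$ times a word/slot-assignment count, bound the parity-constrained count by $\binom{n}{s}s^{pd}$, and finish with $\binom{n}{s}\le (en/s)^s$. The only difference is cosmetic: the paper first proves an exact counting identity (\cref{claim:counting}, with multinomial coefficients over even parts) and then relaxes it, whereas you pass directly to the upper bound, which is a perfectly valid shortcut.
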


\begin{restatable}[]{lem}{sumupperbound}
\label{lem:sum-upperbound}
For $p \geq 2$, $d \geq 1$, $1 \leq k \leq n$ and $1 \leq s \leq pd/2$, we have
\[
\left( \frac{ek^2}{sn} \right)^s \left( \frac{s}{k} \right)^{pd}
\leq \left[ \frac{2pd}{\min \left\{ \sqrt{npd},\; k \left( 1 + \Abs{\ln \left( \frac{npd}{ek^2} \right)} \right) \right\}} \right]^{pd}.
\]
\end{restatable}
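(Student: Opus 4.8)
The plan is to fix $p \geq 2$, $d \geq 1$, $1 \leq k \leq n$, and treat the left-hand side $f(s) = \left( \frac{ek^2}{sn} \right)^s \left( \frac{s}{k} \right)^{pd}$ as a function of the real parameter $s \in [1, pd/2]$, then bound $\max_s f(s)$ by something of the form $\left[ \frac{2pd}{M} \right]^{pd}$ with $M = \min\{\sqrt{npd},\ k(1 + |\ln(npd/(ek^2))|)\}$. Rewriting, $f(s) = e^s k^{2s - pd} n^{-s} s^{pd - s}$. Taking logarithms, $\ln f(s) = s + (2s - pd)\ln k - s \ln n + (pd - s)\ln s$, and I would differentiate in $s$ to locate the maximizer $s^\star$; the stationarity condition is $\frac{d}{ds}\ln f(s) = 1 + 2\ln k - \ln n + (pd/s) \cdot (\text{something})$, more precisely $2 + 2\ln k - \ln n - \ln s + (pd - s)/s = 0$, i.e. roughly $\ln s^\star \approx 1 + \ln(k^2/n) + pd/s^\star$. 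The two cases in $M$ arise depending on whether the $pd/s^\star$ term dominates (giving $s^\star \approx \sqrt{pd \cdot (\text{const})}$, hence the $\sqrt{npd}$ branch) or the logarithmic terms dominate (giving the $k(1 + |\ln(npd/(ek^2))|)$ branch).

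The cleanest route, rather than exactly solving the transcendental stationarity equation, is to prove the bound holds pointwise for every integer $s$ in range by a direct estimate. First I would show $\left( \frac{ek^2}{sn} \right)^s \leq \left( \frac{ek^2}{sn} \right)^{s}$ trivially and bound $\left( \frac{s}{k} \right)^{pd}$ against the target; it is cleaner to show $f(s)^{1/(pd)} \leq \frac{2pd}{M}$, i.e. $\left( \frac{ek^2}{sn} \right)^{s/(pd)} \cdot \frac{s}{k} \leq \frac{2pd}{M}$. Using $s \leq pd/2$ we get $\frac{s}{k} \leq \frac{pd}{2k}$, so if $M \leq k$ (the second branch, up to the log factor) one needs the $\left( \frac{ek^2}{sn} \right)^{s/(pd)}$ factor to be at most $\frac{4 \cdot (\text{the other branch})}{M}$-ish; for the first branch, $M = \sqrt{npd}$, I would instead write $\frac{s}{k} \cdot \left(\frac{ek^2}{sn}\right)^{s/(pd)}$ and bound it by AM–GM type inequalities, splitting the exponent so that the $k$-dependence cancels. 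The key elementary inequality I expect to lean on is $x^{-x} \leq e^{1/e}$ (or the sharper $\max_{x>0} (c/x)^x = e^{c/e}$) and the fact that $(a/s)^s \leq e^{a/e}$, applied with $a$ a suitable combination of $p$, $d$, $k$, $n$.

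The main obstacle I anticipate is handling the piecewise/min structure of $M$ cleanly: one must verify that whichever of the two expressions in the $\min$ is smaller, the corresponding bound on $f(s)^{1/(pd)}$ holds for \emph{all} $s$ in range simultaneously, and near the crossover point $\sqrt{npd} \approx k(1 + |\ln(npd/(ek^2))|)$ — equivalently $npd \approx k^2$, where the log term $|\ln(npd/(ek^2))|$ is near its minimum $1$ — the estimates are tightest and there is little slack, which is presumably why the factor $2pd$ (rather than $pd$) appears in the numerator. I would therefore handle the two regimes $npd \leq ek^2$ and $npd \geq ek^2$ separately: in the first, $\left(\frac{ek^2}{sn}\right)^{s} \geq 1$ typically so the $s^{pd}$ growth is what matters and $s \leq pd/2$ gives the bound directly; in the second, $\left(\frac{ek^2}{sn}\right)^s$ decays and one optimizes the trade-off via the logarithm. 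Finally I would absorb the $|\ln(\cdot)|$ and the constant $e$'s into the stated $(1 + |\ln(npd/(ek^2))|)$ form, checking that no additional constant worse than $2$ is needed. The remaining steps are routine calculus-of-one-variable bookkeeping.
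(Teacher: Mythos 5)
Your plan is essentially the paper's proof: take the $pd$-th root so it suffices to bound $\left(\frac{ek^2}{sn}\right)^{s/(pd)}\frac{s}{k}$ pointwise in $s$, split according to whether $ek^2 \geq npd$ or not, use the endpoint $s \leq pd/2$ in the first regime (which is exactly where the $\sqrt{npd}$ branch comes from) and a one-variable optimization in the logarithmic regime, with the factor $2$ coming from $\left(\frac{pd}{s}\right)^{s/(pd)} \leq 2$ rather than crossover slack. Two minor notes: since the $\min$ sits in the denominator of the claimed bound, it suffices to establish whichever single branch is convenient in each regime (you need not track which expression is actually smaller), and the one-variable fact needed in the second regime is $\max_{x>0} x a^x = \frac{1}{e\ln(1/a)}$ applied with $x = s/(pd)$ and $a = \frac{ek^2}{npd}$, not $(c/x)^x \leq e^{c/e}$ -- both are routine, as you say.
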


We are now ready to prove \cref{thm:low-deg-lower-bound}.

\begin{proof}[Proof of \cref{thm:low-deg-lower-bound}]
\cref{lem:hermite-expectation-upperbound} and \cref{lem:sum-upperbound} together tell us that
\begin{align*}
\sum_{\Abs{\alpha} \leq D} \left( \E_{H_1} [f_{\alpha}(\pmb Y)] \right)^2
& \leq \sum_{d=1}^D \frac{\lambda^{2d}}{d!} \frac{pd}{2} \left[ \frac{2pd}{\min \left\{ \sqrt{npd},\; k \left( 1 + \Abs{\ln \left( \frac{npd}{ek^2} \right)} \right) \right\}} \right]^{pd}\\
& = \sum_{d=1}^D \lambda^{2d} \left[ \left( \frac{1}{2d!} \right)^{\frac{1}{pd}} \left( pd \right)^{\frac{1}{pd}} \frac{2pd}{\min \left\{ \sqrt{npd},\; k \left( 1 + \Abs{\ln \left( \frac{npd}{ek^2} \right)} \right) \right\}} \right]^{pd}\\
& \leq \sum_{d=1}^D \lambda^{2d} \left[ \left( \frac{e}{d} \right)^{\frac{1}{p}} \frac{4pd}{\min \left\{ \sqrt{npd},\; k \left( 1 + \Abs{\ln \left( \frac{npd}{ek^2} \right)} \right) \right\}} \right]^{pd} && \text{$(\star)$}\\
& = \sum_{d=1}^D \lambda^{2d} \left( \frac{e}{d} \right)^{d} \frac{4^{pd}}{\min \left\{ \left( \frac{n}{pd} \right)^{pd/2},\; \left( \frac{k}{pd} \left( 1 + \Abs{\ln \left( \frac{npd}{ek^2} \right)} \right) \right)^{pd} \right\}}\\
& \leq \sum_{d=1}^D \eps^d \left( \frac{D}{d} \right)^{d} \frac{\min \left\{ \left( \frac{n}{pD} \right)^{pd/2},\; \left( \frac{k}{pD} \left( 1 + \Abs{\ln \left( \frac{npD}{ek^2} \right)} \right) \right)^{pd} \right\}}{\min \left\{ \left( \frac{n}{pd} \right)^{pd/2},\; \left( \frac{k}{pd} \left( 1 + \Abs{\ln \left( \frac{npd}{ek^2} \right)} \right) \right)^{pd} \right\}} && \text{$(\ast)$}\\
& \leq \sum_{d=1}^D \eps^d && \text{$(\dag)$}
\end{align*}
where $(\star)$ is because $\frac{1}{2d!} \leq \frac{1}{d!} \leq \left( \frac{e}{d} \right)^d$ and $(pd)^{\frac{1}{pd}} \leq 2$, $(\ast)$ is the theorem assumption on $\lambda$, and $(\dag)$ is because $p \geq 2$.
The statement follows since $\sum_{d=1}^D \eps^d \leq \frac{\eps}{1-\eps} \leq 2 \eps$ for $0 \leq \eps \leq 1/2$.
\end{proof}

\subsection{Low-degree distinguishing algorithm}
\label{sec:low-deg-distinguishing-algo}

The starting point of our distinguishing algorithm is the explicit expression from \cref{lem:hermite-expectation} and \cref{claim:counting}.
Assuming $D$ is even\footnote{For $D \geq 2$, we consider Hermite polynomials of degree either $D$ or $D-1$ (whichever is even).}, we show that degree $D$ Hermite polynomials is ``sufficiently large'' by considering a subset of terms in the explicit summation.

\begin{proof}[Proof of \cref{thm:low-deg-upper-bound}]
Under the assumption of $D \leq \frac{2n}{p}$ and $D$ is even, \cref{lem:hermite-expectation} and \cref{claim:counting} together tell us that
\begin{align*}
\sum_{\Abs{\alpha} \leq D} \left( \E_{H_1} [f_{\alpha}(\pmb Y)] \right)^2
& = \sum_{d=1}^D \lambda^{2d} k^{-pd} \frac{1}{d!} \sum_{s=1}^{\left\lfloor pd/2 \right\rfloor} \binom{n}{s} \left( \frac{k}{n} \right)^{2s} \sum_{\substack{\beta_1 + \ldots + \beta_s = pd/2\\ \beta_1 \neq 0, \ldots, \beta_s \neq 0}} \binom{pd}{2 \beta_1, \ldots, 2 \beta_s}\\
& \geq \frac{\lambda^{2D}}{D!} k^{-pD} \sum_{s=1}^{pD/2} \binom{n}{s} \left( \frac{k}{n} \right)^{2s} \sum_{\substack{\beta_1 + \ldots + \beta_s = pD/2\\ \beta_1 \neq 0, \ldots, \beta_s \neq 0}} \binom{pD}{2 \beta_1, \ldots, 2 \beta_s}\\
& \geq \frac{\lambda^{2D}}{D!} k^{-pD} \binom{n}{pD/2} \left( \frac{k}{n} \right)^{pD} \binom{pD}{2, \ldots, 2} && \text{($\dag$)}\\
& \geq \frac{\lambda^{2D}}{D!} k^{-pD} \left( \frac{2n}{pD} \right)^{pD/2} \left( \frac{k}{n} \right)^{pD} \left( \frac{pD}{e} \right)^{pD} 2^{-pD/2} && \text{($\ddag$)}\\
& \geq \left( \frac{\lambda^{2}}{D} \left( \frac{kpD}{ek \sqrt{npD}} \right)^{p} \right)^D && \text{($\star$)}
\end{align*}
where ($\dag$) is by only using $s = pD/2$ and $\beta_1 = \ldots = \beta_{pD/2} = 1$, ($\ddag$) is due to $\binom{n}{k} \geq (n/k)^k$ and $n! \geq (n/e)^n$, and  ($\star$) is because $D! \leq D^D$.

When $\lambda \geq \eps^{\frac{1}{2D}} e^{\frac{p}{2}} \sqrt{D} \left( \frac{n}{pD} \right)^{\frac{p}{4}}$, we see that
\[
\chi^2(H_1\ \Vert\ H_0)
= \sum_{\Abs{\alpha} \leq D} \left( \E_{H_1} [f_{\alpha}(\pmb Y)] \right)^2
\geq \left( \frac{\lambda^{2}}{D} \left( \frac{kpD}{ek \sqrt{npD}} \right)^{p} \right)^D
= \left( \frac{\lambda^{2}}{De^p} \left( \frac{pD}{n} \right)^{\frac{p}{2}} \right)^D
\geq \eps
\]

We now assume that $p \leq n$, $D \leq \frac{\ln^2 (n/p)}{4e^2}$ and $\sqrt{np} \cdot \left( \frac{e\sqrt{D}}{\ln(n/k)} \right) \leq k \leq \sqrt{np}$.
Then,
\[
\left( \frac{\lambda^{2}}{D} \left( \frac{kpD}{ek \sqrt{npD}} \right)^{p} \right)^D
= \left( \frac{\lambda^{2}}{D} \left( \frac{pD}{k \ln (\frac{n}{k})} \right)^{p} \left( \frac{k \ln (\frac{n}{k})}{\sqrt{e^2 npD}} \right)^{p} \right)^D
\geq \left( \frac{\lambda^{2}}{D} \left( \frac{pD}{k \ln (\frac{n}{k})} \right)^{p} \right)^D
\]
where the last inequality is because $\sqrt{np} \cdot \left( \frac{e\sqrt{D}}{\ln(n/k)} \right) \leq k$.
The constraints $p \leq n$ and $\sqrt{D} e \leq \frac{1}{2} \ln (n/p) = \ln (n/\sqrt{np}) \leq \ln (n/k)$ ensure that there exists valid values of $k$.

So when $\lambda \geq \eps^{\frac{1}{2D}} \sqrt{D} \left( \frac{k}{pD} \ln \left( \frac{n}{k} \right) \right)^{\frac{p}{2}}$, we see that
\[
\chi^2(H_1\ \Vert\ H_0)
= \sum_{\Abs{\alpha} \leq D} \left( \E_{H_1} [f_{\alpha}(\pmb Y)] \right)^2
\geq \left( \frac{\lambda^{2}}{D} \left( \frac{pD}{k \ln (\frac{n}{k})} \right)^{p} \right)^D
\geq \eps
\]
\end{proof}

\section{Information-theoretic Lower Bound}
\label{sec:appendix-info-lower-bound}

In this section, we will use standard techniques\footnote{See \cref{sec:info-theory} for a brief introduction.} in information theory to lower bound the minimax risk for approximate signal recovery in the single-spike sparse tensor PCA.
Equivalent results appeared in \cite{DBLP:journals/corr/PerryWB16, DBLP:conf/nips/Niles-WeedZ20}.
Nevertheless, we include this section for completeness.

Consider the following setting: Given a single tensor observation $\pmb Y = \pmb W + \lambda \tensorpower{x}{p}$ generated from an underlying signal $x \in U_t$ (i.e.\ the parameter of the observation is $\theta(\pmb Y) = x$), an estimator $\widehat{\theta}(\pmb Y)$ outputs some unit vector in $\widehat{x} \in U_k$.
For two vectors $x$ and $x'$, we use the pseudometric\footnote{Instead of the ``standard'' $\Norm{x-x'}_2$ loss, we want a loss function that captures the ``symmetry'' that $\langle x, x' \rangle^p = \langle x, -x' \rangle^p$ for even tensor powers $p$. Clearly, $\rho(x,y) = \rho(y,x)$ and one can check that $\rho(x,y) \leq \rho(x,z) + \rho(z,y)$. Observe that $\rho$ is a pseudometric (and not a metric) because $\rho(x,y) = 0$ holds for $x = -y$.} $\rho(x,x') = \min \{ \Norm{x-x'}_2, \Norm{x+x'}_2 \}$ and the loss function $\Phi(t) = t^2 / 2$.
Thus, $\Phi(\rho(x,x')) = 1 - \Abs{\langle x, x' \rangle}$ with the corresponding minimax risk being
\[
\inf_{\widehat{\theta}} \sup_{x \in U_k} \E_{\pmb{Y}}\left[ \Phi \left( \rho \left( \widehat{\theta}(\pmb Y), \theta(\pmb Y) \right) \right) \right]
= \inf_{\widehat{x} \in U_k} \sup_{x \in U_k} \E_{\pmb Y} \left[ 1 - \Abs{\langle \widehat{x}, x \rangle} \right]
\]
Let $\cX \subseteq U_k$ be an $\eps$-packing of $U_k$ of size $\Abs{\cX} = m \geq P(U_k, \rho, \eps)$.
Then, one can show that the minimax risk can be lower bounded as follows:
\begin{equation}
\label{eqn:info-lb-eqn}
\inf_{\widehat{x} \in U_k} \sup_{x \in U_k} \E_{\pmb Y} \left[ 1 - \Abs{\langle \widehat{x}, x \rangle} \right]
\geq 
\frac{\eps^2}{4} \cdot \left( 1 - \frac{\max_{u, v \in \cX} D_{KL} \left( \bbP_{\pmb Y \sim \cY \mid u} \Bigm\Vert \bbP_{\pmb Y \sim \cY \mid v} \right) + 1}{\log m} \right)
\end{equation}
where $D_{KL}(\cdot \Vert \cdot)$ is the KL-divergence function and $\bbP_{\pmb Y \sim \cY \mid u}$ is the probability distribution of observing $\pmb Y$ from signal $u$ with additive standard Gaussian noise tensor $\pmb W$.
The following information-theoretic lower bound is shown by lower bounding \cref{eqn:info-lb-eqn}.

\begin{restatable}[Single-spike info-theoretic lower bound]{mythm}{minimaxbound}
\label{thm:minimax-bound}
Given $\pmb Y = \pmb W + \lambda \tensorpower{x}{p} \in \otimes^p\R^n$ where $\pmb W$ is a noise tensor with i.i.d.\ $N(0,1)$ entries and the planted signal $x \in U_k$ has signal strength $\lambda$.
Let $\widehat{x} \in U_k$ be the recovered signal by \emph{any} estimator.
If $n \geq 2k$ and $\lambda \leq \sqrt{\frac{k}{12} \log \left( \frac{n-k}{k} \right) - \frac{1}{2}}$, then
\[
\inf_{\widehat{x} \in U_k} \sup_{x \in U_k} \E_{\pmb Y} \left[ 1 - \Abs{\langle \widehat{x}, x \rangle} \right] \geq 0.05.
\]
\end{restatable}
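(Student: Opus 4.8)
The plan is to run the Fano-method argument that culminates in \cref{eqn:info-lb-eqn}, instantiated with a carefully chosen packing of $U_k$; the three ingredients I need are an $\eps$-packing $\cX \subseteq U_k$ under $\rho$, a bound on the pairwise KL divergences, and the size $m = \Abs{\cX}$. For the KL divergences: viewing $\pmb Y \in \otimes^p\R^n$ as a vector in $\R^{n^p}$, the law of $\pmb Y$ given a signal $u$ is $N(\lambda u^{\otimes p}, I)$, so for any two signals $D_{KL}(\bbP_{\pmb Y\mid u}\Vert\bbP_{\pmb Y\mid v}) = \tfrac12\Norm{\lambda u^{\otimes p} - \lambda v^{\otimes p}}^2 = \lambda^2(1 - \iprod{u,v}^p)$. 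I will take $\cX$ to consist of $k$-sparse flat vectors all of whose nonzero entries equal $+1/\sqrt k$, so that $\iprod{u,v}\geq 0$ and hence $\max_{u,v\in\cX} D_{KL}(\bbP_{\pmb Y\mid u}\Vert\bbP_{\pmb Y\mid v}) \leq \lambda^2$.

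For the packing itself: an all-positive vector in $U_k$ is determined by its support $S\subseteq[n]$ with $\Abs S = k$, and two such vectors satisfy $\iprod{u,v} = \Abs{S_u\cap S_v}/k$ and therefore $\rho(u,v)^2 = \Norm{u-v}^2 = 2\bigl(1 - \Abs{S_u\cap S_v}/k\bigr)$. So it suffices to find a large family $\cF$ of $k$-subsets of $[n]$ with pairwise intersection at most $(1-\delta)k$ for a suitable constant $\delta\in(0,1)$; this gives a $\sqrt{2\delta}$-packing of $U_k$ of size $\Abs\cF$. A Gilbert--Varshamov / greedy argument produces such an $\cF$ with $\Abs\cF \geq \binom nk \big/ \max_S \Abs{\Set{S' : \Abs{S\cap S'} > (1-\delta)k}}$; bounding $\binom nk \geq \tfrac{1}{\sqrt{2\pi k}}(e(n-k)/k)^k$ and the denominator by $\delta k\binom{k}{\delta k}\binom{n-k}{\delta k}$ yields $\ln\Abs\cF \geq (1-\delta)\,k\ln\tfrac{e(n-k)}{k} - O(\delta k) - O(\ln k)$, which is $\gtrsim k\ln\tfrac{n-k}{k}$ in the regime $n\geq 2k$. (Equivalently one may pick one coordinate out of each block of an equipartition of $[n]$ into $k$ blocks of size $\lfloor n/k\rfloor\geq 2$, reducing the construction to a $q$-ary Gilbert--Varshamov code of length $k$ with $q = \lfloor n/k\rfloor$.)

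To combine: plugging $\eps^2 = 2\delta$, $m = \Abs\cF$, and $\max_{u,v} D_{KL}\leq\lambda^2$ into \cref{eqn:info-lb-eqn} (together with \cref{lem:fano} and \cref{lem:data-processing}) gives a risk lower bound of $\tfrac{\delta}{2}\bigl(1 - \tfrac{\lambda^2+1}{\ln m}\bigr)$. The hypothesis $\lambda^2 \leq \tfrac{k}{12}\ln\tfrac{n-k}{k} - \tfrac12$ forces $\tfrac{k}{12}\ln\tfrac{n-k}{k}\geq\tfrac12$ (so the statement is non-vacuous), and using this one gets $\lambda^2 + 1 \leq \tfrac k6\ln\tfrac{n-k}{k}$; combined with $\ln m \gtrsim k\ln\tfrac{n-k}{k}$ the fraction $\tfrac{\lambda^2+1}{\ln m}$ is a constant bounded strictly below $1$, and choosing $\delta$ appropriately (somewhere around $1/8$ to $1/5$) makes the whole expression at least $0.05$.

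The conceptual steps here are standard; the real difficulty is purely quantitative, namely making the packing-size bound and the Fano prefactor fit the specific constants $\tfrac1{12}$ and $0.05$ simultaneously and uniformly over all $n\geq 2k$. The packing exhibits a genuine tension --- larger $\delta$ helps the prefactor $\delta/2$ but shrinks $\Abs\cF$, smaller $\delta$ does the opposite --- and the tightest cases are $n$ only slightly above $2k$ (where $\lfloor n/k\rfloor = 2$ and the code is smallest) and small $k$ (where the theorem forces $n$ to be large). The main obstacle is verifying that the lower-order $O(k)$ and $O(\ln k)$ losses in $\ln m$ and the additive $+\tfrac12$ coming from Fano do not erode the constant $0.05$; the key observation that makes the numbers close is that the non-vacuity condition $\tfrac{k}{12}\ln\tfrac{n-k}{k} > \tfrac12$ exactly absorbs these additive terms.
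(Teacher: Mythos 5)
Your skeleton is the same as the paper's (the Fano bound \cref{eqn:info-lb-eqn}, a packing of $U_k$ under $\rho$, and a Gaussian KL computation, with the same effective choice $\eps=1/2$, i.e.\ $\delta=1/8$), but the two key ingredients are instantiated differently. You restrict to all-positive flat vectors, which lets you use the exact identity $D_{KL}=\lambda^2(1-\iprod{u,v}^p)\le\lambda^2$ (a factor $2$ better than \cref{lem:KL-upperbound}), and you build the packing as a Gilbert--Varshamov constant-weight code on supports. The paper instead keeps all $2^k\binom nk$ signed vectors and proves \cref{lem:rademacher-eps-net-lower-bound} by an exact factorial-ratio computation, giving $\log m\ge k(1-\eps^2/2)\log\frac{n-k}{k}$ with \emph{no additive loss terms} (the $2^k$ sign patterns exactly cancel the $\sum_i\binom ki\le 2^k$ factor in the ball size); that lossless bound is what lets it absorb the cruder $2\lambda^2$ KL estimate and still land on $k/12$ and $0.05$ uniformly over $n\ge 2k$.

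The soft spot in your version is exactly the step you flag, and your proposed fix for it is wrong. The non-vacuity condition only guarantees $k\log\frac{n-k}{k}\ge 6$, a \emph{constant}; it can absorb the Fano ``$+1$'' and the ``$+\tfrac12$'', but it cannot absorb the $O(\delta k)$ ball-size loss $2\delta k\ln(e/\delta)$ in your GV bound, which is linear in $k$. In the regime $n-k=k(1+\theta)$ with $\theta$ small and $k$ large, $k\log\frac{n-k}{k}$ stays bounded while that loss is $\Theta(k)$, so ``non-vacuity absorbs the additive terms'' fails outright. What actually rescues the argument is the $e^k$ in $\binom nk\ge \frac{1}{e\sqrt k}\bigl(\tfrac{e(n-k)}{k}\bigr)^k$: it contributes $+k$, and one needs $2\delta\ln(e/\delta)<1$, which holds at $\delta=1/8$ (loss $\approx 0.77k$) but already fails at $\delta=1/5$ (loss $\approx 1.04k$), so only the small end of your suggested range works; with $\delta=1/8$ one gets $\ln|\cF|\ge \tfrac78 k\ln\frac{n-k}{k}$, and then $\frac{\lambda^2+1}{\ln m}\le \frac{1/6}{7/8}<0.2$ gives risk $\ge \tfrac1{16}\cdot 0.8=0.05$ as needed (very small $k$ requires a separate direct check because of the $O(\ln k)$ Stirling slack). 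So the route goes through, but only after replacing your stated mechanism by this one and pinning $\delta$ down; the paper's lemma avoids the entire issue.
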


\paragraph{Remark}
With $n \geq 2k$, we see that $\log (\frac{n-k}{k}) \geq \log (\frac{n}{2k})$.
Then, for if $n \geq 2^{\frac{1}{1-c}} k$ for any constant $c \in (0,1)$, we see that $\log (\frac{n}{2k}) \geq c \log (\frac{n}{k})$.
In particular, when $n \geq 4k$, we have $\log (\frac{n}{2k}) \geq \frac{1}{2} \log (\frac{n}{k})$ and can write $\lambda \lesssim \sqrt{k \log (n/k)}$.

To prove the result, we lower bound $m$ and upper bound the KL-divergence.
Since $\Norm{x-x'}_2 \geq \rho(x,x')$, we see that $N(U_k, \Norm{\cdot}_2, \eps) \leq P(U_k, \Norm{\cdot}_2, \eps) \leq P(U_k, \rho, \eps) \leq m$.
To lower bound $m$, we lower bound $N(U_k, \Norm{\cdot}_2, \eps)$ via \cref{lem:rademacher-eps-net-lower-bound}.
Then, we upper bound the KL-divergence in \cref{lem:KL-upperbound} by the triangle inequality and the KL-divergence of Gaussian vectors.
We defer the proofs of \cref{lem:rademacher-eps-net-lower-bound} and \cref{lem:KL-upperbound} to \cref{sec:info-lower-bound-proofs}.

\begin{restatable}[]{lem}{rademacherepsnetlowerbound}
\label{lem:rademacher-eps-net-lower-bound}
Let $U_k$ be the set of $k$-sparse flat unit vectors and $N(U_k, \Norm{\cdot}_2, \eps)$ be the $\eps$-covering number of $U_k$ with respect to Euclidean distance.
For $\eps \in (0,1]$ and $n \geq 2k$,
\[
N(U_k, \Norm{\cdot}_2, \eps) \geq \left( \frac{n-k}{k} \right)^{k \left( 1-\frac{\eps^2}{2} \right)}.
\]
\end{restatable}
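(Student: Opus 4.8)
The plan is to prove the bound by a volumetric (covering‑ball) counting argument, after passing to the combinatorially cleanest sub‑collection of $U_k$. Let $\tilde U_k = \{ \tfrac{1}{\sqrt k}\mathbbm{1}_S : S \in \binom{[n]}{k}\} \subseteq U_k$ be the $k$‑sparse flat vectors with nonnegative entries, identified with $k$‑subsets of $[n]$; on $\tilde U_k$ the pseudometric $\rho$ agrees with $\Norm{\cdot}_2$. Since any $\eps$‑net of $U_k$ is in particular an $\eps$‑cover of $\tilde U_k$, it suffices to lower bound the covering number of $\tilde U_k$ (by balls centered anywhere), and for that I would use the elementary bound $N \ge \Abs{\tilde U_k} \big/ \max_{y} \Abs{\{ S : \Norm{\tfrac{1}{\sqrt k}\mathbbm{1}_S - y}_2 \le \eps \}}$, the maximum being over possible ball centers $y$.

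Next comes the geometry. For two $k$‑subsets $S,T$ one has $\Norm{\tfrac{1}{\sqrt k}\mathbbm{1}_S - \tfrac{1}{\sqrt k}\mathbbm{1}_T}_2^2 = \tfrac{1}{k}\Abs{S \triangle T} = \tfrac{2}{k}\Abs{S \setminus T}$, so membership in a ball of radius $\eps$ centered at such a point is equivalent to $\Abs{S \setminus T} \le m := \lfloor \eps^2 k / 2 \rfloor$; hence a ball of radius $\eps$ centered at a point of $\tilde U_k$ contains at most $V := \sum_{d=0}^{m} \binom{k}{d}\binom{n-k}{d}$ points of $\tilde U_k$ — the number of $k$‑subsets obtained from a fixed one by exchanging at most $m$ elements. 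One should first reduce the worst‑case center to a point of $\tilde U_k$ up to harmless factors: the constraint $\Norm{\tfrac{1}{\sqrt k}\mathbbm{1}_S - y}_2 \le \eps$ rewrites as $\sum_{i \in S} y_i \ge \tfrac{\sqrt k}{2}(\Norm{y}_2^2 + 1 - \eps^2)$, which forces $\sum_{i\in S} y_i$ to be near‑maximal and therefore pins $S$ down to within $O(\eps^2 k)$ coordinates of the top‑$k$ entries of $y$, giving the same count $V$ up to lower‑order factors (this step needs $\eps$ bounded away from $\sqrt{1 - k/n}$, which is fine in the relevant regime $\eps \le 1/2$). Recognizing $V$ as a truncation of the Vandermonde identity $\binom{n}{k} = \sum_{d \ge 0} \binom{k}{d}\binom{n-k}{d}$, we obtain $N(\tilde U_k, \Norm{\cdot}_2, \eps) \ge \binom{n}{k}/V$.

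The remaining, and main, obstacle is the purely arithmetic estimate $\binom{n}{k}/V \ge \big(\tfrac{n-k}{k}\big)^{k(1-\eps^2/2)}$. I would proceed as follows: (i) since $n \ge 2k$ and $\eps \le 1$ force $m \le k/2$, the terms $\binom{k}{d}\binom{n-k}{d}$ are increasing in $d$ for $d \le m$, so $V \le (m+1)\binom{k}{m}\binom{n-k}{m}$; (ii) using $\binom{n}{k} \ge \binom{n-k}{k}$ together with the identity $\binom{n-k}{k} = \binom{n-k}{m}\binom{n-k-m}{k-m}\big/\binom{k}{m}$, this reduces to showing $\binom{n-k-m}{k-m} \gtrsim (m+1)\binom{k}{m}^2 \big(\tfrac{n-k}{k}\big)^{k-m}$; (iii) bound $\binom{n-k-m}{k-m} \ge \big(\tfrac{e(n-k-m)}{k-m}\big)^{k-m}/\mathrm{poly}(k)$ and use $\tfrac{n-k-m}{k-m} \ge \tfrac{n-k}{k}$ (again from $n \ge 2k$), so that the factor $e^{k-m}$ dominates the correction $(m+1)\binom{k}{m}^2 \le (m+1)(ek/m)^{2m}$. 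This comparison is exactly where $\eps$ must be bounded away from $1$, and it holds comfortably once $\eps \le 1/2$. Everything outside step (iii) is routine; the delicate bookkeeping lives entirely there, in checking that the volume loss $\binom{k}{m}$ incurred by the covering‑ball bound is swallowed by the $e^{k}$‑type slack in $\binom{n}{k}$.
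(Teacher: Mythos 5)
Your high-level plan (lower bound the covering number by dividing the number of flat vectors by the maximal number of them in an $\eps$-ball, after translating $\Norm{x-x'}_2\le\eps$ into a support-overlap condition) is the same as the paper's, but two of your steps break in the admissible parameter range. First, a structural loss: by passing to the $\binom{n}{k}$ nonnegative flat vectors you discard the $2^k$ sign patterns. The paper keeps them ($\Abs{U_k}=2^k\binom{n}{k}$), bounds the ball count by $\sum_{i\le m}\binom{k}{i}\binom{n-k}{i}\le 2^k\binom{n-k}{m}$ with $m=\lfloor \eps^2k/2\rfloor$, cancels the $2^k$'s, and is left with the clean inequality $\binom{n}{k}\ge\Paren{\frac{n-k}{k}}^{k(1-\eps^2/2)}\binom{n-k}{m}$, which it verifies by a direct ratio-of-factorials computation. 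Without that cancellation your intermediate target $\binom{n}{k}/V\ge\Paren{\frac{n-k}{k}}^{k(1-\eps^2/2)}$ with $V=\sum_{d\le m}\binom{k}{d}\binom{n-k}{d}$ is simply false for $\eps$ near $1$: at $n=3k$, $\eps=1$, $m=k/2$, the single term $d=m$ gives $V\ge\binom{k}{k/2}\binom{2k}{k/2}=2^{(2.62-o(1))k}$, so the right-hand side is at least $2^{k/2}\cdot 2^{(2.62-o(1))k}=2^{(3.12-o(1))k}$, which exceeds $\binom{3k}{k}=2^{(2.76+o(1))k}$. So this route cannot give the stated range $\eps\in(0,1]$, and your explicit restriction to $\eps\le 1/2$ already proves less than the lemma claims.

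Second, the arithmetic in your steps (ii)--(iii) fails even for $\eps\le 1/2$ in the boundary regime $n$ close to $2k$, which the hypothesis $n\ge 2k$ allows. The reduction via $\binom{n}{k}\ge\binom{n-k}{k}$ is enormously lossy there (at $n=2k$ one has $\binom{n-k}{k}=1$ while $\binom{n}{k}=\binom{2k}{k}$), and the inequality you reduce to, $\binom{n-k-m}{k-m}\gtrsim (m+1)\binom{k}{m}^2\Paren{\frac{n-k}{k}}^{k-m}$, is false at $n=2k$ for any $m\ge 1$: its left side equals $\binom{k-m}{k-m}=1$ while the right side is huge. The binomial bound invoked in (iii), $\binom{a}{b}\ge\Paren{\frac{ea}{b}}^{b}/\mathrm{poly}$, is likewise false when $a\asymp 2b$ (e.g.\ $\binom{2b}{b}\approx 4^b$ versus $(2e)^b$), i.e.\ precisely when $n\approx 2k$. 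So the ``delicate bookkeeping'' you flagged is not merely delicate; as outlined it cannot close. A smaller remark: the reduction of ``worst-case centers'' is unnecessary under the paper's definition of a net (a subset of $U_k$), since for a center $y\in U_k$ sign disagreements only increase the distance and the ball meets at most $V$ points of your collection; if you instead allow arbitrary centers, your sketch (pinning $S$ to the top-$k$ entries of $y$ ``up to lower-order factors'') is not justified as stated. To repair the proof, keep the full signed set $U_k$ as the paper does and prove $\binom{n}{k}\ge\Paren{\frac{n-k}{k}}^{k(1-\eps^2/2)}\binom{n-k}{m}$ by the elementary term-by-term estimate rather than via $\binom{n-k}{k}$ or Stirling-type bounds.
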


\begin{restatable}[]{lem}{KLupperbound}
\label{lem:KL-upperbound}
Denote $\cS^{n-1}_k$ as the set of $k$-sparse unit vectors. Then,
\[
\max_{u, v \in \cS^{n-1}_k} D_{KL} \left( \bbP_{\pmb Y \sim \cY \mid u} \Bigm\Vert \bbP_{\pmb Y \sim \cY \mid v} \right)
\leq 2 \lambda^2
\]
where $D_{KL}(\cdot \Vert \cdot)$ is the KL-divergence function and $\bbP_{\pmb Y \sim \cY \mid u}$ is the probability distribution of observing $\pmb Y$ from signal $u$ with additive standard Gaussian noise tensor $\pmb W$.
\end{restatable}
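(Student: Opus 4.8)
The plan is to observe that, conditioned on a fixed signal $u$, the observation $\pmb Y$ is merely a Gaussian tensor with a shifted mean: flattening its entries into a vector of $\R^{n^p}$, the law $\bbP_{\pmb Y \sim \cY \mid u}$ is the multivariate Gaussian $N\Paren{\lambda \cdot \tensorpower{u}{p},\, I_{n^p}}$. First I would invoke the closed-form expression for the KL divergence between two multivariate Gaussians sharing the same covariance, namely $D_{KL}\Paren{N(\mu_1, I)\,\Vert\, N(\mu_2, I)} = \tfrac12 \Norm{\mu_1 - \mu_2}_2^2$, where the cross-covariance terms vanish precisely because both distributions are isotropic shifts. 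Applying this with $\mu_1 = \lambda \cdot \tensorpower{u}{p}$ and $\mu_2 = \lambda \cdot \tensorpower{v}{p}$ reduces the claim to bounding $\tfrac{\lambda^2}{2} \Norm{\tensorpower{u}{p} - \tensorpower{v}{p}}_2^2$.

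Next I would bound this tensor difference. By the triangle inequality, $\Norm{\tensorpower{u}{p} - \tensorpower{v}{p}}_2 \leq \Norm{\tensorpower{u}{p}}_2 + \Norm{\tensorpower{v}{p}}_2$, and since $\Norm{\tensorpower{w}{p}}_2 = \Norm{w}_2^p$ for any vector $w$ (the squared Frobenius norm of a tensor power factorizes as $\prod_{j=1}^p \Norm{w}_2^2$), both terms equal $1$ because $u$ and $v$ are unit vectors. Hence $\Norm{\tensorpower{u}{p} - \tensorpower{v}{p}}_2^2 \leq 4$, which combined with the previous step gives $D_{KL}\Paren{\bbP_{\pmb Y \sim \cY \mid u}\,\Vert\, \bbP_{\pmb Y \sim \cY \mid v}} \leq 2\lambda^2$. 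As this holds for every pair $u, v \in \cS^{n-1}_k$, it holds for the maximum.

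There is essentially no hard step here; the only points requiring care are (i) correctly identifying the flattened law of $\pmb Y \mid u$ as an isotropic Gaussian shift, so that no covariance-mismatch term appears in the KL formula, and (ii) using the Frobenius identity $\Norm{\tensorpower{w}{p}}_2 = \Norm{w}_2^p$ rather than an operator-norm bound. A slightly sharper constant is available by expanding $\Norm{\tensorpower{u}{p} - \tensorpower{v}{p}}_2^2 = 2 - 2\iprod{u,v}^p \leq 4$ directly (and in fact $\leq 2$ when $p$ is even), but the triangle-inequality route already yields the stated bound $2\lambda^2$. The formal proof is deferred to \cref{sec:info-lower-bound-proofs}.
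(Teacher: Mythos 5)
Your proof is correct and follows essentially the same route as the paper's: vectorize the tensor observation so that $\bbP_{\pmb Y \sim \cY \mid u}$ is $N(\lambda\, \tensorpower{u}{p}, I_{n^p})$, apply the Gaussian KL formula $\tfrac12\Norm{\mu_1-\mu_2}_2^2$, and bound the mean difference by the triangle inequality using $\Norm{\tensorpower{u}{p}}_2 = \Norm{u}_2^p = 1$. Your aside that one can sharpen to $\Norm{\tensorpower{u}{p}-\tensorpower{v}{p}}_2^2 = 2 - 2\iprod{u,v}^p$ is a correct minor refinement the paper does not pursue.
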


We are now ready to prove \cref{thm:minimax-bound} by using the above two lemmata.

\begin{proof}[Proof of \cref{thm:minimax-bound}]
The theorem follows by computing a lower bound for \cref{eqn:info-lb-eqn} with $\cX$ as an $\eps$-packing of $U_k$ of size $\Abs{\cX} = m \geq P(U_k, \rho, \eps)$.

\cref{lem:rademacher-eps-net-lower-bound} tells us that $N(U_k, \Norm{\cdot}_2, \eps) \geq \left( \frac{n-k}{k} \right)^{k \left( 1-\frac{\eps^2}{2} \right)}$.
Since $\Norm{x-x'}_2 \geq \rho(x,x')$, we see that $N(U_k, \Norm{\cdot}_2, \eps) \leq P(U_k, \Norm{\cdot}_2, \eps) \leq P(U_k, \rho, \eps) \leq m$.
Thus,
\[
k \left( 1-\frac{\eps^2}{2} \right) \log \left( \frac{n-k}{k} \right) \leq \log m
\]

Meanwhile, \cref{lem:KL-upperbound} tells us that $\max_{u, v \in \cS^{n-1}_k} D_{KL} \left( \bbP_{\pmb Y \sim \cY \mid u} \Bigm\Vert \bbP_{\pmb Y \sim \cY \mid v} \right) \leq 2 \lambda^2$.
Since $U_k \subseteq \cS^{n-1}_k$, this implies that
\[
\max_{u, v \in \cX} D_{KL} \left( \bbP_{\pmb Y \sim \cY \mid u} \Bigm\Vert \bbP_{\pmb Y \sim \cY \mid v} \right) \leq 2 \lambda^2
\]

Let $\tau > 0$ be a lower bound constant (which we fix later).
Putting the above bounds together, we have
\begin{align*}
\inf_{\widehat{x} \in U_k} \sup_{x \in U_k} \E_{\pmb Y} \left[ 1 - \Abs{\langle \widehat{x}, x \rangle} \right]
\geq & \frac{\eps^2}{4} \cdot \left( 1 - \frac{\max_{u, v \in \cX} D_{KL} \left( \bbP_{\pmb Y \sim \cY \mid u} \Bigm\Vert \bbP_{\pmb Y \sim \cY \mid v} \right) + 1}{\log m} \right) && \text{\cref{eqn:info-lb-eqn}}\\
\geq & \frac{\eps^2}{4} \cdot \left( 1 - \frac{2 \lambda^2 + 1}{k \left( 1-\frac{\eps^2}{2} \right) \log \left( \frac{n-k}{k} \right)} \right)\\
\geq & \tau
\end{align*}
Rearranging, we get $\lambda \leq \sqrt{\frac{(1 - (4 \tau)/\eps^2) (1 - \eps^2 / 2)}{2} k \log \left( \frac{n-k}{k} \right) - \frac{1}{2}}$.
The claims follows\footnote{Observe that $(1-(4*0.05)/(0.5^2))*(1-(0.5^2)/2)/2 = 0.0875 > 1/12$.} by setting $\tau = 0.05$ and $\eps = 1/2$.
\end{proof}

In the setting context of our interest, the works of \cite{NEURIPS2020_cd0b43ea,perry2020statistical} both papers outline a phase transition at $\lambda =\Theta(\sqrt{k \log (n/k)})$.
Specifically, they prove that weak recovery is possible when $\lambda \gtrsim \sqrt{k \log (n/k)}$ and information theoretically impossible when $\lambda \lesssim \sqrt{k \log (n/k)}$.
Our information theoretic bounds presented here are equivalent to these results, up to constant factors.

\section*{Acknowledgements}
We thank David Steurer for several helpful conversations.
We thank Luca Corinzia and Paolo Penna for useful discussions about the planted sparse densest sub-hypergraph model.

\bibliography{refs}

\newcommand{\etalchar}[1]{$^{#1}$}
\begin{thebibliography}{AMMN19}

\bibitem[AMMN19]{arous2019landscape}
Gerard~Ben Arous, Song Mei, Andrea Montanari, and Mihai Nica.
\newblock The landscape of the spiked tensor model.
\newblock {\em Communications on Pure and Applied Mathematics},
  72(11):2282--2330, 2019.

\bibitem[AW08]{amini2008high}
Arash~A Amini and Martin~J Wainwright.
\newblock High-dimensional analysis of semidefinite relaxations for sparse
  principal components.
\newblock In {\em 2008 IEEE International Symposium on Information Theory},
  pages 2454--2458. IEEE, 2008.

\bibitem[BB20]{DBLP:conf/colt/BrennanB20}
Matthew Brennan and Guy Bresler.
\newblock Reducibility and statistical-computational gaps from secret leakage.
\newblock In {\em Conference on Learning Theory, {COLT} 2020, 9-12 July 2020,
  Virtual Event [Graz, Austria]}, pages 648--847, 2020.

\bibitem[BCPS20]{buhmannrecovery}
Joachim~M Buhmann, Luca Corinzia, Paolo Penna, and Wojciech Szpankowski.
\newblock {Recovery of a Planted k-Densest Sub-Hypergraph}, 2020.
\newblock Available at:
  \url{https://www.cs.purdue.edu/homes/spa/papers/isit20-clique.pdf}.

\bibitem[BGL16]{bhattiprolu2016sum}
Vijay Bhattiprolu, Venkatesan Guruswami, and Euiwoong Lee.
\newblock Sum-of-squares certificates for maxima of random tensors on the
  sphere.
\newblock {\em arXiv preprint arXiv:1605.00903}, 2016.

\bibitem[BHK{\etalchar{+}}19]{barak2019nearly}
Boaz Barak, Samuel Hopkins, Jonathan Kelner, Pravesh~K Kothari, Ankur Moitra,
  and Aaron Potechin.
\newblock A nearly tight sum-of-squares lower bound for the planted clique
  problem.
\newblock {\em SIAM Journal on Computing}, 48(2):687--735, 2019.

\bibitem[BKW20]{conf/innovations/BandeiraKW20}
Afonso~S. Bandeira, Dmitriy Kunisky, and Alexander~S. Wein.
\newblock Computational hardness of certifying bounds on constrained {PCA}
  problems.
\newblock In {\em 11th Innovations in Theoretical Computer Science Conference,
  {ITCS} 2020, January 12-14, 2020, Seattle, Washington, {USA}}, pages
  78:1--78:29, 2020.

\bibitem[BR13a]{berthet2013complexity}
Quentin Berthet and Philippe Rigollet.
\newblock {Complexity Theoretic Lower Bounds for Sparse Principal Component
  Detection}.
\newblock In {\em Conference on Learning Theory}, pages 1046--1066, 2013.

\bibitem[BR13b]{berthet2013optimal}
Quentin Berthet and Philippe Rigollet.
\newblock {OPTIMAL DETECTION OF SPARSE PRINCIPAL COMPONENTS IN HIGH DIMENSION}.
\newblock {\em The Annals of Statistics}, 41(4):1780--1815, 2013.

\bibitem[Cam60]{leCam}
Lucien~Le Cam.
\newblock Locally asymptotically normal families.
\newblock {\em Univ. California Publ. Statist.}, 1960.

\bibitem[CMW13]{cai2013sparse}
T~Tony Cai, Zongming Ma, and Yihong Wu.
\newblock {SPARSE PCA: OPTIMAL RATES AND ADAPTIVE ESTIMATION}.
\newblock {\em The Annals of Statistics}, 41(6):3074--3110, 2013.

\bibitem[CPMB19]{corinzia2019exact}
Luca Corinzia, Paolo Penna, Luca Mondada, and Joachim~M Buhmann.
\newblock {Exact Recovery for a Family of Community-Detection Generative
  Models}.
\newblock In {\em 2019 IEEE International Symposium on Information Theory
  (ISIT)}, pages 415--419. IEEE, 2019.

\bibitem[CPSB20]{corinzia2020statistical}
Luca Corinzia, Paolo Penna, Wojciech Szpankowski, and Joachim~M. Buhmann.
\newblock Statistical and computational thresholds for the planted $k$-densest
  sub-hypergraph problem, 2020.

\bibitem[dGJL07]{DBLP:journals/siamrev/dAspremontGJL07}
Alexandre d'Aspremont, Laurent~El Ghaoui, Michael~I. Jordan, and Gert R.~G.
  Lanckriet.
\newblock A direct formulation for sparse {PCA} using semidefinite programming.
\newblock {\em {SIAM} Review}, 49(3):434--448, 2007.

\bibitem[dKNS20]{dOrsi2020}
Tommaso d'Orsi, Pravesh~K. Kothari, Gleb Novikov, and David Steurer.
\newblock {Sparse PCA: Algorithms, Adversarial Perturbations and Certificates},
  2020.
\newblock To appear in FOCS 2020.

\bibitem[DKWB19]{ding2019subexponential}
Yunzi Ding, Dmitriy Kunisky, Alexander~S Wein, and Afonso~S Bandeira.
\newblock {Subexponential-Time Algorithms for Sparse PCA}.
\newblock {\em arXiv preprint arXiv:1907.11635}, 2019.

\bibitem[DM16]{deshpande2016sparse}
Yash Deshpande and Andrea Montanari.
\newblock {Sparse PCA via Covariance Thresholding}.
\newblock {\em The Journal of Machine Learning Research}, 17(1):4913--4953,
  2016.

\bibitem[Duc16]{duchi2016lecture}
John Duchi.
\newblock {Lecture Notes for Statistics 311/Electrical engineering 377}, 2016.
\newblock Available at:
  \url{https://stanford.edu/class/stats311/Lectures/full_notes.pdf}.

\bibitem[FP07]{feral2007largest}
Delphine F{\'e}ral and Sandrine P{\'e}ch{\'e}.
\newblock {The Largest Eigenvalue of Rank One Deformation of Large Wigner
  Matrices}.
\newblock {\em Communications in mathematical physics}, 272(1):185--228, 2007.

\bibitem[HKP{\etalchar{+}}17]{hopkins2017power}
Samuel~B Hopkins, Pravesh~K Kothari, Aaron Potechin, Prasad Raghavendra, Tselil
  Schramm, and David Steurer.
\newblock The power of sum-of-squares for detecting hidden structures.
\newblock In {\em 2017 IEEE 58th Annual Symposium on Foundations of Computer
  Science (FOCS)}, pages 720--731. IEEE, 2017.

\bibitem[HL13]{hillar2013most}
Christopher~J Hillar and Lek-Heng Lim.
\newblock Most tensor problems are np-hard.
\newblock {\em Journal of the ACM (JACM)}, 60(6):1--39, 2013.

\bibitem[Hop18]{hopkins2018statistical}
Samuel Brink~Klevit Hopkins.
\newblock {\em {STATISTICAL INFERENCE AND THE SUM OF SQUARES METHOD}}.
\newblock PhD thesis, Cornell University, 2018.

\bibitem[HS17]{hopkins2017efficient}
Samuel~B Hopkins and David Steurer.
\newblock {Efficient Bayesian estimation from few samples: community detection
  and related problems}.
\newblock In {\em 2017 IEEE 58th Annual Symposium on Foundations of Computer
  Science (FOCS)}, pages 379--390. IEEE, 2017.

\bibitem[HSS15]{hopkins2015tensor}
Samuel~B Hopkins, Jonathan Shi, and David Steurer.
\newblock Tensor principal component analysis via sum-of-squares proofs.
\newblock In {\em Conference on Learning Theory}, pages 956--1006, 2015.

\bibitem[HSS19]{hopkins2019robust}
Samuel~B Hopkins, Tselil Schramm, and Jonathan Shi.
\newblock A robust spectral algorithm for overcomplete tensor decomposition.
\newblock In {\em Conference on Learning Theory}, pages 1683--1722, 2019.

\bibitem[HSV20]{DBLP:conf/colt/HoltzmanSV20}
Guy Holtzman, Adam Soffer, and Dan Vilenchik.
\newblock A greedy anytime algorithm for sparse {PCA}.
\newblock In {\em Conference on Learning Theory, {COLT} 2020, 9-12 July 2020,
  Virtual Event [Graz, Austria]}, pages 1939--1956, 2020.

\bibitem[HW20]{holmgren2020counterexamples}
Justin Holmgren and Alexander~S Wein.
\newblock Counterexamples to the low-degree conjecture.
\newblock {\em arXiv preprint arXiv:2004.08454}, 2020.

\bibitem[JL09]{johnstone2009consistency}
Iain~M Johnstone and Arthur~Yu Lu.
\newblock {On Consistency and Sparsity for Principal Components Analysis in
  High Dimensions}.
\newblock {\em Journal of the American Statistical Association},
  104(486):682--693, 2009.

\bibitem[KWB19]{kunisky2019notes}
Dmitriy Kunisky, Alexander~S Wein, and Afonso~S Bandeira.
\newblock {Notes on Computational Hardness of Hypothesis Testing: Predictions
  using the Low-Degree Likelihood Ratio}.
\newblock {\em arXiv preprint arXiv:1907.11636}, 2019.

\bibitem[LZ20]{DBLP:journals/corr/abs-2005-10743}
Yuetian Luo and Anru Zhang.
\newblock Tensor clustering with planted structures: Statistical optimality and
  computational limits.
\newblock {\em CoRR}, abs/2005.10743, 2020.

\bibitem[MR14]{richard2014statistical}
Andrea Montanari and Emile Richard.
\newblock {A statistical model for tensor PCA}.
\newblock In {\em Advances in Neural Information Processing Systems}, pages
  2897--2905, 2014.

\bibitem[MSS16]{ma2016polynomial}
Tengyu Ma, Jonathan Shi, and David Steurer.
\newblock Polynomial-time tensor decompositions with sum-of-squares.
\newblock In {\em 2016 IEEE 57th Annual Symposium on Foundations of Computer
  Science (FOCS)}, pages 438--446. IEEE, 2016.

\bibitem[NP33]{neymanpearson}
J.~Neyman and E.~S. Pearson.
\newblock On the problem of the most efficient tests of statistical hypotheses.
\newblock {\em Philosophical Transactions of the Royal Society of London.
  Series A, Containing Papers of a Mathematical or Physical Character},
  231:289--337, 1933.

\bibitem[NWZ20]{NEURIPS2020_cd0b43ea}
Jonathan Niles-Weed and Ilias Zadik.
\newblock The all-or-nothing phenomenon in sparse tensor pca.
\newblock In H.~Larochelle, M.~Ranzato, R.~Hadsell, M.~F. Balcan, and H.~Lin,
  editors, {\em Advances in Neural Information Processing Systems}, volume~33,
  pages 17674--17684. Curran Associates, Inc., 2020.

\bibitem[NZ20]{DBLP:conf/nips/Niles-WeedZ20}
Jonathan Niles{-}Weed and Ilias Zadik.
\newblock The all-or-nothing phenomenon in sparse tensor {PCA}.
\newblock In {\em Advances in Neural Information Processing Systems 33: Annual
  Conference on Neural Information Processing Systems 2020, NeurIPS 2020,
  December 6-12, 2020, virtual}, 2020.

\bibitem[O'D14]{o2014analysis}
Ryan O'Donnell.
\newblock {\em Analysis of boolean functions}.
\newblock Cambridge University Press, 2014.

\bibitem[PWB16]{DBLP:journals/corr/PerryWB16}
Amelia Perry, Alexander~S. Wein, and Afonso~S. Bandeira.
\newblock Statistical limits of spiked tensor models.
\newblock {\em CoRR}, abs/1612.07728, 2016.

\bibitem[PWB{\etalchar{+}}20]{perry2020statistical}
Amelia Perry, Alexander~S Wein, Afonso~S Bandeira, et~al.
\newblock Statistical limits of spiked tensor models.
\newblock In {\em Annales de l'Institut Henri Poincar{\'e}, Probabilit{\'e}s et
  Statistiques}, volume~56, pages 230--264. Institut Henri Poincar{\'e}, 2020.

\bibitem[SC19]{scarlett2019introductory}
Jonathan Scarlett and Volkan Cevher.
\newblock {An Introductory Guide to Fano’s Inequality with Applications in
  Statistical Estimation}.
\newblock {\em arXiv preprint arXiv:1901.00555}, 2019.

\bibitem[SS17]{schramm2017fast}
Tselil Schramm and David Steurer.
\newblock Fast and robust tensor decomposition with applications to dictionary
  learning.
\newblock {\em arXiv preprint arXiv:1706.08672}, 2017.

\bibitem[Tao12]{tao2012topics}
Terence Tao.
\newblock {\em Topics in random matrix theory}, volume 132.
\newblock American Mathematical Soc., 2012.

\bibitem[Tao14]{tao2014metric}
Terence Tao.
\newblock {Metric entropy analogues of sum set theory}, 2014.
\newblock Available at:
  \url{https://terrytao.wordpress.com/2014/03/19/metric-entropy-analogues-of-sum-set-theory/}.

\bibitem[Tro15]{tropp2015introduction}
Joel~A Tropp.
\newblock {An Introduction to Matrix Concentration Inequalities}.
\newblock {\em arXiv preprint arXiv:1501.01571}, 2015.

\bibitem[TS14]{tomioka2014spectral}
Ryota Tomioka and Taiji Suzuki.
\newblock Spectral norm of random tensors.
\newblock {\em arXiv preprint arXiv:1407.1870}, 2014.

\bibitem[Ver18]{vershynin2018high}
Roman Vershynin.
\newblock {\em {High-Dimensional Probability: An Introduction with Applications
  in Data Science}}, volume~47.
\newblock Cambridge University Press, 2018.

\end{thebibliography}
\bibliographystyle{alpha}

\appendix
\section{Related Model: Planted sparse densest sub-hypergraph}
\label{sec:PSDM}

The planted $k$-densest sub-hypergraph model (\cite{corinzia2019exact, buhmannrecovery, corinzia2020statistical}) is closely related to our sparse spiked tensor model \cref{def:sstm}.
While not directly reducible from/to one another, techniques developed in one model can inform the other.

The planted $k$-densest sub-hypergraph model is a weighted complete hypergraph where a subset of $k$ planted vertices, denoted by $S \subseteq [n]$, is \emph{drawn uniformly at random} and each hyperedge involves $p$ vertices, for $2 \leq p \leq k \leq n$.
Except for the $\binom{k}{p}$ \emph{one-sided biased} hyperedges (belonging to the planted subgraph induced by the $k$ vertices in $S$) whose weights follow the Gaussian distribution $N(\beta,\sigma^2)$, the weight of all remaining $\binom{n}{p}-\binom{k}{p}$ hyperedges follow a Gaussian distribution $N(0,\sigma^2)$.
In other words, the hyperedge defined by $\{i_1, \ldots, i_p\}$, for $i_1, \ldots, i_p \in [n]$, has weight
\[
\pmb Y_{i_1, \ldots, i_p} =
\begin{cases}
	\beta + \pmb W_{i_1, \ldots, i_p} & \text{if $i_1, \ldots, i_p \in S$}\\
	\pmb W_{i_1, \ldots, i_p} & \text{otherwise}
\end{cases}
\]
where each $\pmb W_{i_1, \ldots, i_p} \sim N(0,\sigma^2)$ is independent and planted entries have a $\beta > 0$ bias.

As one can see, the planted $k$-densest sub-hypergraph model (PDSM) is very similar to the single-spike ($r=1$) sparse spiked tensor model (SSTM) that we study\footnote{We believe that handling a more general $\sigma^2$ is a non-issue when comparing these models because the $\sigma^2$ factor could be propagated throughout our analysis by appropriately adjusting the sub-Gaussian concentration arguments.}.
However, there are two key model differences that one needs to be aware of.
Firstly, there are $n^p$ observations in SSTM instead of $\binom{n}{p}$ in PDSM as the former is not constrained to hyperedges (e.g.\ $\pmb Y_{1, \ldots, 1}$ is a valid data observation in SSTM but not in PDSM).
Secondly, our signal bias is not one-sided and is scaled by a factor of $k^{-p/2}$: For a planted coordinate $(i_1, \ldots, i_p)$, SSTM observes $\pmb Y_{i_1, \ldots, i_p} = \pmb W_{i_1, \ldots, i_p} \pm \lambda k^{-p/2}$ instead of $\pmb Y_{i_1, \ldots, i_p} = \pmb W_{i_1, \ldots, i_p} + \beta$ in PDSM, where the sign of bias in SSTM depends on the polarities of signal entries $x_{i_1}, \ldots, x_{i_p}$.

While the signal scaling discrepancies can be handled by replacing $\beta \sqrt{\binom{k}{p}}$ terms in PDSM bounds with $\lambda$ in SSTM\footnote{This discrepancy arises due to having $\binom{k}{p}$ planted hyperedges in PDSM, as opposed to $k^p$ signal entries in SSTM, and the signal strength scaling of $k^{-p/2}$ in SSTM.}, the one-sidedness of the signal bias has implications on the computational hardness of the two models.
In a recent work, \cite{corinzia2020statistical} showed that an Approximate Message Passing (AMP) algorithm succeeds in signal recovery for PDSM when $\lambda \gtrsim \frac{k}{p \sqrt{n}} \left(\frac{n}{k}\right)^{p/2}$.
In constrast, \cref{thm:main-algorithm-multi-informal} (for constant $p$) tells us that our polynomial time algorithm for SSTM succeeds when $k \leq \sqrt{np}$ and $\lambda \gtrsim \sqrt{k^p \log \left(\frac{np}{k}\right)}$.
Meanwhile, \cref{thm:main-lower-bound-informal} implies that it is impossible to recover the signal using low-degree polynomials whenever the signal-to-noise ratio satisfies $\lambda \lesssim \widetilde{O} \left( \min \left\{ \left(\frac{n}{p}\right)^{p/4}, \left( \frac{k}{p} \left(1 + \Abs{\ln\left( \frac{np}{k^2} \right)} \right) \right)^{p/2} \right\} \right)$.
Indeed, the one-sidedness of the signal bias in PDSM appears to make the problem \emph{computationally easier} than SSTM in some regimes\footnote{e.g.\ Large $k$ regimes such as $k = n^{0.9}$. For large $k$, a heuristic adaptation of our low-degree analysis to PDSM shows that the relationship between parameters $\lambda$, $n$, $k$ and $p$ in a low-degree bound is roughly of the form $\lambda \gtrsim \left(\frac{n}{k}\right)^{p/2}$ instead of $\lambda \gtrsim n^{p/4}$. This roughly matches the AMP bounds shown by \cite{corinzia2020statistical} and further provides credence to our claim that techniques from one model can applied to the other.}.

Nevertheless, we believe that techniques used in either model are generally applicable in the other and we expect a variant of our limited brute force algorithm to work in PDSM.
From a statistical viewpoint, \cite{corinzia2020statistical} proved information-theoretic lower bounds for recovery in PDSM of $\lambda \lesssim \sqrt{k \log n}$ while we have $\lambda \lesssim \sqrt{k \log ((n-k)/k) }$ for approximate signal recovery\footnote{i.e.\, It is enough to find a strongly correlated estimate $\widehat{x}$ of the signal $x$ where $\widehat{x}$ could be ``wrong on a few coordinates''. The bounds for exact and approximate recovery in \cite{corinzia2020statistical} differ by constant factors.} in SSTM, which matches the PDSM bounds when $k \in o(n)$.
These results in both models rely on standard techniques such as Fano's inequality.

\section{Deferred proofs and details}
\label{sec:missing-proofs}

This section provides the formal proofs that were deferred in favor for readability.
For convenience, we will restate the statements before proving them.

\subsection{Sparse norm bounds}
\label{sec:sparse-norm-bounds-proof}

\tensorsparsebound*
\begin{proof}[Proof of \cref{lem:tensor-sparse-bound}]
We will focus on proving the following statement:
\begin{equation}
\label{eqn:max-bound}
\bbP \left[ \max_{\substack{x_{(1)}, \ldots, x_{(p)} \in \cS^{n - 1}_s,\\ \text{$r$ distinct vectors}}} \pmb T(x_{(1)}, \ldots, x_{(p)}) \geq \sqrt{8 \cdot \left( 4rs \ln \left( \frac{np}{s} \right) + \ln \left( \frac{1}{\gamma} \right) \right)} \right] \leq \gamma
\end{equation}
By a similar argument, one can obtain
\begin{equation}
\label{eqn:min-bound}
\bbP \left[ \min_{\substack{x_{(1)}, \ldots, x_{(p)} \in \cS^{n - 1}_s,\\ \text{$r$ distinct vectors}}} \pmb T \left( x_{(1)}, \ldots, x_{(p)} \right) \leq -\sqrt{8 \cdot \left( 4rs \ln \left( \frac{np}{s} \right) + \ln \left( \frac{1}{\gamma} \right) \right)} \right]
\leq \gamma 
\end{equation}
The lemma follows from a union bound of \cref{eqn:max-bound} and \cref{eqn:min-bound}.

It now remains to prove \cref{eqn:max-bound}.
Let $\lambda, t, \eps$ be proof parameters which we fix later.
Define $\cN(\cS^{n-1}_s)$ as an $\eps$-cover of $\cS^{n-1}_s$ of smallest cardinality $N(\cS^{n-1}_s, \epsilon)$.
It is known\footnote{e.g.\ See \cite[Corollary 4.2.13]{vershynin2018high}.} that $\left( \frac{1}{\epsilon} \right)^n \leq N(\cS^{n-1}, \epsilon) \leq \left( \frac{2}{\epsilon} + 1 \right)^n \leq \left( \frac{3}{\epsilon} \right)^n$.
Treating each unit sphere defined on $s$ coordinates independently and then taking union bound gives us $N(\cS^{n-1}_s, \epsilon) \leq \binom{n}{s} \cdot N(\cS^{n-1}, \epsilon) \leq \left( \frac{en}{s} \right)^s \left( \frac{3}{\epsilon} \right)^s$.
So, $\Abs{\cN(\cS^{n-1}_s)} \leq \left( \frac{3en}{\eps s} \right)^s$.
For any $r$ distinct vectors $x_{(1)}, \ldots, x_{(p)} \in \cN(\cS^{n-1}_s)$,
\begin{align*}
&\; \bbP \left[ \pmb T(x_{(1)}, \ldots, x_{(p)}) \geq t \right]\\
= &\; \bbP \left[ \sum_{i_1, \ldots, i_p = 1}^n \pmb T_{i_1, \ldots, i_p} x_{(1), i_1} x_{(2), i_2} \ldots x_{(p), i_p} \geq t \right]\\
= &\; \bbP \left[ \exp \left( \lambda \sum_{i_1, \ldots, i_p = 1}^n \pmb T_{i_1, \ldots, i_p} x_{(1), i_1} x_{(2), i_2} \ldots x_{(p), i_p} \right) \geq e^{\lambda t} \right]\\
\leq &\; e^{-\lambda t} \cdot \E \left[ \exp \left(\lambda \sum_{i_1, \ldots, i_p = 1}^n \pmb T_{i_1, \ldots, i_p} x_{(1), i_1} x_{(2), i_2} \ldots x_{(p), i_p} \right) \right] && \text{Markov's inequality}\\
= &\; e^{- \lambda t} \cdot \Pi_{i_1, \ldots, i_p = 1}^n \E \left[ \exp \left( \pmb T_{i_1, \ldots, i_p} \lambda x_{(1), i_1} x_{(2), i_2} \ldots x_{(p), i_p} \right) \right]\\
= &\; e^{- \lambda t} \cdot \Pi_{i_1, \ldots, i_p = 1}^n \exp \left( \frac{ \left( \lambda x_{(1), i_1} x_{(2), i_2} \ldots x_{(p), i_p} \right)^2}{2} \right) && \text{$\pmb T_{i_1, \ldots, i_p} \sim N(0,1)$}\\
= &\; \exp \left(- \lambda t + \sum_{i_1, \ldots, i_p = 1}^n \frac{ \left( \lambda x_{(1), i_1} x_{(2), i_2} \ldots x_{(p), i_p} \right)^2}{2} \right)\\
= &\; \exp \left(- \lambda t + \frac{\lambda^2}{2} \right) && \sum_{i_1=1}^n x_{(1), i_1}^2 = \ldots = \sum_{i_p=1}^n x_{(p), i_p}^2 = 1\\
\leq &\; \exp \left(- \frac{t^2}{2} \right) && \text{Maximized when $\lambda = t$}
\end{align*}

By union bound over all $\left( N(\cS^{n-1}_s, \eps) \right)^r \leq \left( \frac{3en}{\eps s} \right)^{rs}$ $r$ distinct points in $\otimes^p \cN(\cS^{n-1}_s)$,
\begin{align*}
\bbP \left[ \max_{\substack{x_{(1)}, \ldots, x_{(p)} \in \cN(\cS^{n-1}_s),\\ \text{$r$ distinct vectors}}} \pmb T(x_{(1)}, \ldots, x_{(p)}) \geq t \right]
& \leq \sum_{\substack{x_{(1)}, \ldots, x_{(p)} \in \cN(\cS^{n-1}_s),\\ \text{$r$ distinct vectors}}} \bbP \left[ \pmb T(x_{(1)}, \ldots, x_{(p)}) \geq t \right]\\
& \leq \left( \frac{3en}{\eps s} \right)^{rs} \exp \left(- \frac{t^2}{2} \right)\\
& = \exp \left( rs \ln \left( \frac{3en}{\eps s} \right) - \frac{t^2}{2} \right)
\end{align*}

As $\otimes^p \cS^{n-1}_s$ is compact, there are $r$ distinct vectors $x_{(1)}^*, x_{(2)}^*, \ldots, x_{(p)}^* \in \cS^{n-1}_s$ such that
\[
\left( x_{(1)}^*, x_{(2)}^*, \ldots, x_{(p)}^* \right)
= \argmax_{\substack{x_{(1)}, \ldots, x_{(p)} \in \cS^{n - 1}_s,\\ \text{$r$ distinct vectors}}} \pmb T \left( x_{(1)}, \ldots, x_{(p)} \right)
\]

By definition of $\eps$-cover, there are vectors $x_{(1)}, x_{(2)}, \ldots, x_{(p)} \in \cN(\cS^{n-1}_s)$ such that $x_{(1)}^* = x_{(1)} + \delta_{(1)}, \ldots, x_{(p)}^* = x_{(p)} + \delta_{(p)}$, where $\Norm{\delta_{(z)}}_2 \leq \eps$ for $z \in \{1, \ldots, p\}$.
Let $z \in \{1, \ldots, p\}$.
Since $x_{(z)}^*$ and $x_{(z)}$ are $s$-sparse, $\delta_{(z)}$ is at most $(2s)$-sparse.
We can express $\delta_{(z)} = \delta^{(1)}_{(z)} + \delta^{(2)}_{(z)}$ as a sum of two $s$-sparse vectors where $\frac{\delta^{(1)}_{(z)}}{\Norm{\delta^{(1)}_{(z)}}_2}, \frac{\delta^{(2)}_{(z)}}{\Norm{\delta^{(2)}_{(z)}}_2} \in \cS^{n-1}_s$, $\Norm{\delta^{(1)}_{(z)}}_2 \leq \Norm{\delta_{(z)}}_2 \leq \eps$, and $\Norm{\delta^{(2)}_{(z)}}_2 \leq \Norm{\delta_{(z)}}_2 \leq \eps$.
We can relate $\pmb T(x_{(1)}^*, x_{(2)}^*, \ldots, x_{(p)}^*)$ to $\pmb T(x_{(1)}, x_{(2)}, \ldots, x_{(p)})$ by expanding the definition:
\begin{align*}
&\; \max_{\substack{x_{(1)}, \ldots, x_{(p)} \in \cS^{n - 1}_s,\\ \text{$r$ distinct vectors}}} \pmb T(x_{(1)}, x_{(2)}, \ldots, x_{(p)})\\
= &\; \sum_{i_1, \ldots, i_p = 1}^n \pmb T_{i_1, \ldots, i_p} x_{(1), i_1}^* x_{(2), i_2}^* \ldots x_{(p), i_p}^*\\
= &\; \sum_{i_1, \ldots, i_p = 1}^n \pmb T_{i_1, \ldots, i_p} \left( x_{(1)} + \delta^{(1)}_{(1)} + \delta^{(2)}_{(1)} \right)_{i_1} \left( x_{(2)} + \delta^{(1)}_{(2)} + \delta^{(2)}_{(2)} \right)_{i_2} \ldots \left( x_{(p)} + \delta^{(1)}_{(p)} + \delta^{(2)}_{(p)} \right)_{i_p}\\
\leq &\; \pmb T(x_{(1)}, \ldots, x_{(p)}) + \left( \max_{\substack{x_{(1)}, \ldots, x_{(p)} \in \cS^{n - 1}_s,\\ \text{$r$ distinct vectors}}} \pmb T(x_{(1)}, \ldots, x_{(p)}) \right) \cdot \left(\eps \cdot 2 \binom{p}{1} + \eps^2 \cdot 2^2 \binom{p}{2} + \ldots + \eps^p 2^p \binom{p}{p} \right)\\
\leq &\; \pmb T(x_{(1)}, \ldots, x_{(p)}) + \left( \max_{\substack{x_{(1)}, \ldots, x_{(p)} \in \cS^{n - 1}_s,\\ \text{$r$ distinct vectors}}} \pmb T(x_{(1)}, \ldots, x_{(p)}) \right) \cdot \left( \frac{2p \eps}{1!} + \frac{(2p \eps)^2}{2!} + \ldots + \frac{(2p \eps)^p}{p!} \right)\\
\leq &\; \pmb T(x_{(1)}, \ldots, x_{(p)}) + \left( \max_{\substack{x_{(1)}, \ldots, x_{(p)} \in \cS^{n - 1}_s,\\ \text{$r$ distinct vectors}}} \pmb T(x_{(1)}, \ldots, x_{(p)}) \right) \cdot \left( e^{2p \eps} - 1 \right)
\end{align*}

The first inequality is by counting how the $\delta$'s group together, factoring out their norms so that they belong to $\cS^{n-1}_s$ (so $\max_{(\ldots)} \pmb T \left(x_{(1)}, \ldots, x_{(p)} \right)$ applies), then using $\Norm{\delta^{(1)}_{(z)}}_2, \Norm{\delta^{(2)}_{(z)}}_2 \leq \eps$.
The second inequality is due to $\binom{n}{k} \leq \frac{n^k}{k!}$.
The third is due to the definition of $e^x = \sum_{n=0}^{\infty} \frac{x^n}{n!}$.
Note that $1 > e^{2p \eps} - 1$ if and only if $\eps < \frac{\ln 2}{2p}$.
Set $\eps = \frac{\ln 2}{4p}$, then $\frac{1}{2 - e^{2p\eps}} < 2$.
Rearranging, we get
\begin{align*}
\max_{\substack{x_{(1)}, \ldots, x_{(p)} \in \cS^{n - 1}_s,\\ \text{$r$ distinct vectors}}} \pmb T \left( x_{(1)}, x_{(2)}, \ldots, x_{(p)} \right)
& \leq \frac{\pmb T \left( x_{(1)}, x_{(2)}, \ldots, x_{(p)} \right)}{2 - e^{2p \eps}}\\
& \leq 2 \max_{\substack{x_{(1)}, \ldots, x_{(p)} \in \cN(\cS^{n-1}_s),\\ \text{$r$ distinct vectors}}} \pmb T \left( x_{(1)}, x_{(2)}, \ldots, x_{(p)} \right)
\end{align*}

Thus,
\begin{align*}
&\; \bbP \left[ \max_{\substack{x_{(1)}, \ldots, x_{(p)} \in \cS^{n - 1}_s,\\ \text{$r$ distinct vectors}}} \pmb T \left( x_{(1)}, x_{(2)}, \ldots, x_{(p)} \right) \geq t \right]\\
\leq &\; \bbP \left[\max_{\substack{x_{(1)}, \ldots, x_{(p)} \in \cN(\cS^{n-1}_s),\\ \text{$r$ distinct vectors}}} \pmb T \left( x_{(1)}, x_{(2)}, \ldots, x_{(p)} \right) \geq \frac{t}{2} \right]\\
\leq &\; \exp \left( rs \ln \left( \frac{3en}{\eps s} \right) - \frac{(t/2)^2}{2} \right) && \text{From above}\\
\leq &\; \exp \left( 4rs \ln \left( \frac{np}{s} \right) - \frac{t^2}{8} \right) && \text{Since $\eps = \frac{\ln 2}{4p}$ and $\ln \left( \frac{12e}{\ln 2} \right) < 4$}
\end{align*}

Setting $t^2 = 8 \cdot \left( 4rs \ln \left( \frac{np}{s} \right) + \ln \left( \frac{1}{\gamma} \right) \right)$ yields \cref{eqn:max-bound}.
\end{proof}

\subsection{Proofs for recovery algorithms}
\label{sec:missing-recovery-proofs}

\apxflatmaximizer*
\begin{proof}[Proof of \cref{lem:apxflat-maximizer}]
Without loss of generality, suppose that $x_{(1)}, \ldots, x_{(s)}$ are the remaining $s$ (where $1 \leq s \leq r$) unrecovered signals with signal strengths $\lambda_1, \ldots, \lambda_s$ such that $\lambda_1 \geq \ldots \geq \lambda_s \geq \lambda_r$.
Let $u_* \in U_t$ lie completely in some signal $\hat{x}$ with signal strength $\hat{\lambda}$ such that
\[
\hat{\lambda} \iprod{\hat{x}, u_*}^p
\geq \max_{q \in [s]} \max_{u \in U_t} \lambda_q \iprod{x_{(q)}, u}^p
\quad \text{and} \quad
\iprod{\pmb Y^{(1)}, \tensorpower{u_*}{p}}
\geq \frac{\hat{\lambda}}{\sqrt{2}} \iprod{\hat{x},u_*}^p + \iprod{\pmb W^{(1)}, \tensorpower{u_*}{p}}.
\]

By optimality, $\iprod{\pmb Y^{(1)}, \tensorpower{\pmb v_*}{p}} \geq \iprod{\pmb Y^{(1)}, \tensorpower{u_*}{p}}$.
So, the claim holds if we can show that $\iprod{\pmb Y^{(1)}, \tensorpower{u_*}{p}} > \iprod{\pmb Y^{(1)}, \tensorpower{u}{p}}$ for \emph{any} $u \in U_t$ such that
\begin{equation}
\label{eq:violate-claim-apxflat}
\Abs{\supp\Paren{u} \cap \supp\Paren{x_{(1)}}} < (1-\eps) \cdot t, \ldots, \Abs{\supp\Paren{u} \cap \supp\Paren{x_{(s)}}} < (1-\eps) \cdot t\,.
\end{equation}

For any $u \in U_t$ that satisfies \cref{eq:violate-claim-apxflat}, we see that
\begin{align*}
&\; \iprod{\pmb Y^{(1)}, \tensorpower{u}{p}}\\
= &\; \iprod{\pmb W^{(1)}, \tensorpower{u}{p}} + \sum_{q=1}^s \frac{\lambda_q}{\sqrt{2}} \iprod{u, x_{(q)}}^{p}\\
\leq &\; \iprod{\pmb W^{(1)}, \tensorpower{u}{p}} + \frac{\hat{\lambda}}{\sqrt{2}} \Paren{\iprod{\hat{x},u_*} - \frac{\eps}{A}\sqrt{\frac{t}{k}}}^p + \frac{\lambda_1 \eps^p A^p}{\sqrt{2}} \Paren{\frac{t}{k}}^{\frac{p}{2}} && \text{\cref{eq:violate-claim-apxflat}}\\
\leq &\; \iprod{\pmb W^{(1)}, \tensorpower{u}{p}} + \frac{\hat{\lambda}}{\sqrt{2}} \Paren{\Paren{\iprod{\hat{x},u_*} - \frac{\eps}{A}\sqrt{\frac{t}{k}}}^p + \frac{\eps^p A^p}{\kappa} \Paren{\frac{t}{k}}^{\frac{p}{2}}} && \text{$\hat{\lambda} \geq \lambda_r \geq \kappa \lambda_1$}\\
\leq &\; \iprod{\pmb W^{(1)}, \tensorpower{u}{p}} + \frac{\hat{\lambda}}{\sqrt{2}} \iprod{\hat{x},u_*}^p \Paren{\Paren{1 - \frac{\eps}{A^2}}^p + \frac{\eps^p A^p}{\kappa}} && \text{$\sqrt{\frac{t}{k}} \leq \iprod{\hat{x},u_*} \leq A \sqrt{\frac{t}{k}}$}\\
\leq &\; \iprod{\pmb W^{(1)}, \tensorpower{u}{p}} + \frac{\hat{\lambda}}{\sqrt{2}} \iprod{\hat{x},u_*}^p \Paren{\Paren{1 - \frac{\eps}{A^2}}^p + \frac{\eps (1-\eps)^{p-1}}{A^p}} && \text{$\kappa \geq A^{2p} \Paren{\frac{\eps}{1-\eps}}^{p-1}$}\\
\leq &\; \iprod{\pmb W^{(1)}, \tensorpower{u}{p}} + \frac{\hat{\lambda}}{\sqrt{2}} \iprod{\hat{x},u_*}^p \Paren{1 - \frac{\eps}{A^2}}^{p-1} && \text{$A \geq 1$, $\eps \leq \frac{1}{2}$}\\
\end{align*}

Let us set parameters $\Paren{r, s, \gamma}$ as $\Paren{1, t, \exp \Paren{- \frac{\lambda_r^2 \eps^2}{128 A^4} \Paren{\frac{t}{k}}^p}}$ in \cref{lem:tensor-sparse-bound}.
Since
\[
\frac{\lambda_r^2 \eps^2}{128 A^4} \Paren{\frac{t}{k}}^p
\geq 8t \ln(n)
\geq 4t \ln \Paren{\frac{np}{t}},
\]
we see that
\[
\frac{\lambda_r \eps}{2 \sqrt{2} A} \Paren{\frac{t}{k}}^{\frac{p}{2}}
\geq \sqrt{8 \Paren{4t \ln \Paren{\frac{np}{t}} + \ln \Paren{\frac{1}{\gamma}}}}\,.
\]
Thus, \cref{lem:tensor-sparse-bound} gives us that, for any $u \in U_t$,
\[
\bbP \left[ \max_{u \in U_t} \Abs{\iprod{\pmb W^{(1)}, \tensorpower{u}{p}}} \geq \frac{\lambda_r \eps}{2 \sqrt{2} A^2} \Paren{\frac{t}{k}}^{\frac{p}{2}} \right]
\leq 2 \exp \Paren{-\frac{\lambda_r^2 \eps^2}{128 A^4} \Paren{\frac{t}{k}}^p}\,.
\]
Then, with probability at least $1 - 4 \exp \exp \Paren{-\frac{\lambda_r^2 \eps^2}{128 A^4} \Paren{\frac{t}{k}}^p}$,
\[
\iprod{\pmb W^{(1)}, \tensorpower{u_*}{p}} - \iprod{\pmb W^{(1)}, \tensorpower{u}{p}}
< \frac{\hat{\lambda}}{\sqrt{2}} \iprod{\hat{x},u_*}^p \frac{\eps}{A^2}
\leq \frac{\hat{\lambda}}{\sqrt{2}} \iprod{\hat{x},u_*}^p \Paren{1 - \Paren{1- \frac{\eps}{A^2}}^{p-1}}\,.
\]
and so
$
\iprod{\pmb Y^{(1)}, \tensorpower{\pmb v_*}{p}}
> \iprod{\pmb Y^{(1)}, \tensorpower{u_*}{p}}
> \iprod{\pmb Y^{(1)}, \tensorpower{u}{p}}\,.
$
for any $u \in U_t$ that satisfies \cref{eq:violate-claim-apxflat}.
\end{proof}

\apxflatanchor*
\begin{proof}[Proof of \cref{lem:apxflat-anchor}]
Recall that
\[
\pmb \alpha_\ell
=  \underset{q \in [r]}{\sum} \frac{\lambda_q}{\sqrt{2}} x_{(q), \ell} \iprod{x_{(q)}, \pmb v_*}^{p-1} + \iprod{\pmb W^{(2)}, \tensorpower{\pmb v_*}{p-1} \otimes e_\ell}.
\]
Since $\pmb W^{(2)}$ is independent from $\pmb W^{(1)}$, we can apply standard Gaussian bounds.
That is,
\[
\Pr \Brac{\Abs{\iprod{\pmb W^{(2)}, \tensorpower{\pmb v_*}{p-1} \otimes e_\ell}} \geq \lambda_r \frac{A^p \eps^{p-1}}{2 \kappa \sqrt{2t}} \Paren{\frac{t}{k}}^{\frac{p}{2}}}
\leq 2 \exp \Paren{- \lambda_r^2 \frac{A^{2p} \eps^{2p-2}}{16 \kappa^2 t} \Paren{\frac{t}{k}}^{p}}\,.
\]
Now, conditioned on
\[
\Abs{\iprod{\pmb W^{(2)}, \tensorpower{\pmb v_*}{p-1} \otimes e_\ell}}
< \lambda_r \frac{A^p \eps^{p-1}}{2 \kappa \sqrt{2k}} \Paren{\frac{t}{k}}^{\frac{p-1}{2}}
< \lambda_r \frac{(1-\eps)^{p-1}}{2 A^p \sqrt{2k}} \Paren{\frac{t}{k}}^{\frac{p-1}{2}},
\]
we consider cases of $\ell \in \supp\Paren{x_{(\pi(i))}}$ and $\ell \not\in \supp\Paren{x_{(\pi(i))}}$ separately.
To be precise, we will show the following two results:
\begin{enumerate}
\item $\bbP \left[ \Abs{\pmb \alpha_\ell} < \lambda_r \frac{(1-\eps)^{p-1}}{2 A^{p} \sqrt{2k}} \Paren{\frac{t}{k}}^{\frac{p-1}{2}} \Bigm\vert \ell \in \supp\Paren{x_{(\pi(i))}} \right]
\leq 2 \exp \Paren{- \lambda_r^2 \frac{A^{2p} \eps^{2p-2}}{16 \kappa^2 t} \Paren{\frac{t}{k}}^{p}}$
\item $\bbP \left[ \Abs{\pmb \alpha_\ell} > \lambda_r \frac{2 A^p \eps^{p-1}}{\kappa \sqrt{2k}} \Paren{\frac{t}{k}}^{\frac{p-1}{2}} \Bigm\vert \ell \not\in \supp\Paren{x_{(\pi(i))}} \right]
\leq 2 \exp \Paren{- \lambda_r^2 \frac{A^{2p} \eps^{2p-2}}{16 \kappa^2 t} \Paren{\frac{t}{k}}^{p}}$
\end{enumerate}
As $\kappa > 4 A^{2p} \Paren{\frac{\eps}{1-\eps}}^{p-1}$, there will be a value gap in $\Abs{\pmb \alpha_\ell}$ for $\ell \in \supp\Paren{x_{(\pi(i))}}$ versus $\ell \not\in \supp\Paren{x_{(\pi(i))}}$.
The result follows by taking a union bound over all $n$ coordinates.

\paragraph{Case 1 $\left( \ell \in \supp\Paren{x_{(\pi(i))}} \right)$:}
Since $\Abs{\supp\Paren{\pmb v_*} \cap \supp\Paren{x_{(\pi(i))}}} \geq (1 - \eps) \cdot t$,
\[
\Abs{\frac{\lambda_{\pi(i)}}{\sqrt{2}} \cdot x_{(\pi(i)),\ell} \cdot \iprod{\pmb v_*, x_{(\pi(i))}}^{p-1}}
\geq \frac{\lambda_r}{A \sqrt{2k}} \cdot \Abs{\iprod{\pmb v_*, x_{(\pi(i))}}^{p-1}}
\geq \lambda_r \frac{(1-\eps)^{p-1}}{A^p \sqrt{2k}} \Paren{\frac{t}{k}}^{\frac{p-1}{2}}\,.
\]

By reverse triangle inequality, we have
\[
\Abs{\pmb \alpha_\ell}
= \lambda_r \frac{(1-\eps)^{p-1}}{A^p \sqrt{2k}} \Paren{\frac{t}{k}}^{\frac{p-1}{2}} - \Abs{\iprod{\pmb W^{(2)}, \tensorpower{\pmb v_*}{p-1} \otimes e_\ell}}
> \lambda_r \frac{(1-\eps)^{p-1}}{2A^p \sqrt{2k}} \Paren{\frac{t}{k}}^{\frac{p-1}{2}}\,.
\]

\paragraph{Case 2 $\left( \ell \not\in \supp\Paren{x_{(\pi(i))}} \right)$:}
Since signals have disjoint support and $\Abs{\supp\Paren{\pmb v_*} \cap \supp\Paren{x_{(\pi(i))}}} \geq (1 - \eps) \cdot t$, we have $\Abs{\supp\Paren{\pmb v_*} \cap \supp\Paren{x_{(j)}}} < \eps \cdot t$.

By triangle inequality, we have
\begin{align*}
\Abs{\pmb \alpha_\ell}
&\leq \lambda_1 \frac{A^p \eps^{p-1}}{\sqrt{2k}} \Paren{\frac{t}{k}}^{\frac{p-1}{2}} + \Abs{\iprod{\pmb W^{(2)}, \tensorpower{\pmb v_*}{p-1} \otimes e_\ell}}\\
&\leq \lambda_r \frac{A^p \eps^{p-1}}{\kappa \sqrt{2k}} \Paren{\frac{t}{k}}^{\frac{p-1}{2}} + \Abs{\iprod{\pmb W^{(2)}, \tensorpower{\pmb v_*}{p-1} \otimes e_\ell}}\\
&\leq \lambda_r \frac{2 A^p \eps^{p-1}}{\kappa \sqrt{2k}} \Paren{\frac{t}{k}}^{\frac{p-1}{2}}\,.
\end{align*}
\end{proof}

\subsection{Proofs for computational bounds}
\label{sec:comp-bound-proofs}

\cref{claim:counting} relates the counting of $\pmb Y$ entries with coordinates of the signal $\pmb x$.
In the claim, $s \in [n]$ is the number of entries of $\pmb x$ that is considered in the summation.
We only need to consider $s$ up to $\lfloor pd/2 \rfloor$ because the expectation is 0 if some coordinate of $\pmb x$ is used an odd number of times.
Each $\alpha$ can be viewed as $d$ consecutive chunks of $p$ entries, and each $(\beta_1, \ldots, \beta_s)$ counts the number of times $\pmb x_j$ occurs in $\alpha$.

\begin{claim}
\label{claim:counting}
For a fixed degree $d \leq 2n/p$,
\begin{align*}
& \sum_{\Abs{\alpha} = d} \mathbbm{1}_{even(c(\alpha))} \left( \frac{k}{n} \right)^{2 s(\alpha)} \left( \Pi_{i=1}^{n^p} \frac{1}{(\alpha_i)!} \right)\\
= & \sum_{s=1}^{\left\lfloor pd/2 \right\rfloor} \binom{n}{s} \left( \frac{k}{n} \right)^{2s} \sum_{\substack{\beta_1 + \ldots + \beta_s = pd/2\\ \beta_1 \neq 0, \ldots, \beta_s \neq 0}} \binom{pd}{2 \beta_1, \ldots, 2 \beta_s} \frac{1}{\binom{d}{\alpha_1, \ldots, \alpha_{n^p}}} \left( \Pi_{i=1}^{n^p} \frac{1}{(\alpha_i)!} \right)\\
= & \frac{1}{d!} \sum_{s=1}^{\left\lfloor pd/2 \right\rfloor} \binom{n}{s} \left( \frac{k}{n} \right)^{2s} \sum_{\substack{\beta_1 + \ldots + \beta_s = pd/2\\ \beta_1 \neq 0, \ldots, \beta_s \neq 0}} \binom{pd}{2 \beta_1, \ldots, 2 \beta_s}
\end{align*}
\end{claim}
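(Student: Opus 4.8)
The plan is to treat \cref{claim:counting} as the purely combinatorial identity that it is, and to prove it by unwinding both sides into enumerations of the same finite family of ``words'' over $[n]$. Fix $d$ with $d\le 2n/p$. The starting observation is that a Hermite multi-index $\alpha=(\alpha_1,\dots,\alpha_{n^p})$ with $\Abs{\alpha}=d$ is the same data as a multiset of $d$ tensor-entries of $\pmb Y$, i.e.\ $d$ not-necessarily-distinct elements $T_1,\dots,T_d$ of $[n]^p$ selected via $\phi$ (with $\alpha_i$ equal to the number of the $T$'s equal to $\phi(i)$), and that for such an $\alpha$ the profile $c(\alpha)=(c_1,\dots,c_n)$ records, for each letter $j\in[n]$, the total number of times $j$ occurs among the $pd$ coordinates of $T_1,\dots,T_d$, so $\sum_j c_j=pd$ and $s(\alpha)=\Abs{\{j:c_j>0\}}$. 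Two elementary facts will do the work: (i) the multinomial identity $\prod_{i=1}^{n^p}\frac1{\alpha_i!}=\frac1{d!}\binom{d}{\alpha_1,\dots,\alpha_{n^p}}$, where $\binom{d}{\alpha_1,\dots,\alpha_{n^p}}$ is exactly the number of \emph{ordered} tuples $(T_1,\dots,T_d)\in([n]^p)^d$ realizing the multiset $\alpha$; and (ii) the summand $\mathbbm{1}_{even(c(\alpha))}(k/n)^{2s(\alpha)}$ depends on $\alpha$ only through $c(\alpha)$.

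First I would reorganize the left-hand sum by profile: using (ii), sum first over admissible profiles $c$ and then over $\{\alpha:c(\alpha)=c\}$. The factor $\mathbbm{1}_{even(c(\alpha))}$ deletes every profile that is not entrywise even (and makes both sides vanish when $pd$ is odd), so on the support of $c$ one writes $c_j=2\beta_j$ with $\beta_j\ge 1$; a profile is then specified by its support size $s$, a choice of $s$ support letters among $[n]$ (giving the factor $\binom ns$; here $d\le 2n/p$ guarantees $s\le pd/2\le n$), and a composition $\beta_1+\dots+\beta_s=pd/2$ with all $\beta_i\ne 0$. This reproduces exactly the outer sums $\sum_{s=1}^{\lfloor pd/2\rfloor}\binom ns(k/n)^{2s}\sum_{\beta_1+\dots+\beta_s=pd/2,\ \beta_i\ne 0}(\cdots)$ of the claim. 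Then, for a profile $c=(2\beta_1,\dots,2\beta_s,0,\dots,0)$ (after relabelling), fact (i) turns the remaining inner sum into
\[
\sum_{\alpha\,:\,c(\alpha)=c}\prod_{i=1}^{n^p}\frac1{\alpha_i!}=\frac1{d!}\sum_{\alpha\,:\,c(\alpha)=c}\binom{d}{\alpha_1,\dots,\alpha_{n^p}}\,,
\]
and the remaining sum counts ordered tuples $(T_1,\dots,T_d)\in([n]^p)^d$ with aggregate coordinate profile $c$; reading such a tuple as a single word of length $pd$ over $[n]$ (the grouping into $d$ blocks of length $p$ is irrelevant to the profile) this count is the multinomial coefficient $\binom{pd}{2\beta_1,\dots,2\beta_s}$. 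Assembling these pieces gives the last line of the claim, and the middle line is this same reorganization recorded before collapsing $\binom{d}{\alpha_1,\dots,\alpha_{n^p}}^{-1}\prod_i\frac1{\alpha_i!}=\frac1{d!}$.

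The step I expect to require the most care is the identity $\sum_{\alpha:c(\alpha)=c}\binom{d}{\alpha_1,\dots,\alpha_{n^p}}=\binom{pd}{c_1,\dots,c_n}$ used at the end: one must argue cleanly that the two natural enumerations of the tuples in $([n]^p)^d$ with fixed coordinate profile $c$ --- first grouping by the induced block-multiset $\alpha$ and then summing over orderings, versus directly viewing the tuple as a length-$pd$ word --- enumerate the same set with no double counting and no omissions (every such word, cut into $d$ blocks of $p$, yields a multiset $\alpha$ with $c(\alpha)=c$, and conversely). Everything else --- the multinomial rewrite, the bookkeeping of the ranges of $s$ and $\beta$, and the parity edge cases --- is routine.
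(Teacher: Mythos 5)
Your proposal is correct and follows essentially the same route as the paper's proof: regroup the sum over $\alpha$ by the coordinate profile $c(\alpha)$ (equivalently, the multiset of $pd$ letters from $[n]$), and identify $\sum_{\alpha:c(\alpha)=c}\binom{d}{\alpha_1,\dots,\alpha_{n^p}}=\binom{pd}{2\beta_1,\dots,2\beta_s}$ via the correspondence between ordered $d$-tuples of tensor entries and length-$pd$ words, which is exactly the paper's two-viewpoint double-counting argument. Your write-up just makes the key enumeration identity slightly more explicit than the paper does, but it is the same proof.
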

\begin{proof}
The second equality is by definition of multinomial coefficients.
To prove the first equality, we consider two equivalent ways of viewing the summation.

From the viewpoint of choosing entries of $\pmb Y$, one chooses $d$ (possibly repeated) entries of $\pmb Y$ and computes $\mathbbm{1}_{even(c(\alpha))} \left( \frac{k}{n} \right)^{2 s(\alpha)} \left( \Pi_{i=1}^{n^p} \frac{1}{(\alpha_i)!} \right)$ directly on the corresponding $\alpha$.

From the viewpoint of choosing entries from $\pmb x$, first observe that each $\alpha$ considered actually involves $pd$ (possibly repeated) entries of $[n]$ and can be mapped to a multi-set of $pd$ numbers\footnote{E.g.\ We can identify the polynomial $\pmb Y_{11}\pmb Y_{12}\pmb Y_{21}\pmb Y_{11}$ with the multi-set of its indices $\{1,1,1,1,1,1,2,2\}$.}, where multiple $\alpha$'s could map to the same multi-set of $pd$ numbers\footnote{E.g.\ $\pmb Y_{11}\pmb Y_{12}\pmb Y_{21}\pmb Y_{11}$, $\pmb Y_{11}\pmb Y_{12}\pmb Y_{12}\pmb Y_{11}$, and $\pmb Y_{11}\pmb Y_{11}\pmb Y_{11}\pmb Y_{22}$ all map to $\{1,1,1,1,1,1,2,2\}$.}.
Thus, one can first pick a multi-set and then go over the different $\alpha$'s corresponding to all possible permutations\footnote{E.g.\ $(1,1,1,2,2,1,1,1) \equiv \pmb Y_{11}\pmb Y_{12}\pmb Y_{21}\pmb Y_{11}$ and $(1,1,1,2,1,2,1,1) \equiv \pmb Y_{11}\pmb Y_{12}\pmb Y_{12}\pmb Y_{11}$ are counted differently.}.
Under constraint of $\mathbbm{1}_{even(c(\alpha))}$, a multi-set is \emph{valid} (contributes a non-zero term to the summation) only when the multiplicity of each number is even.
So, one can view the summation as a process of first choosing $s$ distinct coordinates from $[n]$ such that each coordinate is used a non-zero even number of times when forming a multi-set of $pd$ numbers.
Naturally, we have $1 \leq s \leq \left\lfloor pd/2 \right\rfloor \leq n$ and $s(\alpha) = s$.
For a fixed choice of $s$ coordinates, $\sum_{\substack{\beta_1 + \ldots + \beta_s = pd/2\\ \beta_1 \neq 0, \ldots, \beta_s \neq 0}} \binom{pd}{2 \beta_1, \ldots, 2 \beta_s}$ sums over all valid multi-sets involving $s$ entries of $[n]$.
However, since every permutation of a fixed multi-set corresponds to a possibly repeated $\alpha$'s, we divide by $\binom{d}{\alpha_1, \ldots, \alpha_{n^p}}$\footnote{E.g.\ Suppose $\beta_1 = 3$, $\beta_2 = 1$ and $pd=8$ in the combinatorial summation. $\binom{pd}{2 \beta_1, \ldots, 2 \beta_s}$ will include permutations such as $(1,1,1,2,1,2,1,1)$ and $(1,1,1,1,1,2,1,2)$. However, both of $(1,1,1,2,1,2,1,1)$ and $(1,1,1,1,1,2,1,2)$ actually refer to the same $\alpha$ term since $\pmb Y_{11}\pmb Y_{12}\pmb Y_{12}\pmb Y_{11} = \pmb Y_{11}\pmb Y_{11}\pmb Y_{12}\pmb Y_{12}$.}.
Finally, each such $\alpha$ is then scaled by $\left( \Pi_{i=1}^{n^p} \frac{1}{(\alpha_i)!} \right)$.
\end{proof}

\subsubsection*{Example illustrating \cref{claim:counting}}

We illustrate the counting process with an example where $p=2$, $n=2$, and $d=3$.
Denote $\alpha, \beta, \gamma \in [n]^2$ as three distinct coordinates of $\pmb Y$.
By picking entries $\{\pmb Y_\alpha, \pmb Y_\beta, \pmb Y_\gamma\}$, the corresponding Hermite polynomial $h_1(\pmb Y_\alpha) h_1(\pmb Y_\beta) h_1(\pmb Y_\gamma) = \pmb Y_\alpha \pmb Y_\beta \pmb Y_\gamma$ is multi-linear.
With repeated entries such as $\{\pmb Y_\alpha, \pmb Y_\alpha, \pmb Y_\beta\}$ and $\{\pmb Y_\alpha, \pmb Y_\alpha, \pmb Y_\alpha\}$, the corresponding Hermite polynomials are $h_2(\pmb Y_\alpha) h_1(\pmb Y_\beta)$ and $h_3(\pmb Y_\alpha)$ respectively.

By the constraint of $\mathbbm{1}_{even(c(\alpha))}$, it suffices to only consider choices such that there are an even number of 1's and 2's.
Ignoring permutations, there are 10 such selections.
Including permutations, there are $\binom{3}{3} \cdot 2 + \binom{3}{2,1} \cdot 6 + \binom{3}{1,1,1} \cdot 2 = 32$ such selections.
Note that only the last 2 are multi-linear.
\begin{description}
    \item [1 distinct:]
        $\{\pmb Y_{11}, \pmb Y_{11}, \pmb Y_{11}\}$,
        $\{\pmb Y_{22}, \pmb Y_{22}, \pmb Y_{22}\}$ 
    \item [2 distinct:]
        $\{\pmb Y_{11}, \pmb Y_{11}, \pmb Y_{22}\}$,
        $\{\pmb Y_{11}, \pmb Y_{12}, \pmb Y_{12}\}$,
        $\{\pmb Y_{11}, \pmb Y_{21}, \pmb Y_{21}\}$,
        $\{\pmb Y_{11}, \pmb Y_{22}, \pmb Y_{22}\}$,
        $\{\pmb Y_{12}, \pmb Y_{12}, \pmb Y_{22}\}$,
        $\{\pmb Y_{21}, \pmb Y_{21}, \pmb Y_{22}\}$
    \item [3 distinct:]
        $\{\pmb Y_{11}, \pmb Y_{12}, \pmb Y_{21}\}$,
        $\{\pmb Y_{12}, \pmb Y_{21}, \pmb Y_{22}\}$
\end{description}

We first compute the summation on the left hand side of \cref{claim:counting}.
An $\alpha$ with 1 distinct entry such as $\{\pmb Y_{11}, \pmb Y_{11}, \pmb Y_{11}\}$ contributes $\left( \frac{k}{n} \right)^2 \frac{1}{3!}$ to the summation.
With 2 distinct entries, such as $\{\pmb Y_{11}, \pmb Y_{12}, \pmb Y_{12}\}$, we get $\left( \frac{k}{n} \right)^4 \frac{1}{1!2!}$.
Finally, each multi-linear polynomial contributes $\left( \frac{k}{n} \right)^4 \frac{1}{1!1!1!}$.
So,
\[
\sum_{\Abs{\alpha} = d} \mathbbm{1}_{even(c(\alpha))} \left( \frac{k}{n} \right)^{2 s(\alpha)} \left( \Pi_{i=1}^{n^p} \frac{1}{(\alpha_i)!} \right)
= \frac{2}{3!} \left( \frac{k}{n} \right)^2 + \frac{6}{1!2!} \left( \frac{k}{n} \right)^4 + \frac{2}{1!1!1!} \left( \frac{k}{n} \right)^4
= \frac{1}{3} \left( \frac{k}{n} \right)^2 + 5 \left( \frac{k}{n} \right)^4
\]

On the right hand side of \cref{claim:counting}, we count by viewing the selection of 3 entries of $Y$ as filling up $pd=6$ slots with values from $\{1,2\}$:
\begin{align*}
& \frac{1}{d!} \sum_{s=1}^{\left\lfloor pd/2 \right\rfloor} \binom{n}{s} \left( \frac{k}{n} \right)^{2s} \sum_{\substack{\beta_1 + \ldots + \beta_s = pd/2\\ \beta_1 \neq 0, \ldots, \beta_s \neq 0}} \binom{pd}{2 \beta_1, \ldots, 2 \beta_s}\\
= & \frac{1}{3!} \sum_{s=1}^2 \binom{2}{s} \left( \frac{k}{n} \right)^{2s} \sum_{\substack{\beta_1 + \ldots + \beta_s = 3\\ \beta_1 \neq 0, \ldots, \beta_s \neq 0}} \binom{6}{2 \beta_1, \ldots, 2 \beta_s}\\
= & \frac{1}{6} \binom{2}{1} \left( \frac{k}{n} \right)^{2} \binom{6}{6} + \frac{1}{6} \binom{2}{2} \left( \frac{k}{n} \right)^{4} \left[ \binom{6}{2,4} + \binom{6}{4,2} \right]\\
= &\frac{1}{3} \left( \frac{k}{n} \right)^{2} + 5 \left( \frac{k}{n} \right)^{4}
\end{align*}

\begin{claim}
\label{claim:xax}
For $x > 0$ and $0 < a < 1$, we have $xa^x \leq \min \left\{ x, \frac{1}{e \ln (1/a)} \right\}$.
\end{claim}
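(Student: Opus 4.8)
The plan is to verify the two quantities in the minimum separately. The bound $xa^x \le x$ is immediate: since $0 < a < 1$ and $x > 0$ we have $a^x \in (0,1)$, hence $xa^x < x$.

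For the second bound, I would set $b = \ln(1/a)$, which is strictly positive because $0 < a < 1$, and rewrite $xa^x = x e^{-bx} = \tfrac{1}{b}(bx)e^{-bx}$. It then suffices to show the one-variable inequality $ye^{-y} \le 1/e$ for all $y > 0$, applied with $y = bx$, since this gives $xa^x = \tfrac{1}{b}(bx)e^{-bx} \le \tfrac{1}{b}\cdot\tfrac{1}{e} = \tfrac{1}{e\ln(1/a)}$. The inequality $ye^{-y}\le 1/e$ follows from the elementary fact $e^{z}\ge 1+z$: taking $z = y-1$ gives $e^{y-1}\ge y$, i.e. $e^{y}\ge ey$, and dividing by $e\cdot e^{y}$ yields $ye^{-y}\le 1/e$. (Alternatively, one can differentiate $f(x)=xe^{-bx}$ on $(0,\infty)$, note the unique critical point $x=1/b$ is a global maximum since $f'(x)=e^{-bx}(1-bx)$ changes sign from $+$ to $-$ there, and evaluate $f(1/b)=1/(be)$.)

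There is no real obstacle here: the only points requiring the slightest care are recording that $b=\ln(1/a)>0$ under the hypothesis $a<1$, and that the maximization of $ye^{-y}$ is taken over the whole range of admissible $y$ so that the resulting bound is uniform in $x$. Combining the two displayed bounds gives $xa^x \le \min\{x,\ \tfrac{1}{e\ln(1/a)}\}$, as claimed.
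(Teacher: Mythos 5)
Your proposal is correct and follows essentially the same route as the paper: both handle the bound $xa^x \leq x$ trivially and obtain the second bound from the elementary fact that $ye^{-y} \leq 1/e$ (the paper states this in the equivalent form $(1/a)^x \geq e\ln\bigl((1/a)^x\bigr)$). The only difference is presentational: you prove the elementary inequality explicitly via $e^{z} \geq 1+z$, while the paper asserts it directly.
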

\begin{proof}
When $0 < a < 1$, we have $xa^x \leq x$ trivially.
For $x > 0$ and $0 < a < 1$, we see that $(1/a)^x > 0$.
So,
\[
\left( \frac{1}{a} \right)^x \geq e \ln \left( \frac{1}{a} \right)^x = ex \ln \left( \frac{1}{a} \right)
\iff
xa^x \leq \frac{1}{e \ln \left( \frac{1}{a} \right)}
\]
Thus, $xa^x \leq \min \left\{ x, \frac{1}{e \ln (1/a)} \right\}$.
\end{proof}

\hermiteexpectation*
\begin{proof}[Proof of \cref{lem:hermite-expectation}]
For fixed multi-index $\alpha = (\alpha_1, \ldots, \alpha_{n^p})$ such that $\Abs{\alpha} = d$, we now compute $\E_{H_1} h_{\alpha}(\pmb Y)$.
\begin{align*}
& \E_{H_1} h_{\alpha}(\pmb Y)\\
= & \E_{H_1} \Pi_{i=1}^{n^p} h_{\alpha_i} (\pmb Y_{\phi(i)}) && \text{Product of Hermite polys}\\
= & \E_{\pmb x} \E_{\pmb W_{\phi(i)} \sim N(0,1)} \Pi_{i=1}^{n^p} h_{\alpha_i} (\pmb Y_{\phi(i)}) && \text{Definition of $H_1$}\\
= & \E_{\pmb x} \Pi_{i=1}^{n^p} \E_{\pmb W_{\phi(i)} \sim N(0,1)} h_{\alpha_i} (\pmb Y_{\phi(i)}) && \text{Independence of $\pmb W$ entries}\\
= & \E_{\pmb x} \Pi_{i=1}^{n^p} \E_{\pmb W_{\phi(i)} \sim N(0,1)} h_{\alpha_i} (\pmb W_{\phi(i)} + \lambda \Pi_{j \in \phi(i)} x_j) && \text{Definition of $\pmb Y_{\alpha_i}$}\\
= & \E_{\pmb x} \Pi_{i=1}^{n^p} \E_{\pmb z \sim N(\lambda \Pi_{j \in \phi(i)} x_j,1)} h_{\alpha_i} (z) && \text{Translation property of Hermite}\\
= & \E_{\pmb x} \Pi_{i=1}^{n^p} \sqrt{\frac{1}{(\alpha_i)!}} ( \lambda \Pi_{j \in \phi(i)} x_j)^{\alpha_i} && \text{Expectation of deg $\alpha_i$ Hermite on $\pmb z \sim (\mu,1)$}\\
= & \left( \Pi_{i=1}^{n^p} \sqrt{\frac{1}{(\alpha_i)!}} \right) \lambda^{d} \E_{\pmb x} \Pi_{i=1}^{n^p} \Pi_{j \in \phi(i)} x_j^{\alpha_i} && \text{$\sum_i^{n^p} \alpha_i = \Abs{\alpha} = d$}\\
= & \left( \Pi_{i=1}^{n^p} \sqrt{\frac{1}{(\alpha_i)!}} \right) \lambda^{d} \E_{\pmb x} \Pi_{j=1}^{n} x_j^{c_j} && \text{Definition of $c(\alpha) = (c_1, \ldots, c_n)$}\\
= & \left( \Pi_{i=1}^{n^p} \sqrt{\frac{1}{(\alpha_i)!}} \right) \lambda^{d} \mathbbm{1}_{even(c(\alpha))} \left( \frac{k}{n} \right)^{s(\alpha)} k^{-\frac{pd}{2}}
\end{align*}

The last equality is because $\E_{\pmb x} \Pi_{j=1}^{n} x_j^{c_j}  = 0$ if there is an odd $c_j$.
So, for $\Abs{\alpha} = d$,
\[
\left( \E_{H_1} h_{\alpha}(\pmb Y) \right)^2
= \lambda^{2d} k^{-pd} \mathbbm{1}_{even(c(\alpha))} \left( \frac{k}{n} \right)^{2 s(\alpha)} \left( \Pi_{i=1}^{n^p} \frac{1}{(\alpha_i)!} \right)
\]
\end{proof}

\hermiteexpectationupperbound*
\begin{proof}[Proof of \cref{lem:hermite-expectation-upperbound}]
To upper bound $\sum_{\Abs{\alpha} \leq D} \left( \E_{H_1} [f_{\alpha}(\pmb Y)] \right)^2$, we use an equality that relates the counting of $\pmb Y$ entries with coordinates of the signal $\pmb x$.
For a fixed $d$, it can be shown (see \cref{claim:counting}) that
\[
\sum_{\Abs{\alpha} = d} \mathbbm{1}_{even(c(\alpha))} \left( \frac{k}{n} \right)^{2 s(\alpha)} \left( \Pi_{i=1}^{n^p} \frac{1}{(\alpha_i)!} \right)
= \frac{1}{d!} \sum_{s=1}^{\left\lfloor pd/2 \right\rfloor} \binom{n}{s} \left( \frac{k}{n} \right)^{2s} \sum_{\substack{\beta_1 + \ldots + \beta_s = pd/2\\ \beta_1 \neq 0, \ldots, \beta_s \neq 0}} \binom{pd}{2 \beta_1, \ldots, 2 \beta_s}
\]
This allows us to perform combinatoric arguments on the coordinates of the signal $\pmb x$ instead of over the tensor coordinates of $\pmb Y$.

\begin{align*}
& \sum_{\Abs{\alpha} \leq D} \left( \E_{H_1} [f_{\alpha}(\pmb Y)] \right)^2\\
= & \sum_{d=1}^D \sum_{\Abs{\alpha} = d} \lambda^{2d} k^{-pd} \mathbbm{1}_{even(c(\alpha))} \left( \frac{k}{n} \right)^{2 s(\alpha)} \left( \Pi_{i=1}^{n^p} \frac{1}{(\alpha_i)!} \right) && \text{From above}\\
= & \sum_{d=1}^D \lambda^{2d} k^{-pd} \frac{1}{d!} \sum_{s=1}^{\left\lfloor pd/2 \right\rfloor} \binom{n}{s} \left( \frac{k}{n} \right)^{2s} \sum_{\substack{\beta_1 + \ldots + \beta_s = pd/2\\ \beta_1 \neq 0, \ldots, \beta_s \neq 0}} \binom{pd}{2 \beta_1, \ldots, 2 \beta_s} && \text{\cref{claim:counting}}\\
\leq & \sum_{d=1}^D \frac{\lambda^{2d} k^{-pd}}{d!} \sum_{s=1}^{pd/2} \binom{n}{s} \left( \frac{k}{n} \right)^{2s} \sum_{\substack{\beta_1 + \ldots + \beta_s = pd/2\\ \beta_1 \neq 0, \ldots, \beta_s \neq 0}} \binom{pd}{2 \beta_1, \ldots, 2 \beta_s} && \text{Drop floor}\\
\leq & \sum_{d=1}^D \frac{\lambda^{2d} k^{-pd}}{d!} \sum_{s=1}^{pd/2} \binom{n}{s} \left( \frac{k}{n} \right)^{2s} \sum_{\beta_1 + \ldots + \beta_s = pd/2} \binom{pd}{2 \beta_1, \ldots, 2 \beta_s} && \text{Drop $\beta_i \neq 0$}\\
\leq & \sum_{d=1}^D \frac{\lambda^{2d} k^{-pd}}{d!} \sum_{s=1}^{pd/2} \binom{n}{s} \left( \frac{k}{n} \right)^{2s} \sum_{\gamma_1 + \ldots + \gamma_s = pd} \binom{pd}{\gamma_1, \ldots, \gamma_s} && \text{Drop ``evenness constraint''}\\
= & \sum_{d=1}^D \frac{\lambda^{2d} k^{-pd}}{d!} \sum_{s=1}^{pd/2} \binom{n}{s} \left( \frac{k}{n} \right)^{2s} s^{pd} && \text{Multinomial theorem}\\
\leq & \sum_{d=1}^D \frac{\lambda^{2d} k^{-pd}}{d!} \sum_{s=1}^{pd/2} \left( \frac{ek^2}{sn} \right)^s s^{pd} && \text{$\binom{n}{s} \leq \left( \frac{en}{s} \right)^s$}\\
= & \sum_{d=1}^D \frac{\lambda^{2d}}{d!} \sum_{s=1}^{pd/2} \left( \frac{ek^2}{sn} \right)^s \left( \frac{s}{k} \right)^{pd} 
\end{align*}
\end{proof}

\sumupperbound*
\begin{proof}[Proof of \cref{lem:sum-upperbound}]
We will first push all terms into $[ \cdots ]^{pd}$ and then upper bound the terms inside\footnote{This works because the terms inside are greater than 0 and $pd \geq 1$.}.
We start by recalling three useful inequalities:
\begin{itemize}
	\item For $x > 0$, we have $x^{\frac{1}{x}} \leq 2$.
	\item For $x > 0$ and $0 < a < 1$, we have $xa^x \leq \min \left\{ x, \frac{1}{e \ln (1/a)} \right\}$.
	\item For $x \geq \frac{1}{e}$, we have $\min \left\{ \frac{1}{2}, \frac{1}{e \ln x} \right\} \leq \frac{1}{1 + \Abs{\ln(x)}}$.
\end{itemize}
Using the first inequality, we get
\[
\left( \frac{ek^2}{sn} \right)^s \left( \frac{s}{k} \right)^{pd}
= \left[ \left( \frac{ek^2}{sn} \right)^{\frac{s}{pd}} \frac{s}{k} \right]^{pd}
= \left[ \left( \frac{ek^2}{npd} \right)^{\frac{s}{pd}} \left( \frac{pd}{s} \right)^{\frac{s}{pd}} \frac{s}{k} \right]^{pd}
\leq \left[ 2 \left( \frac{ek^2}{npd} \right)^{\frac{s}{pd}} \frac{s}{k} \right]^{pd}
\]
When $ek^2 \geq npd$, we use $s \leq pd/2$ to get
\[
\left( \frac{ek^2}{npd} \right)^{\frac{s}{pd}} \frac{s}{k}
\leq \left( \frac{ek^2}{npd} \right)^{\frac{pd/2}{pd}} \frac{pd/2}{k}
= \sqrt{\frac{epd}{4n}}
\leq \sqrt{\frac{pd}{n}}
\]
When $ek^2 < npd$, we use the second and third inequalities\footnote{Observe that $0 < \frac{s}{pd} \leq \frac{1}{2}$, $0 < \frac{ek^2}{npd} < 1$, and $\frac{1}{e} \leq 1 < \frac{npd}{ek^2}$.} to get
\[
\left( \frac{ek^2}{npd} \right)^{\frac{s}{pd}} \frac{s}{k}
= \frac{s}{pd} \left( \frac{ek^2}{npd} \right)^{\frac{s}{pd}} \frac{pd}{k}
\leq \min \left\{ \frac{1}{2}, \frac{1}{e \ln \left( \frac{npd}{ek^2} \right)} \right\} \cdot \frac{pd}{k}
\leq \frac{pd}{k \left( 1 + \Abs{\ln \left( \frac{npd}{ek^2} \right)} \right)}
\]
Putting together, we see that
\[
\left( \frac{ek^2}{sn} \right)^s \left( \frac{s}{k} \right)^{pd}
\leq \left[ 2 \max \left\{ \sqrt{\frac{pd}{n}}, \frac{pd}{k \left( 1 + \Abs{\ln \left( \frac{npd}{ek^2} \right)} \right) } \right\} \right]^{pd}
= \left[ \frac{2pd}{\min \left\{ \sqrt{npd},\; k \left( 1 + \Abs{\ln \left( \frac{npd}{ek^2} \right)} \right) \right\}} \right]^{pd}
\]
\end{proof}

\subsection{Proofs for information-theoretic lower bound}
\label{sec:info-lower-bound-proofs}

\rademacherepsnetlowerbound*
\begin{proof}[Proof of \cref{lem:rademacher-eps-net-lower-bound}]
For $x, x' \in U_k$, let us denote
$\alpha = \Abs{ \left\{ i \in [n] : i \in \cI_{x} \cap \cI_{x'} \text{ and } x_i = x'_i \right\} }$ be the intersecting indices with agreeing signs,
$\beta = \Abs{ \left\{ i \in [n] : i \in \cI_{x} \cap \cI_{x'} \text{ and } x_i = -x'_i \right\} }$ be the intersecting indices with disagreeing signs, and 
$\gamma = \Abs{ \left\{ i \in [n] : i \not\in \cI_{x} \cap \cI_{x'} \right\} }$ be the non-intersecting indices.
By definition,
$\alpha \geq 0$, $\beta \geq 0$,
$\gamma \geq 0$, $\alpha + \beta = \Abs{ \cI_{x} \cap \cI_{x'} }$,
$\alpha + \beta + \gamma = 2k - \Abs{ \cI_{x} \cap \cI_{x'} } = \Abs{\cI_{x}} + \Abs{ \cI_{x'} } - \Abs{ \cI_{x} \cap \cI_{x'} }$, and
$\gamma = 2(k - \Abs{ \cI_{x} \cap \cI_{x'} })$.
Then, for $x, x' \in U_k$,
\[
\Norm{x - x'}_2
= \sqrt{\beta \left( \frac{2}{\sqrt{k}} \right)^2 + \gamma \left( \frac{1}{\sqrt{k}} \right)^2}
= \sqrt{\frac{4 \beta + \gamma}{k}}
\geq \sqrt{\frac{\gamma}{k}}
= \sqrt{2 - \frac{2 \Abs{ \cI_{x} \cap \cI_{x'} }}{k}}
\]
So, $\Norm{x-x'}_2 \leq \eps$ implies that $\Abs{ \cI_{x} \cap \cI_{x'} } \geq k (1-\frac{\eps^2}{2})$.
This means that for any \emph{fixed} $x \in U_k$, there are at most\footnote{First pick $i$ out of $k$ coordinates of $x$ to be different, then pick the $i$ different coordinates amongst the $n-k$ coordinates outside of $\cI_{x}$. The summation is from 0 to $\left\lfloor \eps^2 k / 2 \right\rfloor$ because we need to have $\Abs{ \cI_{x} \cap \cI_{x'} } \geq k (1-\frac{\eps^2}{2})$.} $\sum_{i=0}^{\left\lfloor \eps^2 k / 2 \right\rfloor} \binom{k}{i} \binom{n-k}{i}$ vectors in $U_k$ (including $x$ itself) that are of distance at most $\eps$ from $x$.
By definition of covering number, we know that
\[
N(U_k, \Norm{\cdot}_2, \eps) \cdot \sum_{i=0}^{\left\lfloor \eps^2 k / 2 \right\rfloor} \binom{k}{i} \binom{n-k}{i}
\geq \Abs{U_k}
= 2^k \binom{n}{k}
\]
Thus, to argue that $N(U_k, \Norm{\cdot}_2, \eps) \geq \left( \frac{n-k}{k} \right)^{k \left( 1-\frac{\eps^2}{2} \right)}$, it suffices to show
\[
2^k \binom{n}{k}
\geq \left( \frac{n-k}{k} \right)^{k \left( 1-\frac{\eps^2}{2} \right)} \cdot \sum_{i=0}^{\left\lfloor \eps^2 k / 2 \right\rfloor} \binom{k}{i} \binom{n-k}{i}
\]
Observe that since $n \geq 2k$, the term $\binom{n-k}{i}$ increases with $i$:
\begin{align*}
    \sum_{i=0}^{\left\lfloor \eps^2 k / 2 \right\rfloor} \binom{k}{i} \binom{n-k}{i}
    & \leq \binom{n-k}{\left\lfloor \eps^2 k / 2 \right\rfloor} \sum_{i=0}^{\left\lfloor \eps^2 k / 2 \right\rfloor} \binom{k}{i} && \text{$(\star)$}\\
    & \leq \binom{n-k}{\left\lfloor \eps^2 k / 2 \right\rfloor} \sum_{i=0}^k \binom{k}{i} && \text{$\left\lfloor \eps^2 k / 2 \right\rfloor \leq k$}\\
    & = \binom{n-k}{\left\lfloor \eps^2 k / 2 \right\rfloor} 2^k && \text{Binomial theorem}
\end{align*}
where $(\star)$ is because $n \geq 2k$ and $\eps \in (0,1]$ implies that $n - k \geq \eps^2 k$ so $\binom{n-k}{i} \leq \binom{n-k}{\left\lfloor \eps^2 k / 2 \right\rfloor}$ for $0 \leq i \leq \left\lfloor \eps^2 k / 2 \right\rfloor$.
Thus, it suffices to show
\[
\binom{n}{k}
\geq \left( \frac{n-k}{k} \right)^{k \left( 1-\frac{\eps^2}{2} \right)} \cdot \binom{n-k}{\left\lfloor \eps^2 k / 2 \right\rfloor}
\]
We will now show that $\frac{\binom{n}{k}}{\binom{n-k}{\left\lfloor \eps^2 k / 2 \right\rfloor}} \geq \left( \frac{n-k}{k} \right)^{k \left( 1-\frac{\eps^2}{2} \right)}$:
\begin{align*}
    \frac{\binom{n}{k}}{\binom{n-k}{\left\lfloor \eps^2 k / 2 \right\rfloor}}
    & = \frac{n!}{k!(n-k)!} \frac{(\left\lfloor \eps^2 k / 2 \right\rfloor)! (n-k-\left\lfloor \eps^2 k / 2 \right\rfloor)!}{(n-k)!}\\
    & = \frac{n!}{(n-k)!} \frac{(\left\lfloor \eps^2 k / 2 \right\rfloor)!}{k!} \frac{(n-k-\left\lfloor \eps^2 k / 2 \right\rfloor)!}{(n-k)!}\\
    & = \left[ (n) \cdot \ldots \cdot (n-k+1) \right] \cdot \left[ \frac{1}{(k) \cdot \ldots \cdot (k- \left\lfloor \eps^2 k / 2 \right\rfloor + 1)} \right] \cdot\\
    & \quad \left[ \frac{1}{(n-k) \cdot \ldots \cdot (n - k - \left\lfloor \eps^2 k / 2 \right\rfloor + 1)} \right]\\
    & \geq (n-k)^k \left( \frac{1}{k} \right)^{\left\lfloor \eps^2 k / 2 \right\rfloor} \left( \frac{1}{n-k} \right)^{\left\lfloor \eps^2 k / 2 \right\rfloor}\\
    & \geq (n-k)^k \left( \frac{1}{k} \right)^{\frac{\eps^2 k}{2}} \left( \frac{1}{n-k} \right)^{\frac{\eps^2 k}{2}}\\
    & \geq (n-k)^k \left( \frac{1}{k} \right)^{k \left( 1 - \frac{\eps^2}{2} \right)} \left( \frac{1}{n-k} \right)^{\frac{\eps^2 k}{2}}\\
    & = \left( \frac{n-k}{k} \right)^{k \left( 1-\frac{\eps^2}{2} \right)}
\end{align*}
where the last inequality is because $\eps \leq 1$ implies that $1 - \frac{\eps^2}{2} \geq \frac{\eps^2}{2}$.
\end{proof}

\KLupperbound*
\begin{proof}[Proof of \cref{lem:KL-upperbound}]
Define $vec(T)$ as vectorization of a tensor from $\otimes^p\R^n$ to $\R^{n^p}$.
Then, for $u \in \cS^{n-1}_k$, we see that $\Norm{ vec(\lambda u^{\otimes p}) }_2^2 = \lambda^2$ and the distribution $\bbP_{\pmb Y \sim \cY \mid u}$ follows the distribution of a Gaussian vector $\pmb g \sim N( vec(\lambda u^{\otimes p}), I_{n^p} )$.

For two Gaussian vectors $\pmb g \sim N(\mu_0, I_{n^p})$ and $\pmb h \sim N(\mu_1, I_{n^p})$, we know that $D_{KL}(\pmb g \Vert \pmb h) = \frac{1}{2} (\mu_1 - \mu_0)^\top (\mu_1 - \mu_0) = \frac{1}{2} \Norm{ \mu_1 - \mu_0 }_2^2 \leq \frac{1}{2} ( \Norm{ \mu_1 }_2 + \Norm{ \mu_0 }_2 )^2$ by triangle inequality\footnote{We get an equality if $\mu_1 = -\mu_0$.}.

Thus, $D_{KL} \left( \bbP_{\pmb Y \sim \cY \mid u} \Bigm\Vert \bbP_{\pmb Y \sim \cY \mid v} \right) \leq \frac{1}{2} ( \lambda^2 + \lambda^2 )^2 = 2\lambda^2$.
\end{proof}

\end{document}